\theoremstyle{plain}
\newtheorem{theorem}{Theorem}[section]
\newtheorem{proposition}[theorem]{Proposition}
\newtheorem{lemma}[theorem]{Lemma}
\theoremstyle{definition}
\theoremstyle{remark}
\newtheorem{remark}[theorem]{Remark}
\DeclarePairedDelimiterX{\inp}[2]{\langle}{\rangle}{#1, #2}
\newcommand*\bigcdot{\mathpalette\bigcdot@{.5}}
\newcommand*\bigcdot@[2]{\mathbin{\vcenter{\hbox{\scalebox{#2}{$\m@th#1\bullet$}}}}}
\newcommand{\muspace}{\mspace{1mu}}
\DeclareRobustCommand{\scond}{\mathchoice{\muspace\vert\muspace}{\vert}{\vert}{\vert}}
\DeclareRobustCommand{\discint}{\mathchoice{\mspace{-1.5mu}:\mspace{-1.5mu}}{\mspace{-1.5mu}:\mspace{-1.5mu}}{:}{:}}
\newcommand{\suchthat}{\mathchoice{\colon}{\colon}{:\mspace{1mu}}{:}}
\def\tr{\mathop{\rm tr}\nolimits}%
\def\diag{\mathop{\rm diag}\nolimits}%
\newcommand{\Fc}{\mathcal{F}}
\newcommand{\Gc}{\mathcal{G}}
\newcommand{\Hc}{\mathcal{H}}
\newcommand{\Ic}{\mathcal{I}}
\newcommand{\Lc}{\mathcal{L}}
\newcommand{\Nc}{\mathcal{N}}
\newcommand{\Tc}{\mathcal{T}}
\newcommand{\Xc}{\mathcal{X}}
\newcommand{\fv}{{\bf f}}
\newcommand{\gv}{{\bf g}}
\newcommand{\xv}{{\bf x}}
\newcommand{\yv}{{\bf y}}
\newcommand{\uv}{{\bf u}}
\newcommand{\vv}{{\bf v}}
\newcommand{\ph}{{\hat{p}}}
\newcommand{\eb}{{\mathbf e}}
\newcommand{\xb}{{\mathbf x}}
\def\b{\beta}
\def\d{\delta}
\def\eps{\epsilon}
\def\th{\theta}
\DeclareMathOperator\E{\mathsf{E}}
\newcommand\eg{e.g.,\xspace}
\newcommand\ie{i.e.,\xspace}
\def\textiid{i.i.d.\@\xspace}
\newcommand\iid{\ifmmode\text{ i.i.d. } \else \textiid \fi}
\newcommand{\Real}{\mathbb{R}}
\newcommand{\Natural}{\mathbb{N}}
\newcommand{\ones}{\mathds{1}}
\newcommand{\half}{\frac{1}{2}}%
\def\mathllap{\mathpalette\mathllapinternal}
\def\mathllapinternal#1#2{%
  \llap{$\mathsurround=0pt#1{#2}$}}
\def\clap#1{\hbox to 0pt{\hss#1\hss}}
\def\mathclap{\mathpalette\mathclapinternal}
\def\mathclapinternal#1#2{%
  \clap{$\mathsurround=0pt#1{#2}$}}
\let\oldstackrel\stackrel
\renewcommand{\stackrel}[2]{\oldstackrel{\mathclap{#1}}{#2}}
\DeclarePairedDelimiterX{\infdivx}[2]{(}{)}{%
  #1\;\delimsize\|\;#2%
}
\renewcommand{\hbar}{h\mathllap{\overline{\vphantom{h}\hphantom{\rule{4.6pt}{0pt}}}\mspace{0.77mu}}}
\newcommand{\urltilde}{\kern -.06em\lower -.06em\hbox{~}\kern .02em}
\DeclareMathOperator*{\argmin}{arg\,min}
\DeclarePairedDelimiterX{\norm}[1]{\lVert}{\rVert}{#1}
\DeclarePairedDelimiterX{\abs}[1]{\lvert}{\rvert}{#1}
\newcommand*\diff{\mathop{}\!\mathrm{d}}
\let\oldpartial\partial
\renewcommand*{\partial}{\mathop{}\!\oldpartial}
\newcommand{\defeq}{\mathrel{\mathop{:}}=}
\newcommand{\ft}{\tilde{f}}
\newcommand{\kbar}{\overline{k}}
\newcommand{\Imatrix}{\mathsf{I}}
\newcommand{\Km}{\mathsf{K}}
\newcommand{\Lm}{\mathsf{L}}
\newcommand{\Lr}{\mathscr{L}}
\newcommand{\weights}{\boldsymbol{\mathsf{w}}}
\newcommand{\Bm}{\mathsf{B}}
\newcommand{\Pm}{\mathsf{P}}
\newcommand{\Um}{\mathsf{U}}
\newcommand{\Vm}{\mathsf{V}}
\newcommand{\phit}{\tilde{\phi}}
\newcommand{\psit}{\tilde{\psi}}
\newcommand{\phib}{\boldsymbol{\phi}}
\newcommand{\psib}{\boldsymbol{\psi}}
\newcommand{\vectormask}{\boldsymbol{\mathsf{m}}}
\newcommand{\matrixmask}{{\mathsf{M}}}
\newcommand{\Mm}{\matrixmask}
\newcommand{\bast}{\boldsymbol{\ast}}
\newcommand{\sg}{\mathsf{sg}}
\newcommand{\phiv}{\boldsymbol{\phi}}
\newcommand{\Jb}{\mathbf{J}}
\DeclareMathOperator*{\maximize}{maximize}
\newcommand{\Am}{\mathsf{A}}
\newcommand{\rk}{r}
\newcommand{\loraobj}{\Lc_{\mathsf{LoRA}}}
\newcommand{\jntloraobj}{\Lc_{\mathsf{jnt}}}
\DeclareMathAlphabet{\mathpzc}{OT1}{pzc}{m}{it}
\newcommand{\Iop}{\mathpzc{I}}
\newcommand{\Kop}{\mathpzc{K}}
\newcommand{\Top}{\mathpzc{T}}
\newcommand{\Hop}{\mathpzc{H}}
\newcommand{\Vop}{\mathpzc{V}}
\newcommand{\redt}[1]{#1}
\newcommand{\bluet}[1]{#1}
\newcommand{\cmark}{\textcolor{blue}{\ding{51}}}%
\newcommand{\xmark}{\textcolor{red}{\ding{55}}}%
\definecolor{codegreen}{rgb}{0,0.6,0}
\definecolor{codegray}{rgb}{0.5,0.5,0.5}
\definecolor{codepurple}{rgb}{0.58,0,0.82}
\definecolor{backcolour}{rgb}{0.95,0.95,0.92}
\lstdefinestyle{mystyle}{
    language=Python,
    keywords={def,class,return,None,raise,mean,sqrt},    
    backgroundcolor=\color{backcolour},   
    commentstyle=\color{codegreen},
    keywordstyle=\color{blue},
    numberstyle=\tiny\color{codegray},
    stringstyle=\color{codepurple},
    basicstyle=\ttfamily\footnotesize,
    breakatwhitespace=false,         
    breaklines=true,                 
    captionpos=b,                    
    keepspaces=true,                 
    numbers=left,                    
    xleftmargin=.375cm,
    numbersep=5pt,
    showspaces=false,                
    showstringspaces=false,
    showtabs=false,                  
    tabsize=4,    
}
\newif\ifforreview
\newcommand\StartAppendixEntries{}
  \renewcommand\StartAppendixEntries{\value{tocdepth}=-10000\relax}%
  \edef\maintocdepth{\the\value{tocdepth}}%
  \renewcommand\StartAppendixEntries{\value{tocdepth}=\maintocdepth\relax}%
\renewcommand{\E}{\mathbb{E}}
\renewcommand{\S}{Sec.~}
\icmltitlerunning{Operator SVD with Neural Networks via Nested Low-Rank Approximation}
\begin{document}

\twocolumn[
\icmltitle{Operator SVD with Neural Networks via Nested Low-Rank Approximation}

\icmlsetsymbol{equal}{*}

\begin{icmlauthorlist}
\icmlauthor{J. Jon Ryu}{mit}
\icmlauthor{Xiangxiang Xu}{mit}
\icmlauthor{H. S. Melihcan Erol}{mit}
\icmlauthor{Yuheng Bu}{uf}
\icmlauthor{Lizhong Zheng}{mit}
\icmlauthor{Gregory W. Wornell}{mit}
\end{icmlauthorlist}

\icmlaffiliation{mit}{Department of EECS, MIT, Cambridge, Massachusetts, USA}
\icmlaffiliation{uf}{Department of ECE, University of Florida, Gainesville, Florida, USA}

\icmlcorrespondingauthor{J. Jon Ryu}{\href{jongha@mit.edu}{jongha@mit.edu}}

\icmlkeywords{neural computing, singular-value decomposition, low-rank approximation, partial differential equations, representation learning}

\vskip 0.3in
]

\printAffiliationsAndNotice{}  %

\begin{abstract}
Computing eigenvalue decomposition (EVD) of a given linear operator, or finding its leading eigenvalues and eigenfunctions,
is a fundamental task in many machine learning and scientific computing problems.
For high-dimensional eigenvalue problems, training neural networks to parameterize the eigenfunctions is considered as a promising alternative to the classical numerical linear algebra techniques. 
This paper proposes a new optimization framework based on the low-rank approximation characterization of a truncated singular value decomposition, accompanied by new techniques called \emph{nesting} for learning the top-$L$ singular values and singular functions in the correct order. 
The proposed method promotes the desired orthogonality in the learned functions implicitly and efficiently via an unconstrained optimization formulation, which is easy to solve with off-the-shelf gradient-based optimization algorithms.
We demonstrate the effectiveness of the proposed optimization framework for use cases in computational physics and machine learning.

\end{abstract}

\allowdisplaybreaks

\newcommand{\gradopt}{\mathsf{GradOpt}}
\newcommand{\nsvdjnt}{NeuralSVD$_{\text{jnt}}$}
\newcommand{\nsvdseq}{NeuralSVD$_{\text{seq}}$}
\newcommand{\ellthloraobj}{\Lc_{\ell}}
\newcommand{\mm}{\mathsf{m}}

\section{Introduction}
Spectral decomposition techniques, including singular value decomposition (SVD) and eigenvalue decomposition (EVD), are crucial tools in machine learning and data science for handling large datasets and reducing their dimensionality while preserving prominent structures; see, \eg \citep{Markovsky2012,Blum--Hopcroft--Kannan2020}. 
They break down a matrix (or a linear operator) into its constituent parts, enabling a better understanding of the underlying geometry and relationships within the data.
These form the foundation of various low-dimensional embedding algorithms~\citep{Scholkopf--Smola--Muller1998,Tenenbaum--De-Silva--Langford2000,Roweis--Saul2000,Shi--Malik2000,Ng--Jordan--Weiss2001,Belkin--Niyogi2003,Bengio--Vincent--Paiment--Delalleau--Ouimet--Le-Roux2003,Cox--Cox2008} and correlation analysis algorithms~\citep{Michaeli--Wang--Livescu2016,Wang--Wu--Huang--Zheng--Xu--Zhang--Huang2019} and are widely used in image and signal processing~\citep{Andrews--Patterson1976,Turk--Pentland1991,Wiskott--Sejnowski2002,Sprekeler2011,Scetbon--Elad--Milanfar2021}, natural language processing~\citep{Landauer--Foltz--Laham1998,Goldberg--Levy2014}, among other fields.
Beyond machine learning applications, solving eigenvalue problems is a crucial step in solving partial differential equations (PDEs), such as Schr\"odinger's equations in quantum chemistry~\citep{Hermann--Schatzle--Noe2020,Pfau--Spencer--Matthews2020}.

The standard approach to these problems in practice is to perform the matrix spectral decomposition using the standard techniques from numerical linear algebra~\citep{Golub--Van-Loan2013}.
In machine learning, the size of the matrix is typically given by the size of the data sample or the dimensionality of data. In physical simulation, the underlying matrix scales with the resolution of discretization of a given domain.
For finding a few top (or bottom) eigenmodes, in general, iterative subspace methods such as Krylov subspace methods~\citep{Saad1981} and LOBPCG~\citep{Knyazev2001} can efficiently find top eigenmodes via repeating matrix-vector products~\citep{Golub--Van-Loan2013}.
Note that the full eigendecomposition of a $N\times N$ matrix can be performed in $O(N^3)$ time complexity if the matrix can be stored in memory. 
For large-scale, high-dimensional data, however,
the memory, computational, and statistical complexity of matrix decomposition algorithms
poses a significant challenge in practice.
As the data size (or the resolution of the grid in physical simulation) or the dimensionality of the underlying problem increases, the matrix-based approach becomes easily infeasible as even storing the eigenvectors in memory is too costly.

A promising alternative is to approximate the singular- or eigen-functions using parametric function approximators, 
assuming that there exists an abstract operator that induces a target matrix to decompose.
In other words, we aim to approximate an eigenvector $\hat{\phiv}_\ell\in\Real^N$ by a single parametric function $\hat{\phi}_\ell\suchthat\Xc\to\Real$.
In Fig.~\ref{fig:schematic}, we illustrate the proposed framework NeuralSVD, which is a special instance of the parametric approach, as a schematic diagram. 

Compared to the ``nonparametric'' approach, the parametric approach has several advantages.
First, unlike the nonparametric approach which relies on the Nystr\"om method~\citep{Williams--Seeger2000,Bengio--Delalleau--Le-Roux--Paiement--Vincent--Ouimet2004} to extrapolate eigenvectors to unseen points, the parametric eigenfunctions can naturally extrapolate without the storage and computational complexity of Nystr\"om; refer to \S\ref{app:sec:nystrom} for a detailed discussion.
Second, given the exceptional ability of neural networks (NNs) to generalize with complex data, such as convolutional neural networks for images, transformers for natural language, and recently developed NN ansatzes for quantum chemistry~\citep{Hermann--Schatzle--Noe2020,Pfau--Spencer--Matthews2020}, one can anticipate better extrapolation performance than in the nonparametric, matrix approach.
If the choice of parametric functions is appropriate to exploit the complex structure of the underlying distribution, we can also expect the parametric approach to scale better in terms of training complexity for large-scale problems than the nonparametric counterpart.
Third, in the context of solving PDEs, the parametric approach stands out, notably because it necessitates only a sampler from a specified domain without the need for discretization. This is particularly advantageous as it helps mitigate the potential introduction of undesirable approximation errors.

In this paper, we propose a new optimization framework that can train neural networks to approximate the top-$L$ orthogonal singular- (or eigen-) functions of an operator.
The proposed method is based on an unconstrained optimization problem from Schmidt's low-rank approximation theorem~(\citeyear{Schmidt1907}) 
that naturally admits an unbiased gradient estimator. 
To learn the ordered top-$L$ orthogonal singular basis as the optimal solution simultaneously, %
we introduce new techniques called \emph{nesting} to break the symmetry so that we can learn the singular functions in the order of singular values; see the high-level illustrations in Fig.~\ref{fig:schematic_nesting}.

\begin{figure}[t!]
    \centering
    \includegraphics[width=.45\textwidth,valign=t]{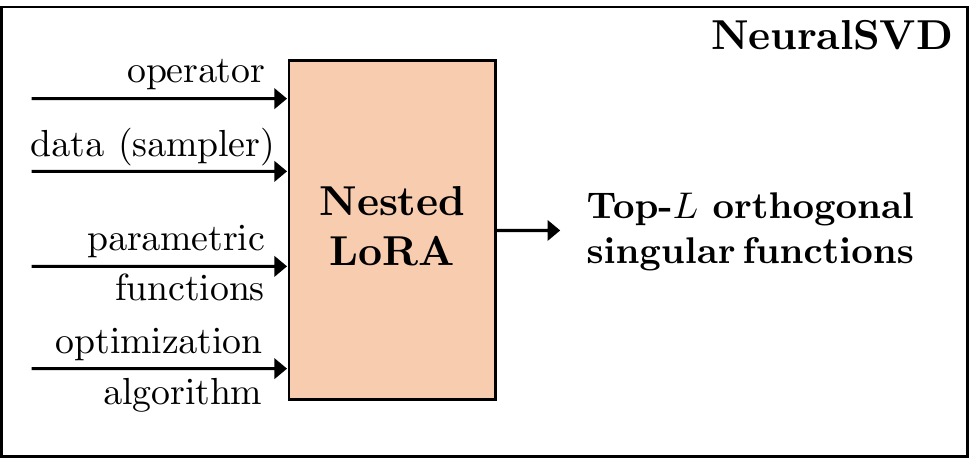}
    \caption{Schematic illustration of NeuralSVD.}\vspace{-1em}
    \label{fig:schematic}
\end{figure}

While several frameworks have been proposed in the machine learning community to systematically recover ordered eigenfunctions using neural networks~\citep{Pfau--Petersen--Agarwal--Barrett--Stachenfeld2019,Deng--Shi--Zhu2022}, these approaches encounter practical optimization challenges, particularly in enforcing the orthonormality of the learned eigenfunctions.
Compared to the prior works, our framework can (1) learn the top-$L$ orthogonal singular bases more efficiently for larger $L$ due to the more stable optimization procedure, and (2) perform SVD of a non-self-adjoint operator by design, handling EVD of a self-adjoint operator as a special case.
We demonstrate the power of our framework in solving PDEs and representation learning for cross-domain retrieval.

\section{Problem Setting and Preliminaries}
\label{sec:prelim}

\subsection{Operator SVD}
While SVD is typically assumed to be done via EVD, our low-rank approximation framework can directly perform SVD, handling EVD as a special case.
We consider two separable Hilbert spaces $\Fc$ and $\Gc$ and a linear operator $\Top\suchthat \Fc\to \Gc$.
We will use the bra-ket notation, which denotes $|f\rangle$ for a function $f(\cdot)$ throughout, as it allows us to describe the proposed method in a succinct way.
For most applications, the Hilbert spaces $\Fc$ and $\Gc$ are $\Lr^2$ spaces of square-integrable functions, and a reader thus can read the inner product between two real-valued functions $\langle f| f'\rangle$ as an integral $\int_{\Xc} f(x)f'(x)\mu(\diff x)$ for some underlying measure $\mu$ over a domain $\Xc$.
In learning problems, $\Top$ is typically an integral kernel operator induced by a kernel function, accompanied by data distributions as the underlying measures. 
In solving PDEs, $\Top$ is given as a differential operator that governs a physical system of interest, where the underlying measure is the Lebesgue measure over a domain.

For a \emph{compact} operator $\Top$, it is well known that there exist orthonormal bases $(\phi_i)_{i\ge 1}$ and $(\psi_i)_{i\ge 1}$ with a sequence of non-increasing, non-negative real numbers $(\sigma_i)_{i\ge 1}$ such that 
$(\Top \phi_i)(y) = \sigma_i \psi_i(y)$,
$(\Top^* \psi_i)(x) = \sigma_i \phi_i(x)$, $i=1,2,\ldots$.\footnote{Compact operators can be informally understood as a benign class of possibly infinite-dimensional operators that behave similarly to finite-dimensional matrices, so that we can consider the notion of SVD as in matrices. A formal definition is not crucial in understanding the manuscript and is thus deferred to \S\ref{app:sec:lora_derivation}.}
The function pairs $(\phi_i,\psi_i)$ are called (left- and right-, resp.) singular functions corresponding to the singular value $\sigma_i$. 
Hence, the compact operator $\Top$ can be written as
\begin{align}
\label{eq:svd}
\Top = \sum_{i=1}^{\infty} \sigma_i | \psi_i\rangle \langle\phi_i|,
\end{align}
for $\sigma_1\ge \sigma_2\ge \ldots \ge 0$,
which we call the SVD of $\Top$.
Here, $|\psi\rangle \langle \phi| \suchthat \Fc\to \Gc$ is the operator defined as $(|\psi\rangle \langle \phi|) |f\rangle \defeq (\langle \phi| f\rangle) |\psi\rangle$, which can be understood as the \emph{outer product}.
We refer an interested reader to \citep[Theorem 7.6]{Weidmann2012} for a rigorous treatment of SVD of compact operators.

\subsection{EVD as a Special Case of SVD}
In several applications, the operator is self-adjoint (\ie $\Top^*=\Top$ with $\Fc=\Gc$), and sometimes even positive definite (PD).
By the spectral theorem~\citep[Theorem 7.1]{Weidmann2012}, a compact self-adjoint operator has the EVD of the form $\Top=\sum_{i=1}^{\infty} \lambda_i |\phi_i\rangle\langle\phi_i|$.
In this case, the singular values of the operator are the absolute values of its eigenvalues, and
for each $i$, the $i$-th left- and right- singular functions are either identical (if $\lambda_i \ge 0$) or only different by the sign (if $\lambda_i <0$).
Hence, in particular, we can find its EVD by SVD in the case of a positive-definite (PD) operator.
We remark in passing that our framework is also applicable for a certain class of non-compact operators; see \S\ref{sec:exp:pde} and \S\ref{app:sec:noncompact}.

\subsection{SpIN and NeuralEF}
As alluded to earlier, Spectral Inference Networks (SpIN)~\citep{Pfau--Petersen--Agarwal--Barrett--Stachenfeld2019} and Neural Eigenfunctions (NeuralEF)~\citep{Deng--Shi--Zhu2022} are the most closely related prior works to ours, in the sense that these methods aim to \emph{learn} the top-$L$ orthonormal eigenbasis of a self-adjoint operator by \emph{training} parametric functions.
Though there exist other approaches in computational physics that aim to find beyond the top mode or ground state, most, if not all, approaches are based on rather ad-hoc regularization terms and do not have guarantee to recover the top-$L$ ordered eigenfunctions.
Hence, we briefly overview SpIN and NeuralEF in the main text, and discuss the two methods in greater details as well as the other line of works in \S\ref{app:sec:related_work}.

SpIN and NeuralEF are only applicable for self-adjoint operators, and thus we temporarily assume a self-adjoint operator $\Top\suchthat\Fc\to\Fc$ in the rest of this section.
SpIN and NeuralEF are both grounded in the principle of maximizing the Rayleigh quotient with orthonormality constraints. However, their optimization frameworks encounter nontrivial complexity issues, as summarized in Table~\ref{tab:comparison}. The primary challenge lies in efficiently handling these orthonormality constraints. To achieve fast convergence with off-the-shelf gradient-based optimization algorithms, it is also crucial to estimate gradients in an unbiased manner. 

SpIN starts from the following variational characterization of the top-$L$ orthonormal eigenbasis:
\begin{align}
\label{eq:evd_char_subspace}
\begin{aligned}
\underset{\substack{\phit_\ell\in \Fc,\, \ell\in[L]}}{\text{maximize~~}} & 
\langle \phit_\ell | \Top\phit_\ell\rangle \\
\text{subject~to~~} &
\langle \phit_i | \phit_{i'}\rangle=\d_{ii'}
\quad \forall 1\le i,i'\le L.
\end{aligned}
\end{align}
Since this formulation only captures the subspace without order, SpIN employs a special gradient masking scheme to learn the eigenfunctions in the correct order.
The resulting algorithm involves Cholesky decomposition of $L\times L$ matrix per iteration, which takes $O(L^3)$ complexity in general.
Further, to come up with unbiased gradient estimates, SpIN introduces a hyperparameter-sensitive bi-level optimization and necessitates the need to store the Jacobian of the parametric model.
As a result, the unfavorable scalability with $L$, along with memory complexity and implementation challenges, reduces the practical utility of SpIN.

To circumvent the issues with SpIN, NeuralEF adopted and extended an optimization framework of EigenGame~\citep{Gemp--McWilliams--Vernade--Graepel2021}, which is a game-theoretic formulation for streaming PCA. The underlying optimization problem can be understood as a variant of the \emph{sequential} version of the subspace characterization~\eqref{eq:evd_char_subspace};
see \S\ref{app:sec:neuralef} and \eqref{eq:neuralef_problem} therein. 
The resulting optimization, however, still suffers from its biased gradient estimation, and requires the parametric functions to be normalized, \ie  $\|\hat{\phi}_\ell\|_2=\langle\hat{\phi}_\ell|\hat{\phi}_\ell\rangle^{1/2}=1$ for every $\ell$. While the biased gradient could be alleviated via its simple variant as we explain in \S\ref{app:sec:neuralef}, our experiments show that the function normalization step may slow down the convergence in practice.

We note that both SpIN and NeuralEF require each parametric eigenfunction to be parameterized separately, \ie without shared parameters among them, to ensure that their optimization schemes work. In practice, while using disjoint models is a straightforward choice, it may consume excessive memory if the number of modes 
$L$ to be retrieved is large or if the model becomes more complex. To address both scenarios, in the next section, we provide two techniques: one suitable for disjoint parameterization (\S\ref{sec:seq_nesting}) and the other for joint parameterization (\S\ref{sec:jnt_nesting}).

\section{SVD via Nested Low-Rank Approximation}
\label{sec:theory}
In what follows, we propose a new optimization-based algorithm for SVD with neural networks, based on Schmidt's approximation theorem combined with new techniques called \emph{nesting} for learning the singular functions in order.
The resulting framework is significantly conceptually simpler and easier to implement than prior methods, without introducing sophisticated optimization techniques.
Further, unlike SpIN and NeuralEF, we can directly perform the SVD of a non-self-adjoint operator.
Hereafter, we assume that $\Top$ has $\{(\sigma_{\ell}, |\phi_\ell\rangle, |\psi_\ell\rangle)\}_{\ell=1}^{\infty}$ as its orthonormal singular triplets. 
\subsection{Learning Subspaces via Low-Rank Approximation}
Let $L$ be the number of modes we wish to retrieve.
We will use a shorthand notation $\fv_{1:\ell}(x)\defeq [f_1(x),\ldots,f_\ell(x)]^\intercal$.
Below, we will employ distinct variables $|f\rangle$ and $|g\rangle$ as counterparts to $|\phi\rangle$ and $|\psi\rangle$, respectively, which represent normalized singular functions. The intentional use of separate variables $|f\rangle$ and $|g\rangle$ underscores their role in representing \emph{scaled} singular functions rather than normalized ones within our framework. The importance of this distinction will become apparent in the following subsection.

For the top-$L$ SVD of a given operator $\Top$, we consider the \emph{low-rank approximation} (LoRA) objective defined as
\begin{align}
&\loraobj(\fv_{1:L}, \gv_{1:L})\defeq \loraobj(\fv_{1:L}, \gv_{1:L};\Tc)
\label{eq:lora_obj}\\
&\defeq -2\sum_{\ell=1}^L \langle g_\ell|\Top f_\ell\rangle + \sum_{\ell=1}^L\sum_{\ell'=1}^L \langle f_\ell|f_{\ell'}\rangle \langle g_\ell|g_{\ell'}\rangle.\nonumber
\end{align}
This objective can be derived as the approximation error of $\Top$ via a low-rank expansion $\sum_{\ell=1}^L |f_\ell\rangle \langle g_\ell|$ measured in the squared Hilbert--Schmidt norm, for a compact operator $\Top$.
We defer its derivation to \S\ref{app:sec:lora_derivation}.
By Schmidt's LoRA theorem~\citep{Schmidt1907}, which is the operator counterpart of \citet{Eckart--Young1936} for matrices, 
$(\fv^\star,\gv^\star)$ corresponds to the rank-$L$ approximation of $\Top$.
The proof of the following theorem can be found in \S\ref{app:sec:proof_thm:subspace}.

\begin{theorem}
\label{thm:subspace}
Assume that $\Top\suchthat\Fc\to\Gc$ is compact. Let
$((f_{\ell}^\star,g_{\ell}^\star))_{\ell=1}^L\in(\Fc\times\Gc)^L$ be a global minimizer of 
$\loraobj(\fv_{1:L}, \gv_{1:L})$.
If $\sigma_L >\sigma_{L+1}$, then
\[
\sum_{\ell=1}^L |g_\ell^\star\rangle \langle f_\ell^\star|
=\sum_{\ell=1}^L \sigma_\ell |\psi_\ell\rangle \langle\phi_\ell|.
\]
\end{theorem}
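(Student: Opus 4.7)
The plan is to recognize the LoRA objective as (a shifted version of) the squared Hilbert--Schmidt distance between $\Top$ and the rank-$\le L$ operator $\Hop_{\fv,\gv} \defeq \sum_{\ell=1}^L |g_\ell\rangle\langle f_\ell|$, and then invoke Schmidt's approximation theorem. The statement only asserts equality of the operators $\sum_\ell |g_\ell^\star\rangle\langle f_\ell^\star|$ and $\sum_\ell \sigma_\ell|\psi_\ell\rangle\langle\phi_\ell|$, not of the individual factors (which are genuinely non-unique under rotations and rescalings), so this route is clean.

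First I would expand $\|\Top - \Hop_{\fv,\gv}\|_{\mathrm{HS}}^2$ using the inner product $\langle A,B\rangle_{\mathrm{HS}} = \sum_i \langle A e_i, B e_i\rangle$. The cross term gives $-2\sum_\ell \langle g_\ell|\Top f_\ell\rangle$, since $\langle \Top, |g_\ell\rangle\langle f_\ell|\rangle_{\mathrm{HS}} = \langle g_\ell|\Top f_\ell\rangle$. The quadratic term gives $\sum_{\ell,\ell'}\langle |g_\ell\rangle\langle f_\ell|, |g_{\ell'}\rangle\langle f_{\ell'}|\rangle_{\mathrm{HS}} = \sum_{\ell,\ell'}\langle f_\ell|f_{\ell'}\rangle\langle g_\ell|g_{\ell'}\rangle$. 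Therefore
\begin{equation*}
\loraobj(\fv_{1:L},\gv_{1:L}) = \|\Top - \Hop_{\fv,\gv}\|_{\mathrm{HS}}^2 - \|\Top\|_{\mathrm{HS}}^2,
\end{equation*}
so a global minimizer of $\loraobj$ is precisely a choice of $(\fv,\gv)$ making $\Hop_{\fv,\gv}$ a minimizer of $\|\Top - \Hop\|_{\mathrm{HS}}^2$ over operators of rank at most $L$.

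Next I would appeal to the operator version of the Eckart--Young theorem due to Schmidt: for a compact operator $\Top = \sum_{i\ge 1}\sigma_i|\psi_i\rangle\langle\phi_i|$, the best rank-$L$ approximation in Hilbert--Schmidt (and operator) norm is $\Top_L \defeq \sum_{\ell=1}^L \sigma_\ell|\psi_\ell\rangle\langle\phi_\ell|$, with residual $\sum_{i>L}\sigma_i^2$. Moreover, when the gap condition $\sigma_L > \sigma_{L+1}$ holds, this minimizer is \emph{unique as an operator}: any rank-$\le L$ minimizer must coincide with $\Top_L$. Since $\Hop_{\fv^\star,\gv^\star}$ has rank at most $L$ by construction and achieves the minimum of $\|\Top - \Hop\|_{\mathrm{HS}}^2$, it follows that $\sum_{\ell=1}^L |g_\ell^\star\rangle\langle f_\ell^\star| = \sum_{\ell=1}^L \sigma_\ell|\psi_\ell\rangle\langle\phi_\ell|$, which is the claim.

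The main obstacle, and the only real content beyond bookkeeping, is justifying the operator-level uniqueness under the gap condition $\sigma_L>\sigma_{L+1}$. I would argue this via the singular value characterization of the HS residual: for any rank-$\le L$ operator $\Hop$, $\|\Top-\Hop\|_{\mathrm{HS}}^2 \ge \sum_{i>L}\sigma_i(\Top)^2$, with equality forcing the singular value sequence of $\Top-\Hop$ to be $(\sigma_{L+1},\sigma_{L+2},\ldots)$. Writing $\Hop = \sum_{\ell=1}^L \sigma_\ell'|\psi_\ell'\rangle\langle\phi_\ell'|$ in SVD form and using Mirsky/Weyl-type interlacing on $\Top-\Hop$, one forces $\mathrm{span}\{\phi_\ell'\} = \mathrm{span}\{\phi_1,\ldots,\phi_L\}$ and similarly for the $\psi$'s, and the strict gap $\sigma_L>\sigma_{L+1}$ then pins down $\Hop=\Top_L$ as an operator. (I would cite this as Schmidt's theorem rather than reprove it, and defer any technical uniqueness details to an appendix.) No constraint on compactness beyond that of $\Top$ is needed, since $\Hop_{\fv,\gv}$ is automatically finite-rank and hence compact.
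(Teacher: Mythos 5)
Your proof is correct and takes essentially the same route as the paper: rewrite $\loraobj$ as $\|\Top - \sum_\ell |g_\ell\rangle\langle f_\ell|\|_{\mathsf{HS}}^2 - \|\Top\|_{\mathsf{HS}}^2$ (this is the paper's Lemma~\ref{lem:lora}, which the paper derives by inserting $\sum_i|\phi_i\rangle\langle\phi_i|=\Iop$ rather than via the HS inner product, but the two computations are equivalent), and then invoke Schmidt's low-rank approximation theorem (the paper's Theorem~\ref{thm:schmidt}), which the paper also cites rather than reproves. Your extra remarks on operator-level uniqueness under the spectral gap are a sound sketch of why Schmidt's theorem gives a unique rank-$L$ minimizer, which the paper leaves implicit inside the cited theorem statement.
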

\newcommand{\Qm}{\mathsf{Q}}
In cases of degeneracy, \ie when multiple singular functions share the same singular value, a minimizer will still recover a subspace spanned by the singular functions associated with that particular singular value.
Throughout, we will assume such strict spectral gap assumptions for the sake of simple exposition.

\subsection{Nesting for Learning Ordered Singular Functions}\label{sec:nesting}
While the LoRA characterization of the spectral subspaces is favorable in practice due to its unconstrained nature, a global minimizer only characterizes the top-$L$ singular \emph{subspaces}; note that $(\Qm\fv^\star, \Qm\gv^\star)$ for any orthogonal matrix $\Qm\in\Real^{L\times L}$ is also a global minimizer.
We thus require an additional technique to find the singular functions and singular values \textit{in order} by breaking the symmetry in the objective $\loraobj(\fv_{1:L},\gv_{1:L})$.

The idea for learning the ordered solution is as follows.
Suppose that we can find a common global minimizer $(\fv_{1:L}^\star, \gv_{1:L}^\star)$ of the objectives $\loraobj(\fv_{1:\ell},\gv_{1:\ell})$ for $1\le \ell\le L$.
Then, from the optimality in Theorem~\ref{thm:subspace}, $\sum_{i=1}^\ell |g_i^\star\rangle\langle f_{i}^\star|$ must be the rank-$\ell$ approximation of $\Top$ for each $\ell\in[L]$, which is $\sum_{i=1}^\ell \sigma_i |\psi_i\rangle\langle \phi_i|$.
By telescoping, we then have $|g_\ell^\star\rangle\langle f_{\ell}^\star| = \sigma_\ell |\psi_\ell\rangle\langle \phi_\ell|$ for each $\ell\in[L]$, which is the desired solution.
Since the optimization is performed with a certain nested structure, we call this idea \emph{nesting}.

We remark that, unlike most existing methods that aim to directly learn ortho-\emph{normal} eigenfunctions, 
the global optimum with (nested) LoRA characterizes the correct singular functions \emph{scaled} by the singular value $\sigma_\ell$, as alluded to earlier. Using this property, one can estimate $\sigma_\ell$ by computing the product of norms $\|f_\ell^\star\|\cdot \|g_\ell^\star\|$; see \S\ref{app:sec:norms} for the detail.

Below, we introduce two different versions that implement this idea: \emph{sequential nesting}, which is ideal when each eigenfunction is parameterized by disjoint neural networks, and \emph{joint nesting}, which can be used even when 
they may share parameters.

\subsubsection{Sequential Nesting}
\label{sec:seq_nesting}
Sequential nesting is based on the following observation: if $(\fv_{1:\ell-1},\gv_{1:\ell-1})$ already captures the top-$(\ell-1)$ singular subspaces as a minimizer of $\loraobj(\fv_{1:\ell-1},\gv_{1:\ell-1})$,
minimizing $\loraobj(\fv_{1:\ell},\gv_{1:\ell})$ for $(f_{\ell},g_{\ell})$ finds the $\ell$-th singular functions. Its proof can be found in \S\ref{app:sec:proof_thm:seq_nesting}. Formally:
\begin{theorem}
\label{thm:seq_nesting}
Assume that $\Top\suchthat\Fc\to\Gc$ is compact.
Pick any $\ell\ge 1$. 
Let $(\fv_{\ell}^\star,\gv_{\ell}^\star)\in\Fc\times\Gc$ be a global minimizer of 
$\loraobj(\fv_{1:\ell},\gv_{1:\ell})$, 
where $\sum_{i=1}^{\ell-1} |g_i\rangle \langle f_i|
= \sum_{i=1}^{\ell-1} \sigma_i |\psi_i\rangle \langle \phi_i|$.
If $\sigma_{\ell}>\sigma_{\ell+1}$, then 
$|g_{\ell}^\star\rangle \langle f_{\ell}^\star|
= \sigma_{\ell} |\psi_{\ell}\rangle \langle \phi_{\ell}|$.
\end{theorem}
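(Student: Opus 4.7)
The plan is to reduce the theorem to Theorem~\ref{thm:subspace} applied at $L=1$ to an appropriate residual operator. The key preliminary observation is that, as derived in Sec.~\ref{app:sec:lora_derivation}, the objective admits the rewriting
$$\loraobj(\fv_{1:\ell},\gv_{1:\ell}) = \Bigl\|\Top - \sum_{i=1}^\ell |g_i\rangle\langle f_i|\Bigr\|_{HS}^2 - \|\Top\|_{HS}^2,$$
so it depends on $(\fv_{1:\ell},\gv_{1:\ell})$ only through the outer-product sum $S_\ell \defeq \sum_{i=1}^\ell |g_i\rangle\langle f_i|$. This lets the cross terms between the fixed slots $1,\ldots,\ell-1$ and the free slot $\ell$ be absorbed into a single Hilbert--Schmidt (HS) norm.

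Substituting the hypothesis $\sum_{i=1}^{\ell-1} |g_i\rangle\langle f_i| = \sum_{i=1}^{\ell-1}\sigma_i|\psi_i\rangle\langle\phi_i|$, I would define the residual operator
$$\Top_{\mathrm{res}} \defeq \Top - \sum_{i=1}^{\ell-1}\sigma_i|\psi_i\rangle\langle\phi_i| = \sum_{i\ge \ell}\sigma_i|\psi_i\rangle\langle\phi_i|,$$
so that minimizing $\loraobj(\fv_{1:\ell},\gv_{1:\ell})$ over $(f_\ell,g_\ell)$ is equivalent, up to an additive constant, to minimizing $\|\Top_{\mathrm{res}} - |g_\ell\rangle\langle f_\ell|\|_{HS}^2$. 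In other words, the sequential subproblem at level $\ell$ is exactly the rank-$1$ LoRA problem associated to $\Top_{\mathrm{res}}$.

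The operator $\Top_{\mathrm{res}}$ is compact (as the difference of a compact operator and a finite-rank one) and its SVD is apparent from the display above: its singular triplets are $(\sigma_i,\phi_i,\psi_i)_{i\ge \ell}$. In particular, its leading two singular values are $\sigma_\ell > \sigma_{\ell+1}$, which is precisely the strict-gap condition needed to invoke Theorem~\ref{thm:subspace} at $L=1$. That theorem, applied to $\Top_{\mathrm{res}}$, yields $|g_\ell^\dagger\rangle\langle f_\ell^\dagger| = \sigma_\ell |\psi_\ell\rangle\langle\phi_\ell|$ for every global minimizer, which is the claim.

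The main obstacle is conceptual rather than computational: one must recognize that the LoRA objective is a pure functional of the rank-$\ell$ approximant $S_\ell$, not of the individual $(f_i,g_i)$, so freezing $S_{\ell-1}$ at its optimal value genuinely reduces the level-$\ell$ problem to a single rank-$1$ instance. Once that reduction is in hand, the remaining verifications --- compactness of $\Top_{\mathrm{res}}$, identification of its SVD, and inheritance of the strict spectral gap --- are routine.
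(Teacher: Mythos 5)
Your proposal is correct and follows the same route as the paper: rewrite the objective via Lemma~\ref{lem:lora} as a squared Hilbert--Schmidt distance, absorb the fixed first $\ell-1$ factors into the residual/truncated operator $\sum_{i\ge\ell}\sigma_i|\psi_i\rangle\langle\phi_i|$, and then apply the rank-one case of the low-rank approximation theorem under the gap $\sigma_\ell>\sigma_{\ell+1}$. The only cosmetic difference is that you cite Theorem~\ref{thm:subspace} at $L=1$ while the paper invokes Schmidt's theorem (Theorem~\ref{thm:schmidt}) directly, which are the same ingredient.
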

We can implement this idea by simultaneously updating the iterate $(f_{\ell}^{(t)},g_{\ell}^{(t)})$ at time step $t\ge 1$ for each $\ell\in[L]$, to minimize $\loraobj(\fv_{1:\ell}^{(t)},\gv_{1:\ell}^{(t)})$, treating $(\fv_{1:\ell-1}^{(t)},\gv_{1:\ell-1}^{(t)})$ as a good proxy to the global optimum. That is, for each $\ell\in[L]$,
\begin{align}
\label{eq:seq_nesting}
&(f_{\ell}^{(t+1)},g_{\ell}^{(t+1)}) \\
&\gets \gradopt((f_{\ell}^{(t)},g_{\ell}^{(t)}), \partial_{\bluet{{(f_{\ell},g_{\ell})}}}
\loraobj(\fv_{1:\ell}^{(t)}, \gv_{1:\ell}^{(t)})).\nonumber
\end{align}
Here, $\gradopt(\boldsymbol{\th}, \gv)$ denotes a gradient-based optimization algorithm that returns the next iterate based on the current iterate $\boldsymbol{\th}$ and the gradient $\gv$.

Suppose that each model pair $(f_\ell,g_\ell)$ is parameterized via $L$ separate models with (disjoint) parameters $\th=(\th_\ell)_{\ell=1}^L$. 
In this case, the $\ell$-th eigenfunction can be updated independently from the $\ell'$-th eigenfunctions for $\ell'>\ell$ via the sequential nesting~\eqref{eq:seq_nesting}. 
Hence, while all $(\fv_{1:L},\gv_{1:L})$ are optimized simultaneously, the optimization is \emph{inductive} in the sense that the modes can be learned in the order of the singular values.
As a shorthand notation, let 
\[
\ellthloraobj\defeq \loraobj(\fv_{1:\ell},\gv_{1:\ell}).
\]
The gradient in \eqref{eq:seq_nesting} can be directly implemented by updating each $\th_\ell$ with the gradient 
\begin{align}
\partial_{\th_\ell} \ellthloraobj
&= \langle\partial_{\th_\ell} f_\ell| \partial_{f_\ell} \ellthloraobj\rangle 
+ \langle \partial_{\th_\ell} g_\ell| \partial_{g_\ell} \ellthloraobj\rangle,\nonumber\\
\text{where~~}|\partial_{f_\ell} \ellthloraobj\rangle
&=2\Bigl\{-|\Top^*g_\ell\rangle
+\sum_{i=1}^\ell |f_i\rangle \langle g_i| g_\ell\rangle  \Bigr\},
\label{eq:seq_nesting_grad} 
\end{align}
and $|\partial_{g_\ell} \ellthloraobj\rangle$ can be similarly computed by a symmetric expression.
Note that $|\partial_{\th_\ell} f_\ell\rangle$ should be understood as a vector-valued function of dimension $|\th_\ell|$, \ie the number of parameters in $\th_\ell$.
This gradient can be computed in a vectorized manner over $\ell\in [L]$.

\subsubsection{Joint Nesting}
\label{sec:jnt_nesting}
As alluded to earlier,
in the case of a shared parameterization, the sequential nesting~\eqref{eq:seq_nesting} may exhibit behavior that differs from its inductive nature with the shared parameterization.\footnote{\label{footnote:seq_nesting}We can still apply sequential nesting even when the functions are parameterized by a shared model; see \S\ref{app:sec:seq_shared} for a discussion.} 
For example, for a shared model, imperfect functions $(f_\ell,g_\ell)$ for some $\ell\in[L]$ may affect the already perfectly matched singular functions, say, $(f_1,g_1)$, unlike the disjoint parameterization case.

Interestingly, there is an alternative way to implement the idea of nesting that works for a shared parameterization with a guarantee.
The key observation is that the ordered singular values and functions $\{(\sigma_\ell,\phi_\ell,\psi_\ell)\}_{\ell=1}^L$ can be characterized as the global minimizer of a single objective function, by taking a weighted sum of $\{\ellthloraobj=\loraobj(\fv_{1:\ell},\gv_{1:\ell})\}_{\ell=1}^L$ with positive weights.
That is, define, for any positive weights $\weights=(w_1,\ldots,w_L)\in\Real_{>0}^L$,
\begin{align}
\jntloraobj(\fv_{1:L},\gv_{1:L};\weights)
\defeq \sum_{\ell=1}^L w_\ell \loraobj(\fv_{1:\ell},\gv_{1:\ell}).
\label{eq:jnt_nested_lora_obj}
\end{align}
\begin{theorem}
\label{thm:jnt_nesting}
Assume that $\Top\suchthat\Fc\to\Gc$ is compact. Let
$((f_{\ell}^\star,g_{\ell}^\star))_{\ell=1}^L\in(\Fc\times\Gc)^L$ be a global minimizer of 
$\jntloraobj(\fv_{1:L}, \gv_{1:L};\weights)$.
For any positive weights $\weights\in\Real_{>0}^L$, if the top-$(L+1)$ singular values are all distinct,
$|g_\ell^\star\rangle \langle f_\ell^\star|
= \sigma_\ell |\psi_\ell\rangle \langle \phi_\ell|$ for each $\ell\in[L]$.\footnote{Again, the strict spectral gap is assumed for simplicity; when there exist a degeneracy, the optimally learned functions should recover the orthonormal eigenbasis of the corresponding subspace.}
\end{theorem}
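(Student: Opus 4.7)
The plan is to reduce the weighted multi-level problem to $L$ decoupled LoRA subproblems that all admit a \emph{common} minimizer, and then invoke Theorem~\ref{thm:subspace} at every level together with a telescoping step. The skeleton is: (i)~lower-bound each summand $\loraobj(\fv_{1:\ell},\gv_{1:\ell})$ individually; (ii)~exhibit one configuration that simultaneously saturates all $L$ lower bounds; (iii)~use positivity of $\weights$ to push the equality back to each level; (iv)~apply Theorem~\ref{thm:subspace} level-by-level and telescope.

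For step (i), I would recall from the derivation in Sec.~\ref{app:sec:lora_derivation} that, up to the additive constant $\|\Top\|_{\mathrm{HS}}^2$, $\loraobj(\fv_{1:\ell},\gv_{1:\ell})$ equals the squared Hilbert--Schmidt distance from $\Top$ to the rank-$\ell$ operator $\sum_{i=1}^\ell |g_i\rangle\langle f_i|$, so Schmidt's theorem yields
\[
\inf_{(\fv_{1:\ell},\gv_{1:\ell})}\loraobj(\fv_{1:\ell},\gv_{1:\ell}) \;=\; -\sum_{i=1}^{\ell}\sigma_i^2.
\]
For step (ii), the key structural remark is that the truncated SVDs of $\Top$ are \emph{nested}: the single configuration $(f_\ell^{\circ},g_\ell^{\circ})_{\ell=1}^{L}\defeq(\sqrt{\sigma_\ell}\,|\phi_\ell\rangle,\sqrt{\sigma_\ell}\,|\psi_\ell\rangle)_{\ell=1}^{L}$ realizes the rank-$\ell$ best approximation $\sum_{i=1}^{\ell}\sigma_i|\psi_i\rangle\langle\phi_i|$ for \emph{every} $\ell\in[L]$, and therefore saturates each of the $L$ lower bounds at once.

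For step (iii), summing the $L$ pointwise lower bounds with positive weights gives
\[
\jntloraobj(\fv_{1:L},\gv_{1:L};\weights) \;\ge\; -\sum_{\ell=1}^{L}w_\ell\sum_{i=1}^{\ell}\sigma_i^2 \;=\; \jntloraobj(\fv_{1:L}^{\circ},\gv_{1:L}^{\circ};\weights),
\]
and because every $w_\ell>0$ the bound is tight \emph{iff} each summand is individually tight. Hence any global minimizer $(\fv_{1:L}^\star,\gv_{1:L}^\star)$ is simultaneously a minimizer of $\loraobj(\fv_{1:\ell},\gv_{1:\ell})$ for every $\ell\in[L]$. Since ``top-$(L+1)$ singular values are all distinct'' means $\sigma_1>\sigma_2>\cdots>\sigma_{L+1}$, the hypothesis $\sigma_\ell>\sigma_{\ell+1}$ of Theorem~\ref{thm:subspace} holds at each level, giving
\[
\sum_{i=1}^{\ell}|g_i^\star\rangle\langle f_i^\star| \;=\; \sum_{i=1}^{\ell}\sigma_i|\psi_i\rangle\langle\phi_i|, \qquad \ell=1,\ldots,L.
\]
Subtracting the identity at level $\ell-1$ (empty sum for $\ell=0$) from the one at level $\ell$ isolates $|g_\ell^\star\rangle\langle f_\ell^\star|=\sigma_\ell|\psi_\ell\rangle\langle\phi_\ell|$, which is the desired conclusion.

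The main conceptual obstacle, and essentially the only substantive content of the argument, is the simultaneous-attainability step (ii): had the optimal rank-$\ell$ approximations not been nested across $\ell$, the weighted-sum minimizer would only be Pareto-optimal for the vector of subobjectives, and positivity of $\weights$ alone would not suffice to force coordinatewise optimality. It is precisely the nested structure of truncated SVD that turns the weighted-sum minimization into $L$ decoupled LoRA problems sharing a common optimum, after which Theorem~\ref{thm:subspace} and a one-line telescoping argument do the rest.
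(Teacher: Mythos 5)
Your proposal is correct and follows essentially the same route as the paper's proof: both lower-bound each $\loraobj(\fv_{1:\ell},\gv_{1:\ell})$ via Schmidt's theorem, observe that the nested truncated SVDs saturate all $L$ bounds simultaneously, use positivity of $\weights$ to force levelwise tightness, and then telescope. The only difference is presentational---the paper first proves a two-term version of this statement as Lemma~\ref{lem:jnt_nesting} and then applies it ``inductively,'' whereas you handle all $L$ summands at once, which is arguably cleaner.
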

See \S\ref{app:sec:proof_thm:jnt_nesting} for its proof. 
The proof readily follows from observing that the joint objective $\jntloraobj(\fv_{1:\ell},\gv_{1:\ell})$ is minimized if and only if $\loraobj(\fv_{1:\ell},\gv_{1:\ell})$ is minimized for each $\ell\in[L]$, \ie $(\fv_{1:\ell},\gv_{1:\ell})$ characterizes the top-$\ell$ singular subspaces for each $\ell\in[L]$.
Any positive weights guarantee consistency, but we empirically found that the uniform weights $\weights=(\frac{1}{L},\ldots,\frac{1}{L})$ work well in practice.

Since the joint nesting is based on a single objective function~\eqref{eq:jnt_nested_lora_obj}, the optimization is as simple as
\begin{align}
&(\fv_{1:L}^{(t+1)},\gv_{1:L}^{(t+1)}) 
\label{eq:jnt_nesting}\\
&\gets \gradopt((\fv_{1:L}^{(t)},\gv_{1:L}^{(t)}), \partial_{({\fv_{1:L},\gv_{1:L}})}
\jntloraobj(\fv_{1:L}^{(t)},\gv_{1:L}^{(t)};\weights)).\nonumber
\end{align}
Even though the joint nesting can be implemented directly using an autograd package with \eqref{eq:jnt_nested_lora_obj}, overall training can be nearly twice as fast via manual gradient computation.
By the chain rule, the gradient can be computed as
\begin{align}
\partial_\th \jntloraobj
&= \sum_{\ell=1}^L \{\langle\partial_{\th} f_\ell | \partial_{f_\ell} \jntloraobj
\rangle + \langle \partial_{\th} g_\ell |\partial_{g_\ell} \jntloraobj
\rangle\},\nonumber\text{~where}\\
|\!\partial_{f_\ell} \jntloraobj
\rangle
&=2\Bigl\{-\mm_\ell|\Top^*g_\ell\rangle
\!+\!\sum_{i=1}^L \matrixmask_{i\ell} |f_i\rangle \langle g_i| g_\ell\rangle  \Bigr\}
\label{eq:jnt_nesting_grad}
\end{align}
and $|\partial_{g_\ell} \jntloraobj
\rangle$ is similarly computed.
Here, we define the vector mask as
$\mm_\ell \defeq \sum_{i=\ell}^L w_i$ and the matrix mask as $\mm_{\ell\ell'}=\mm_{\max\{\ell,\ell'\}}$; see \S\ref{sec:app:oneshot} for a formal derivation.
Lastly, setting $\mm_\ell\gets 1$ and $\mm_{i\ell}\gets \ones\{i\le \ell\}$ in \eqref{eq:jnt_nesting_grad} recovers the sequential nesting gradient~\eqref{eq:seq_nesting_grad}. 
Therefore, both versions of nesting can be implemented in a unified way via \eqref{eq:jnt_nesting_grad}.

\begin{remark}[Comparison to sequential nesting]
\label{rem:which_version_nesting}
In general, joint nesting may be less effective than sequential nesting with disjoint parameterization, as learning the top modes is affected by badly initialized latter modes, potentially slowing down the convergence. This is empirically demonstrated in \S\ref{sec:exp:pde}.
For the case of joint parameterization, however, we also empirically observe that joint nesting can outperform sequential nesting, as expected; see \S\ref{sec:exp:sketchy}.
Hence, we suggest users choose the version of nesting depending on the form of parameterization.
We provide an additional remark in \S\ref{sec:discussion}.
\end{remark}

\subsection{NeuralSVD: Nested LoRA with Neural Networks}
When combined with NN eigenfunctions, we call the overall approach \emph{\nsvdseq{}} and \emph{\nsvdjnt{}} based on the version of nesting, or \emph{NeuralSVD} for simplicity.
While the parametric approach can work with any parametric functions, we adopt the term \emph{neural} given that NNs represent a predominant class of powerful parametric functions.

In practice, we will need to use minibatch samples for optimization.
We explain how to implement the gradient updates of NestedLoRA based on the expression~\eqref{eq:jnt_nesting_grad} in a greater detail in \S\ref{app:sec:pseudo} with PyTorch code snippets.
\ifforreview
A PyTorch implementation of our method can be found in Supplementary Material.
For a fair comparison, we also implement SpIN and NeuralEF with a unified I/O interface.
\else
We have open-sourced a PyTorch implementation of our method, along with our implementations of SpIN and NeuralEF with a unified I/O interface for a fair comparison.\footnote{\url{https://github.com/jongharyu/neural-svd}}
\fi

We emphasize that, to apply NeuralSVD (and other existing methods), we only need to know how to evaluate a quadratic form $\langle f|\Top g\rangle$ and inner products such as $\langle f |f'\rangle$. 
Since we consider $\Lr^2$ spaces for most applications, and the quadratic forms and inner products can be estimated via importance sampling or given data in an unbiased manner; see a detailed discussion on importance sampling to \S\ref{app:sec:importance_sampling}.
After all, the gradients described above can be estimated without bias, and we can thus use any off-the-shelf stochastic optimization method with minibatch to solve the optimization problem.
Given a minibatch of size $B$, we can compute the minibatch objective and gradient only with matrix-vector products, and the complexity is $O(B^2L + BL^2)$.

\begin{figure}[t]
    \centering
    \begin{minipage}[t]{.235\textwidth}
        \vspace{0pt}
        \centering
        \includegraphics[width=\textwidth]{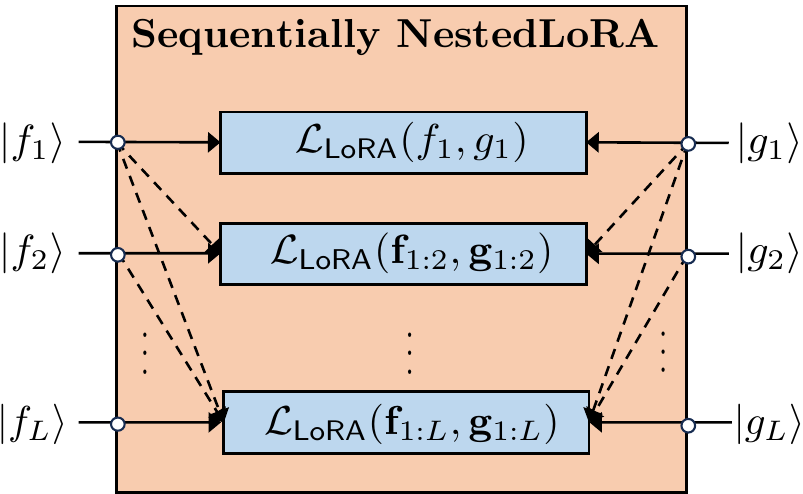}
        \vspace{.6em}
        \subcaption{Sequential nesting.}
    \end{minipage}
    \begin{minipage}[t]{.235\textwidth}
        \vspace{0pt}
        \centering
        \includegraphics[width=\textwidth]{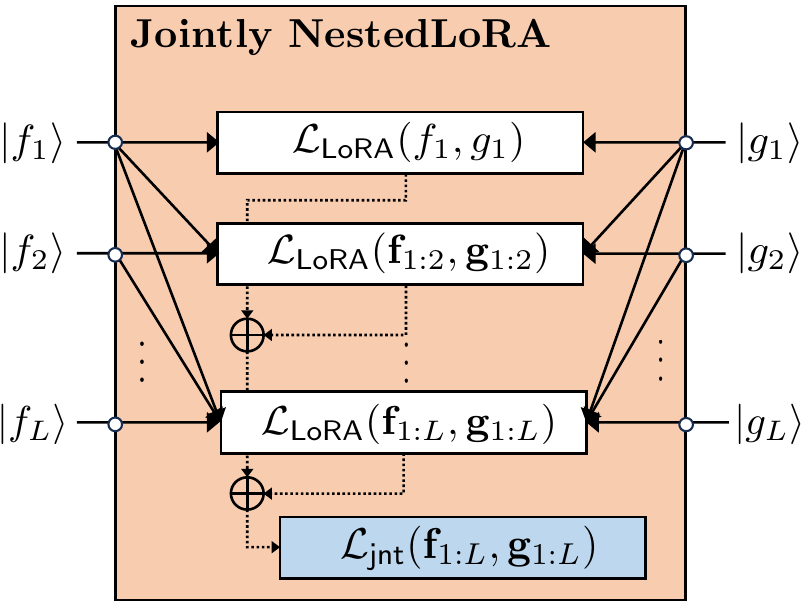}
        \subcaption{Joint nesting.}
    \end{minipage}%
    \caption{Schematic illustrations of the nesting techniques; recall Fig.~\ref{fig:schematic}. The operator and data (and weights for the joint nesting) are omitted for simplicity. 
    In both cases, gradients are computed and backpropagated from the blue boxes. 
    Gradients cannot be backpropagated through the dashed lines in (a); see \S\ref{sec:nesting}.}
    \label{fig:schematic_nesting}
\end{figure}\vspace{-0.5em}

\section{Example Applications and Experiments}
\label{sec:exp}
In this section, we illustrate two example use cases and present experimental results: \emph{differential operators} in computational physics, and \emph{canonical dependence kernels} in machine learning, which will be defined in \S\ref{sec:exp:sketchy}.
We experimentally demonstrate the correctness of NeuralSVD and its ability to learn ordered eigen- or singular-functions and show the superior performance of our method compared to the existing methods. 
We focus on rather small-scale problems that suffice with simple multi-layer perceptrons (MLPs) for extensive numerical evaluation of our method against the existing parametric methods. 
All the training details can be found in \S\ref{app:sec:exp}.

\subsection{Analytical Operators}
\label{sec:exp:pde}
In many application scenarios, an operator $\Top$ is given in an analytical form.
In machine learning, there exists a variety of \emph{kernel-based methods}, which assumes a certain kernel function $k(x,y)$ defined in a closed form. In this case, the underlying operator is the so-called \emph{integral kernel operator} $\Kop$, which is defined as $(\Kop f)(y)\defeq \E_{p(x)}[k(x,y)f(x)]$.
In computational physics, a certain class of important PDEs can be reduced to eigenvalue problems, where we can directly apply our framework to solve them.
In this case, the operator involves a differential operator, such as Laplacian $\nabla^2$, as will become clear below.
We will provide a numerical demonstration of NeuralSVD for the latter scenario.

A representative example of such PDE is a time-independent Schr\"odinger equation (TISE)~\citep{Griffiths--Schroeter2018} 
\[\Hop|\psi\rangle=E|\psi\rangle.\]
Here, $\Hop$ is the Hamiltonian that characterizes a given physical system, $|\psi\rangle$ denotes an eigenfunction, and $E\in\Real$ the corresponding eigen-energy.
Recall that to perform EVD in our SVD framework, we only need to identify $\gv$ to $\fv$.
Since bottom modes are typically of physical interest, we can aim to find the eigenfunctions of the \emph{negative} Hamiltonian $-\Hop$.

We consider two simple yet representative examples of TISEs that have closed-form solutions for extensive quantitative evaluations.
The first example is a 2D hydrogen atom, the corresponding operator of which is compact.
With the second example of a 2D harmonic oscillator, in which the operator of interest is \emph{not} compact, 
we demonstrate that our framework is still applicable.
In both cases, we used simple MLPs with multi-scale random Fourier features as the parametric eigenfunctions~\citep{Wu--Wang--Perdikaris2023}.

\vspace{.5em}\noindent\textbf
{Experiment 1: 2D Hydrogen Atom.}
We first consider a hydrogen atom confined over a 2D plane. By solving the associated TISE, we aim to learn a few bottom eigenstates and their respective eigenenergies.
The detailed problem setting, including the underlying PDE, can be found in \S\ref{app:sec:exp}.
Ignoring irrelevant constants, the true eigenvalues (after negating the sign) are known as $\lambda_{n,l}=(2n+1)^{-2}$ for $n\ge 0$ and $-n\le l\le n$. 
That is, for each $n$, there exist $2n+1$ degenerate states.
In our experiment, we aimed to learn $L=16$ eigenstates that cover the first four degenerate eigen-subspaces. We trained SpIN, NeuralEF, \nsvdseq{}, and \nsvdjnt{} with the same architecture and training procedure with different batch sizes 128 and 512.\footnote{As an exception, a smaller network and batch size 128 was used for SpIN due to its large memory requirement $O(L \times\text{(model size)})$ for maintaining copies of Jacobian for each mode.}
Here, we found that the original version of NeuralEF performed much worse than NeuralSVD,
and we thus implemented and reported the result of a variant of NeuralEF with unbiased gradient estimates, whose definition can be found in
\S\ref{app:sec:neuralef}. 

\begin{figure*}[tb]
\centering
\includegraphics[width=\textwidth]{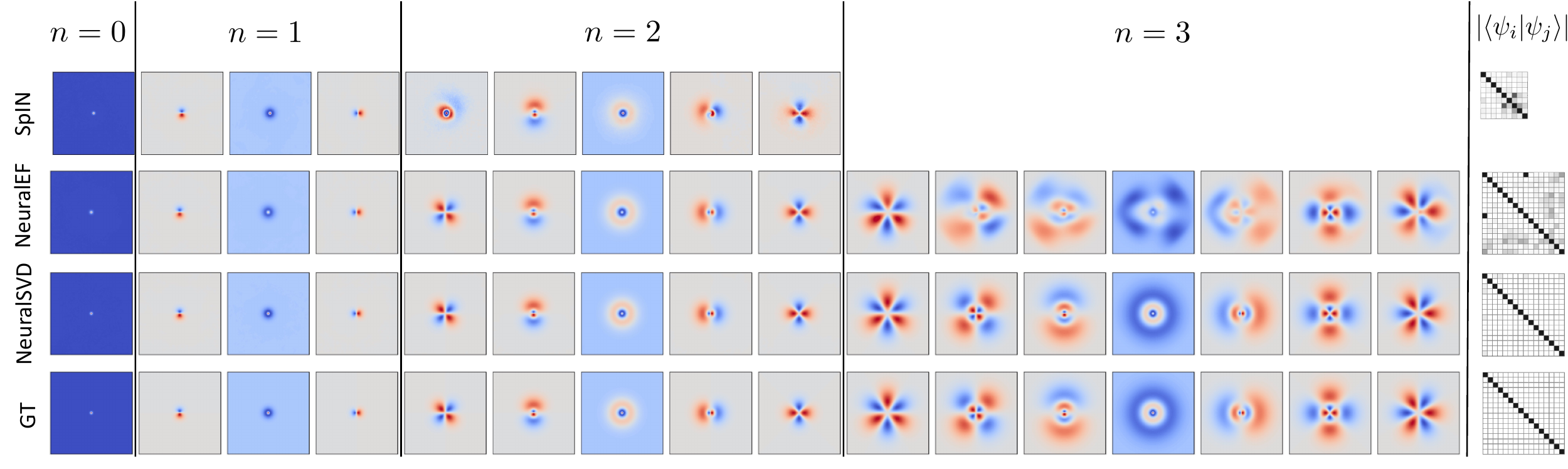}
\caption{Visualization of the first 16 eigenfunctions of the 2D hydrogen atom. 
The first three rows present the learned eigenfunctions by SpIN (128), NeuralEF (512), and \nsvdseq{} (512), respectively.
Due to the memory complexity of SpIN, we ran SpIN with only 9 eigenstates.
The learned functions are aligned by an orthogonal transformation via the orthogonal Procrustes method within each degenerate subspace to compare with the ground truth (GT) in the fourth row. 
The rightmost column visualizes the learned orthogonality.
}
\label{fig:hydrogen}
\end{figure*}

Fig.~\ref{fig:hydrogen} shows the learned eigenfunctions from SpIN (128), NeuralEF (512), and \nsvdseq{} (512), where the numbers in the parentheses indicate used batch sizes.
For comparison, we present the true eigenfunctions with a choice of canonical directions to plot the degenerate subspaces (last row).
Note that SpIN and NeuralEF do not match the ground truths even after the rotation in several modes.
Further, the learned functions (before rotation) are not orthogonal as visualized in the rightmost column.
In contrast, \nsvdseq{} can reliably match the correct eigenfunctions, with almost perfect orthogonality.
Fig.~\ref{fig:pde_summary_bar_full} report several quantitative measures to evaluate the fidelity of learned eigenfunctions; see \S\ref{app:sec:def_measures} for the definitions of the measures.
The results show that both \nsvdjnt{} and \nsvdseq{} outperform SpIN and NeuralEF by an order of magnitude.

Note that though \nsvdjnt{} can recover the eigenfunctions reasonably well, it performs worse than \nsvdseq{} as expected; see Remark~\ref{rem:which_version_nesting}.
We also remark that the computational and memory complexity of NeuralEF and NeuralSVD are almost the same, while SpIN takes much longer time and consumes more memory due to the Cholesky decomposition and the need for storing the Jacobian; we refer an interested reader to \S\ref{app:sec:spin} for the detail of SpIN.

In the Appendix, we demonstrate the advantages of NeuralSVD compared to standard numerical linear algebra techniques; see
\S\ref{app:sec:lobpcg} for its comparison to a matrix-free method, and \S\ref{app:sec:rayleigh_ritz} for the effectiveness of nesting.

\vspace{.5em}\noindent\textbf
{Experiment 2: 2D Harmonic Oscillator.}
We now consider finding the eigenstates of a 2D harmonic oscillator,
whose eigenstate is characterized by a pair of nonegative integers $(n,l)$ for $n\ge 0$ and $0\le l \le n$ with (negative) eigenenergy $\lambda_{n,l}=-2(n+1)$ and multiplicity of $n+1$.
In contrast to the 2D hydrogen case, it is clear that the negative Hamiltonian is neither PD nor compact.
To retrieve eigenfunctions even in this case, we can consider a \emph{shifted} operator $\Top+c\Ic$, where $\Ic$ is an identity operator and $c\ge 0$ is a constant, so that the spectrum becomes $\lambda_{n,l}=c-2(n+1)$.
Note that shifting only affects the quadratic form $\langle f | \Top+c\Ic|f\rangle = \langle f|\Top f\rangle + c\|f\|^2$.

As an example, we chose $c=16$, so that $\lambda_{n,l}>0$ for $0\le n\le 6$.
We claim that NeuralSVD recovers the eigenfunctions with positive eigenvalues, the first $28(=1+\ldots+7)$ states for this case, and the nonpositive part will converge to the constant zero function; see Theorem~\ref{thm:schmidt_evd_noncompact} in \S\ref{app:sec:noncompact}.
We note that the LoRA objective~\eqref{eq:lora_obj} is still well-defined even when $\Top$ is not compact.
While other methods are also applicable and can recover the positive part in principle, the learned functions will be arbitrary for the nonpositive part, unlike NeuralSVD learning zero functions.
This implies that one can correctly infer the nonpositive part by computing the norms of the NeuralSVD eigenfunctions.

We report the quantitative measures in 
Fig.~\ref{fig:pde_summary_bar_full}(b), where only the positive part, \ie the first 28 eigenstates, 
was taken into account for the evaluation.
Note that SeqNestedLoRA significantly outperforms NeuralEF in this example as well.
Moreover, as explained above, the norms of the learned eigenfunctions with NeuralSVD well approximate the ground truth eigenvalues for the positive part, and almost zero for the non-positive part (data not shown); see \S\ref{app:sec:norms} for the spectrum estimation with NeuralSVD based on function norms.
In contrast, one cannot distinguish whether learned eigenfunctions are meaningful or not only based on the learned eigenvalues from NeuralEF.

\begin{figure*}
    \centering
    \includegraphics[width=.9\textwidth]{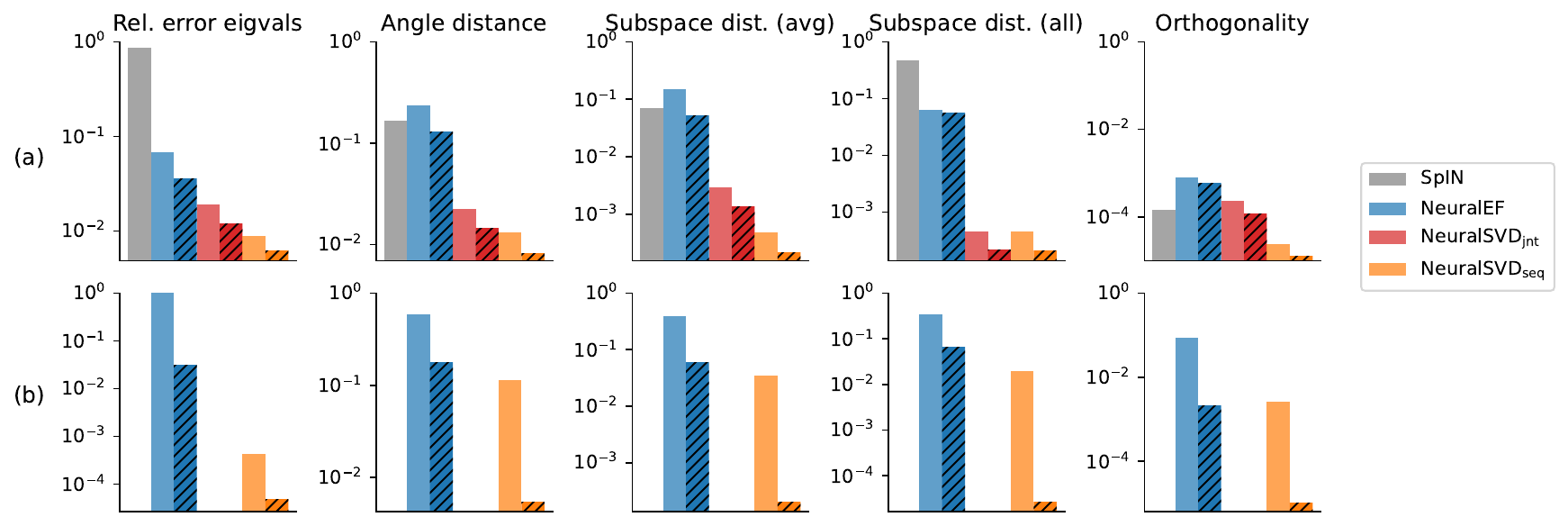}
    \caption{Summary of quantitative evaluations for solving TISEs: (a) 2D hydrogen atom; (b) 2D harmonic oscillator.
    Non-hatched, light-colored bars represent a batch size of 128, and hatched bars indicate 512.
    The definitions of reported measures are given in \S\ref{app:sec:def_measures}.
    }
    \label{fig:pde_summary_bar_full}
\end{figure*}

\subsection{Data-Dependent Operators}
\label{sec:exp:sketchy}
Beyond analytical operators, we can also consider a special type of \emph{data-dependent} kernels.
Given a joint distribution $p(x,y)$, we define 
\[k(x,y)\defeq \frac{p(x,y)}{p(x)p(y)}\] which we call the \emph{canonical dependence kernel} (CDK).
Although the CDK cannot be explicitly evaluated, it naturally defines the similarity between $x$ and $y$ based on their joint distribution, and thus can better capture the statistical relationship than a fixed, analytical kernel.
Note that its induced integral kernel operator is the conditional expectation operator, \ie $(\Kop g)(x) = \E[g(Y)|X=x]$ and $(\Kop^* f)(y) = \E[f(X)|Y=y]$, where $\Kop^*$ denotes the \emph{adjoint} of $\Kop$.

CDK appears and plays a central role in several statistics and machine learning applications, and various connections of CDK to the existing literature such as Hirschfeld--Gebelein--R\'{e}nyi (HGR) maximal correlation~\citep{Hirschfeld1935,Gebelein1941,Renyi1959}
are discussed in \S\ref{app:sec:related_work}.

One special property of CDK is that we can compute the objective function using paired samples, even though we do not know the kernel value ${p(x,y)}/{(p(x)p(y))}$ in general.
That is, the ``operator term'' $\langle g_\ell | \Kop f_\ell \rangle$ can be computed as
\begin{align}
\label{eq:oneshot_formula_max_corr}
\langle g_\ell | \Kop f_\ell \rangle
= \E_{p(x,y)}[f_\ell(x)g_\ell(y)],
\end{align}
where we change the measure $p(x)p(y)$ with $p(x,y)$ by the definition of $k(x,y)$.
Since the first singular functions are always constant functions, we can simply replace $\fv(x)^\intercal\gv(y)$ with $1+\fv(x)^\intercal\gv(y)$ to exclude the trivial mode,
so that we can recover the second singular functions and on. 
\ifforreview
We note in passing that it is equivalent to decomposing kernel $\frac{p(x,y)}{p(x)p(y)}-1$.
\else
We note in passing that it is equivalent to decomposing kernel $\frac{p(x,y)}{p(x)p(y)}-1$,
which is the convention used in a line of literature; see, \eg \citep{Huang--Makur--Wornell--Zheng2019,Xu--Zheng2023}.
\fi

\begin{table*}[t]
    \centering
    \caption{Evaluation of the ZS-SBIR task with the Sketchy Extended dataset~\citep{Sketchy2016,SketchyExtended2017}.
    We note that the two baselines require ($\bast$) a generative model, while NeuralSVD can learn representations directly without such.}
    \begin{small}
    \begin{tabular}{r c c c c c c} 
    \toprule    
        \textbf{Method} 
        & \textbf{Ext. knowledge}
        & \textbf{Gen. model}
        & \textbf{Structured}
        &  \textbf{P@100} & \textbf{mAP} & \textbf{Split}  \\
     \midrule
        LCALE~\citep{Lin--Xu--Gao--Wang--Shen2020} 
        & Word embeddings 
        & $\bast$
        & \xmark
        & 0.583 & 0.476 & 1\\
        IIAE~\citep{Hwang--Kim--Hong--Kim2020IIAE} 
        &  
        & $\bast$
        & \xmark
        & 0.659 & 0.573 & 1\\
    \midrule
        \multirow{2}{*}{
        \nsvdjnt{}
        } 
        & \multirow{2}{*}{}
        & \multirow{2}{*}{}
        & \multirow{2}{*}{\cmark}
        & {${\bf 0.670}_{\pm0.010}$} & ${\bf 0.581}_{\pm0.008}$ & 1\\ 
        & \multirow{2}{*}{}
        & \multirow{2}{*}{}
        & & {${\bf 0.724}_{\pm0.008}$}  & 
        {${\bf 0.641}_{\pm0.008}$} & 2\\
    \bottomrule
    \end{tabular}
    \end{small}
    \label{tab:sketchy}
    \vspace{-0.2em}
\end{table*}

\vspace{0.5em}\noindent\textbf{Application: Cross-Domain Retrieval.}
One natural application of the CDK is in the cross-domain retrieval problem. 
Specifically, here we consider the \emph{zero-shot sketch-based image retrieval} (ZS-SBIR) task proposed by \citet{Yelamarthi--Reddy--Mishra--Mittal2018}. The goal is to construct a good model that retrieves relevant photos $y_i$'s from a given query sketch $x$, only using a training set with no overlapping classes in the test set (hence called \emph{zero-shot}).

To obtain coembeddings of sketches and photos from the CDK framework, we define a natural joint distribution $p(x,y)$ for sketch $x$ and photo $y$, by picking a random pair of $(x,y)$ from the same class. 
Formally, the joint distribution is 
defined as $p(x,y)=\E_{p(c)}[p(x|c)p(y|c)]$, where $p(c)$ denotes the class distribution, and $p(x|c)$ and $p(y|c)$ the class-conditional sketch and photo distributions, respectively.
We emphasize that the resulting joint distribution is \emph{asymmetric}, since $x$ and $y$ are two different modalities, and thus the existing frameworks, such as SpIN or NeuralEF cannot be directly applied.
We also note that the matrix approach, which computes the empirical CDK matrix and then performs SVD, is infeasible, as density ratio estimation for constructing the kernel matrix is nontrivial in the high-dimensional space.
In sharp contrast, we can learn to decompose the CDK $k(x,y)=\frac{p(x,y)}{p(x)p(y)} \approx 1 + \fv(x)^\intercal \gv(y)$ directly with NeuralSVD. 

After learning the functions $\fv$ and $\gv$, for a given query $x$, we can retrieve based on the highest inner-product $\fv(x)^\intercal\gv(y)$ from $y\in\{y_1,\ldots,y_N\}$. 
This approach has a natural probabilistic interpretation: ``retrieve $y$, if $y$ is more likely to appear together than independently, \ie $p(x,y)\gg p(x)p(y)$''.
In addition to the interpretable retrieval scheme, the retrieval system can benefit from the learned spectral structure. That is, when successfully learned, 
the NeuralSVD representations are ideally stacks of \emph{ordered} top-$L$ singular functions of the CDK.
The representations can thus be called \emph{structured} in the sense that the coordinates of representations are ordered by the associated singular values, and also each coordinate encodes exclusive information since different coordinates are constructed so as to be effectively orthogonal.
We can thus potentially reduce the dimensionality of the embedding, by keeping only informative coordinates.

\vspace{0.5em}\noindent\textbf{Experiment.}
We aimed to learn $L=512$ singular functions, parameterizing them by a single network. 
We followed the standard training setup in the literature~\citep{Hwang--Kim--Hong--Kim2020IIAE}.
We report the Precision@100 (P@100) and mean average precision (mAP) scores on the two test splits in the literature; we defer the definition of these metrics to \S\ref{app:sec:cdk_exp_detail}.
We empirically found that \nsvdseq{} performed much worse than \nsvdjnt{} as discussed in Remark~\ref{rem:which_version_nesting}, only achieving Precision@100 around 0.2. Hence, we only report the result from \nsvdjnt{}.
Since SpIN and NeuralEF are not directly applicable to asymmetric kernels, we do not include them in the comparison.

Table~\ref{tab:sketchy} summarizes the evaluation. It shows that the CDK-based retrieval learned by NeuralSVD, albeit simple, can outperform the state-of-the-art representation learning methods based on generative models, including a method that incorporates additional knowledge.

\begin{figure}[t]
\centering
\includegraphics[width=.4\textwidth]
{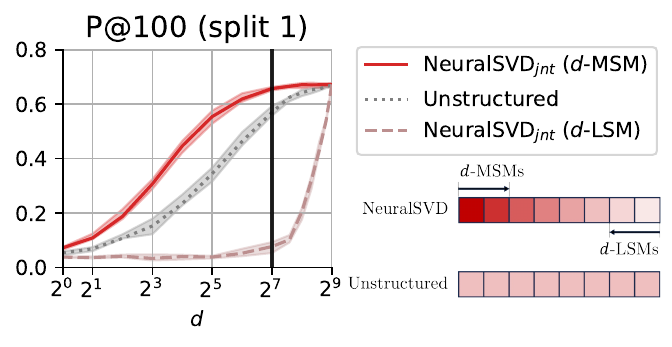}
\vspace{-.5em}
\caption{P@100 performance of NeuralSVD on ZS-SBIR task, with varying dimensions.
NeuralSVD can achieve full performance only with one-quarter ($2^7=128$) of the full dimensions, sweeping from the most significant modes.}
\label{fig:sketchy}\vspace{-1em}
\end{figure}

\bluet{As alluded to above, moreover, we can demonstrate that NeuralSVD learns \emph{structured} representations, while the baselines only learn ``unstructured'' ones.}
To illustrate, we verify that most information relevant for retrieval is concentrated around the top modes.
To illustrate this, we repeated the following evaluation with 10 random initializations and report the summary in Fig.~\ref{fig:sketchy}.
First, from the NeuralSVD representations of $2^9=512$ dimensions, we used the $d$ most significant modes ($d$-MSMs) for $d\in\{1,2,2^2,\ldots, 2^9\}$, and evaluated the retrieval performance based on the similarity measure $\fv_{[d]}(x)^\intercal\gv_{[d]}(y)$. 
The performance rapidly grows as the dimensionality $d$ gets large, and the best is almost achieved at about $2^7=128$, which is only a quarter of the full dimension; see the 
red
lines (NeuralSVD) and the vertical 
black
lines.
Also, we can empirically validate that the learned representations are almost perfectly orthogonal.

As a further investigation, we consider two additional baselines.
First, from the NeuralSVD representations, we evaluated the retrieval performance with the $d$ least significant modes ($d$-LSMs) for $d\in\{1,2,2^2,\ldots, 2^9\}$; see pink, dashed lines labeled with ``NeuralSVD ($d$-LSM)''.
The retrieval performance is very poor even when using the bottom 128 dimensions, which indicates that the LSMs do not encode much information.
Second, we trained an \emph{unstructured} embedding by training the same network with the LoRA objective without nesting, so that the network only learns the top-512 subspace of CDK; see gray, dotted lines labeled with ``Unstructured''.
As expected, its retrieval performance lies in the middle of NeuralSVD (MSM) and NeuralSVD (LSM). 
Hence, we can conclude that the learned representations with NeuralSVD are well-structured and effectively encode the information in a compact manner.

\begin{remark}[Impact of imperfect orthogonality]
Since our NeuralSVD framework only implicitly promotes the orthogonality without hard constraints, the learned singular-functions may only exhibit orthogonality to each other in an approximate manner, and
such deviations from orthogonality may impact the performance in a downstream task. In the PDE example, if the goal is to find accurate eigenvalues (i.e., eigenenergies of the system), then slightly imperfect orthogonality across non-degenerate modes may result in slightly less accurate eigenvalue estimates. In the representation learning example, if the retrieval performance is the only criterion for the quality of representations, slightly imperfect orthogonality would result in “slightly less structured” representations in that different coordinates might share some redundant information, which will impact the ``compressibility'' of the representations. All in all, imperfect orthogonality could affect different tasks differently, but we provide an empirical showcase that almost perfect orthogonality can be guaranteed in the present examples.
\end{remark}

\section{Concluding Remarks}
\label{sec:discussion}
In this paper, we proposed NeuralSVD, a new optimization framework for learning parametric singular- or eigen-functions of a linear operator via NestedLoRA. 
Given the efficient unconstrained optimization framework, practitioners can focus on selecting the most suitable parametric functions (or good architectures) and optimization algorithms to meet the practical requirements of their specific problems.
We could potentially extend the applicability of the existing classical algorithms based on SVD/EVD in various fields, \eg quantum chemistry~\citep{Hermann--Schatzle--Noe2020,Pfau--Spencer--Matthews2020} or spectral embedding methods~\citep{Shi--Malik2000,Belkin--Niyogi2003} for large-scale, high-dimensional data, combined with the use of powerful neural networks.

\vspace{.5em}\noindent\textbf{Limitations and Future Directions.}
We conclude with two important limitations and future directions to further advance the applicability of the parametric approach.
\begin{itemize}
\item First, the parametric approach is less explored than the nonparametric approach. The challenge is to understand when a large network can approximate a given operator and to determine an optimization algorithm that converges to the desired global optimizer, such as NestedLoRA. Addressing this gap and providing performance guarantees is a valuable research direction.
\item Second, users of the parametric approach must choose an appropriate function and optimization hyperparameters. Our investigation has shown the effectiveness of simple MLP architectures and specific hyperparameters in our examples. However, for larger applications, scalability challenges may require more sophisticated architectures and fine-tuning. We advocate for future research to design effective network architectures tailored to specific operators and tasks.
\end{itemize}

\newpage
\section*{Acknowledgements}
The authors appreciate insightful feedback and helpful suggestions from anonymous
reviewers to improve earlier versions of the manuscript.
The authors also acknowledge the MIT SuperCloud and Lincoln Laboratory Supercomputing Center for providing HPC resources that have contributed to the research results reported within this paper.
JJR and GWW were supported, in part, by the MIT-IBM Watson AI Lab under Agreement No. W1771646, and by MIT Lincoln Laboratory.
HSME, XX, and LZ were supported by the Office of Naval Research (ONR) under grant N00014-19-1-2621.

\section*{Impact Statement}
This paper presents work whose goal is to advance the field of Machine Learning. There are many potential societal consequences of our work, none of which we feel must be specifically highlighted here.

\bibliographystyle{icml2024}
\bibliography{ref}

\newpage
\appendix
\onecolumn
\addtocontents{toc}{\protect\StartAppendixEntries}
\listofatoc
\section{On Standard Linear Algebra Techniques}
\subsection{Empirical SVD and Nystr\"om Method}
\label{app:sec:nystrom}
A standard variational characterization of SVD is based on the following sequence of optimization problems:
\begin{align}\label{eq:svd_char}
\begin{aligned}
\underset{\phit_\ell\in\Fc,\psit_\ell\in\Gc}{\text{maximize~~}} & 
\langle \psit_\ell | \Top \phit_\ell\rangle,\\
\text{subject~to~~} &
\langle \phit_i | \phit_\ell\rangle 
=\langle \psit_i | \psit_\ell\rangle
=\d_{i\ell} \quad\forall i\in [\ell]\bluet{\defeq \{1,\ldots,\ell\}}.
\end{aligned}
\end{align}
If $\Top$ is compact and the previous $(\ell-1)$ pairs of functions $\{(\phit_i,\psit_i)\}_{i\in [\ell-1]}$ are the top-$(\ell-1)$ singular functions, then
the maximum value of the $\ell$-th problem, is attained by the $\ell$-th singular functions $(\phi_\ell,\psi_\ell)$~\citep[Proposition~A.2.8]{Bolla2013}.

While the notion of SVD and its variational characterization are mathematically well defined, we cannot solve the infinite-dimensional problem \eqref{eq:svd_char} directly in general, except a very few cases with known closed-form solutions.
Hence, in practice, a common approach is to perform the SVD of an \emph{empirical kernel matrix} induced by finite points (samples, in learning scenarios). 
That is, given $x_1,\ldots,x_M\sim p(x)$ and $y_1,\ldots,y_N\sim p(y)$, we define the empirical kernel matrix $\hat{\Km}\in \Real^{M\times N}$ as $(\hat{\Km})_{ij}\defeq k(x_i,y_j)$.
Suppose we perform the (matrix) SVD of $\hat{\Km}/\sqrt{MN}$ and obtain the top-$L$ left- and right-singular vectors 
$\hat{\Um}=[\hat{\uv}_1,\ldots,\hat{\uv}_L] \in \Real^{M\times L}$ and
$\hat{\Vm}=[\hat{\vv}_1,\ldots,\hat{\vv}_L]\in \Real^{N\times L}$ (normalized as $\hat{\Um}^\intercal \hat{\Um}=M\Imatrix$ and $\hat{\Vm}^\intercal \hat{\Vm}=N\Imatrix$, \bluet{where $\Imatrix$ denotes the identity matrix}) with the top-$L$ singular values $\hat{\sigma}_1\ge \ldots\ge \hat{\sigma}_L\ge 0$.
Then, for each $\ell$, $\hat{\uv}_\ell$ and $\hat{\vv}_\ell$ approximate the evaluation of $\phi_\ell$ and $\psi_\ell$ at training data, \ie
\[
\hat{\uv}_\ell \approx [\phi_\ell(x_1),\ldots,\phi_\ell(x_M)]^\intercal, \quad
\hat{\vv}_\ell \approx [\psi_\ell(y_1),\ldots,\psi_\ell(y_N)]^\intercal,
\quad \text{and}\quad\hat{\sigma}_\ell \approx \sigma_\ell.
\]
Hence, for $\ell\ge 1$ with $\hat{\sigma}_\ell>0$, the $\ell$-th left-singular function at $x$ can be estimated as
\begin{align}
\label{eq:nystrom}
\hat{\phi}_{\ell}(x)
\defeq 
\frac{\hat{\sigma}_\ell^{-1}}{N}\sum_{j=1}^N 
k(x,y_j) (\hat{\vv}_\ell)_{j},
\end{align}
which is a finite-sample approximation of the relation $\phi_\ell(x)={\sigma_\ell^{-1}} (K \psi_\ell)(x)$.
This is often referred to as the \emph{Nystr\"om method}; see, \eg \citep{Williams--Seeger2000,Bengio--Delalleau--Le-Roux--Paiement--Vincent--Ouimet2004}.

Performing SVD of the kernel matrix $\Km$ can be viewed as solving \eqref{eq:svd_char} with finite samples in the nonparametric limit.
The sample SVD approach is limited, however, due to its memory and computational complexity.
The time complexity of full SVD is $O(\min\{MN^2,M^2N\})$ not scalable, but there exist iterative subspace methods that can perform top-$L$ SVD in an efficient way.
Note, however, that the data matrix should be stored in memory to run standard SVD algorithms, which may not be feasible for large-scale data.
Moreover, while the query complexity $O(N)$ or $O(M)$ of the Nystr\"om method could be reduced by choosing a subset of training data, the challenge posed by the curse of dimensionality can potentially undermine the reliability of the Monte Carlo approximation~\eqref{eq:nystrom} as an estimator.

\subsection{On the Effectiveness of Nesting vs. the Rayleigh--Ritz Method}
\label{app:sec:rayleigh_ritz}
One may question the advantages of learning the ordered eigenfunctions via nesting compared to first learning the subspace and then determining the order within the subspace using the \emph{Rayleigh--Ritz method} in numerical linear algebra, which is a numerical algorithm to approximate eigenvalues~\citep{Trefethen--Bau2022}. 
The idea of Rayleigh--Ritz is to use an orthonormal basis of some smaller-dimensional subspace and solve the surrogate eigenvalue problem of smaller dimension projected on the subspace.
The quality of the Rayleigh--Ritz approximation depends on the user-defined orthonormal basis $\{|\phi_j\rangle\}_{j=1}^d$. 
That is, as the basis better captures the desired eigenmodes of the target operator, the approximation becomes more accurate.
Otherwise, for example, if an eigenmode is orthogonal to the subspace of the eigenbasis, it cannot be found by this procedure.
For completeness, we describe the procedure at the end of this section.

Given this standard tool, one may ask whether it is necessary to learn the ordered singular functions as done by NeuralSVD, SpIN, and NeuralEF.
Instead, since by minimizing LoRA objective we can approximately learn the top-$L$ eigensubspace (Theorem~\ref{thm:subspace}), 
one can consider applying Rayleigh--Ritz with the learned functions trained by LoRA.
Though the idea is valid and the full EVD of $L\times L$ matrix in Rayleigh--Ritz would be virtually at no additional cost, we remark that the two-stage procedure has several drawbacks compared to the direct approach with NeuralSVD.
First, note that the learned functions with the LoRA objective are necessarily orthogonal and Gram--Schmidt process should be applied for obtaining the orthonormal basis before Rayleigh--Ritz.
Note, however, that Gram--Schmidt becomes nontrivial in function spaces, as we need to compute the inner products and norms of functions at each step. 
Moreover, computing the inner products $\langle \phi_i|\Tc \phi_j\rangle$ to compute the reduced operator as described below may introduce an additional estimation error.

More crucially, we empirically verified that \nsvdseq{} or \nsvdjnt{} can attain lower subspace distance than that learned by LoRA (without nesting), while being able to correctly ordered orthogonal eigenbasis simultaneously. Since the quality of Rayleigh--Ritz is limited by the quality of the given subspace, if the learned subspace has lower quality, the outcome must be worse.
For example, in the 2D hydrogen atom experiment, we observed that the subspace distance over the 16 eigenmodes was $3.56\times 10^{-4}${\scriptsize${\pm 6.60\times 10^{-5}}$} with LoRA, while $2.12\times 10^{-4}${\scriptsize${\pm 2.09\times 10^{-5}}$} and $2.06\times 10^{-4}${\scriptsize${\pm 1.91\times 10^{-5}}$} were attained by \nsvdjnt{} and \nsvdseq{}, respectively; see Fig.~\ref{fig:pde_summary_bar_full}(a).
Since nesting does not increase complexity compared to the non-nested case via its efficient gradient implementation with masking, we argue that NeuralSVD can be more efficient than the two-stage approach.

\paragraph{Rayleigh--Ritz for Operator EVD.}
For the ease of exposition, here we describe the procedure for an operator eigenvalue problem.
Given a self-adjoint operator $\Top$, suppose that we wish to solve an eigenvalue problem
\[
\Top|\psi\rangle = \lambda|\psi\rangle.
\]
Since the problem may be hard to solve directly, the Rayleigh--Ritz method assumes that a set of orthonormal functions $\{|\phi_1\rangle,\ldots,|\phi_d\rangle\}$ for some $d\ge 1$, preferably $d\ll N$, and define $\Bm\in\Real^{d\times d}$ such that $\Bm_{ij} \defeq \langle \phi_i|\Tc\phi_j\rangle$.
Then, we solve the eigenvalue problem 
\[
\Bm\yv = \mu\yv.
\]
Given an eigenpair $(\mu_i,\yv_i)$, we compute the Ritz function $|\tilde{\psi}_i\rangle\defeq \sum_{j=1}^d y_j |\phi_j\rangle$,
and set the Ritz value $\tilde{\lambda}_i\defeq \mu_i$.
The output of the Rayleigh--Ritz method are the Ritz pairs $\{(\tilde{\lambda}_i,\tilde{\psi}_i)\}_{i=1}^d$.

\subsection{Comparison to a Nonparametric Approach}
\label{app:sec:lobpcg}
A reader familiar with numerical linear algebra literature may wonder how the parametric approach is compared to the standard techniques.
To this end, as a quick comparison, we performed the following baseline experiment, with one of the standard matrix-free techniques called
``Locally Optimal Block Preconditioned Conjugate Gradient'' (LOBPCG) for finding top-$L$ eigenvalues of large matrices~\citep{Knyazev2001,Knyazev2017}.
To learn the first $L=16$ eigenstates of the 2D hydrogen atom, we consider a truncated domain $[-50,50]$ and discretize each axis by $N$ grid points. We then perform the top-$L$ EVD of the discretized Hamiltonian matrix of size $N\times N$ using LOBPCG (using the PyTorch functionality \texttt{torch.lobpcg}). The result is summarized in Fig.~\ref{fig:lobpcg}. In the first panel, we present the relative errors in the estimated eigenvalues in parallel to Fig.~\ref{fig:pde_summary_bar_full}(a). 
In the second panel, the blue line summarizes the average absolute relative error for each $N$. 
The accuracy improves as $N$ becomes larger as expected in general, but we observe that the quality of estimates of latter eigenvalues become worse with $N=1600$ than with $N=800$.
Compared to the best result obtained by \nsvdseq{}(512) (indicated by the red dashed horizontal line), this naive approach may take substantially more time to achieve comparable accuracy as it may not scale well as $N$ increases as shown in the third panel. This briefly showcases a possible advantage of NeuralSVD (or the parametric approach at large) over the matrix-based approach.

\begin{figure}[ht]
\centering
\includegraphics[width=.6\textwidth]{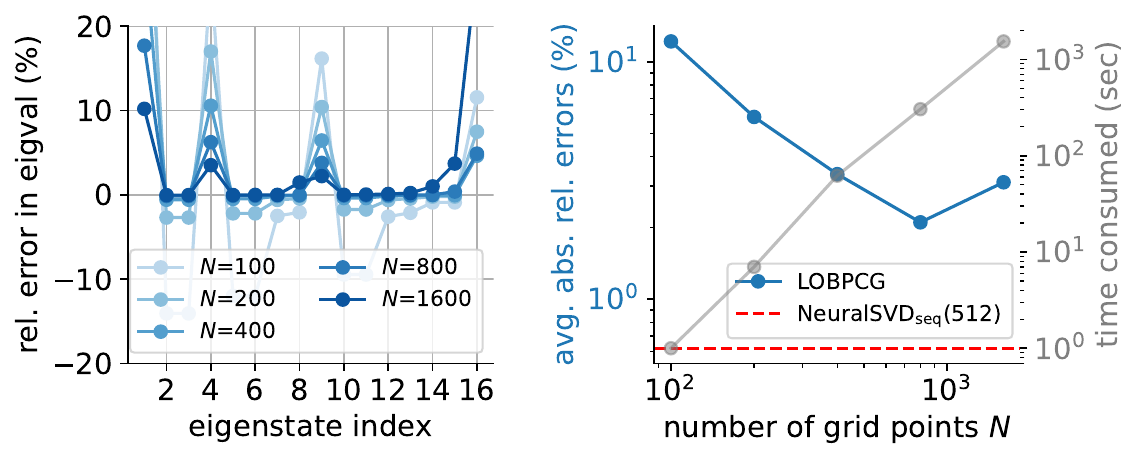}
\caption{Performance of LOBPCG 2D hydrogen experiment.}
\label{fig:lobpcg}
\end{figure}

We remark, however, some caveats in this comparison. 
First, the LOBPCG implementation we used here might not be fully optimized and there could exist a version that exhibits better scalability. The runtime could be also drastically reduced by using GPU or parallel machines as such numerical linear algebra algorithms are known to be very well optimized for such resources, while the current experiment was run on a CPU machine. Second, the discretization is rather naive and a more sophisticated discretization with some choice of orthonormal basis could lead to a better solution. Third, the relatively large error in the first eigenvalue estimate is seemingly due to the non-differentiable cusp of the first eigenfunction, and thus the discretization approach could behave better for other examples.

\section{Related Work}
\label{app:sec:related_work}
\subsection{General Literature Review}
\subsubsection{Low-Rank Approximation}
The theory of low-rank approximation was initially developed to solve partial differential equations (PDEs), including \citet{Schmidt1907}'s work; see, \eg \citep{Stewart2011}.
A special case for finite-dimensional matrices was independently discovered later by  \citet{Eckart--Young1936} and \citet{Mirsky1960}, which are perhaps better known in the literature. We refer an interested reader to \citep{Stewart1993} for detailed historical remarks.
For matrices, \citet{Mirsky1960} extended the low-rank approximation theory to any unitarily invariant norms.
While it would be interesting to extend the proposed framework in the current paper for other norms, they do not seem to easily admit an optimizable objective function.

\subsubsection{Canonical Dependence Kernels}
Interestingly, there exists a rich literature on decomposing canonical dependence kernels (CDK); see \S\ref{sec:exp:sketchy}.
The CDK has a close relationship to the Hirschfeld--Gebelein--R\'{e}nyi (HGR) maximal correlation~\citep{Hirschfeld1935,Gebelein1941,Renyi1959}.
Note that the first singular functions are trivially constant functions, and the corresponding singular value $\sigma_1$ is always 1.
When $L=2$, the second singular value is known as the HGR maximal correlation. 
In general, the optimization problem can be understood as the high-dimensional extension of the maximal correlation; for given $L\ge 2$, the optimal functions $\phib^\star$ and $\psib^\star$ are the optimal $L$-dimensional projections of $x\sim p(x)$ and $y\sim p(y)$ that are maximally correlated.
The CDK plays an important role in learning applications and has been frequently redeveloped, bearing different names, \eg correspondence analysis~\citep{Greenacre1984} and \emph{principal inertia components}~\citep{Hsu--Salamatian--Calmon2019,Hsu--Salamatian--Calmon2022} for finite alphabets, the contrastive kernel~\citep{HaoChen--Wei--Gaidon--Ma2021,Deng--Shi--Zhu2022} and the pointwise dependence~\citep{Tsai--Wu--Salakhutdinov--Morency2020} in the self-supervised representation learning setup.

The nonnested objective $\loraobj(\fv,\gv)$ for CDK was proposed and studied in  \citet{Wang--Wu--Huang--Zheng--Xu--Zhang--Huang2019} and related works, \eg \citep{Xu--Huang--Zheng--Wornell2022}. 
The H-score was first introduced by \citet{Wang--Wu--Huang--Zheng--Xu--Zhang--Huang2019} who coined the term H-score (or Soft-HGR), for learning HGR maximal correlation functions with neural networks. 
It also appeared as a local approximation to log-loss of classification deep neural networks~\citep{Xu--Huang--Zheng--Wornell2022}.
We mention in passing that the nonnested objective has been recently proposed independently under the name of the \emph{spectral contrastive loss}~\citep{HaoChen--Wei--Gaidon--Ma2021}, specifically when the CDK is induced by the random augmentation from the standard self-supervised representation learning setup. 
A recent work~\citep{Hu--Principe2022} proposes to learn features of two modalities based on the EVD of the so-called cross density ratio, which can be equivalently understood as the CDK of a symmetrized joint distribution. 
This paper, however, also only aims to characterize the top-$L$ subspace without the structure. Their optimization problem is based on minimizing the log-determinant of a normalized autocorrelation function; compared to our LoRA objective, the resulting optimization inherently suffers from biased gradients, which may lead to issues in practice.

\subsubsection{Nesting}
The idea of \emph{joint} nesting was first introduced by \citet{Xu--Zheng2023} as a general construction to decompose multivariate dependence, which is equivalent to CDK in our terminology, for learning structured features; see the paper for more detailed discussion. 
The joint nesting proposed in this paper can be understood as an extension of the idea to general operators beyond CDK.
While the idea of sequential nesting with LoRA is new, we observe that it conceptually resembles the idea of a class of streaming PCA algorithms such as \citep{Sanger1989,Gemp--McWilliams--Vernade--Graepel2021}, in which the $(\ell+1)$-th eigenvector is updated under the assumption that the estimates for the first $\ell$ eigenvectors are accurate.

We note that a recent work~\citep{Kusupati--Bhatt--Rege--Wallingford--Sinha--Ramanujan--Howard-Snyder--Chen--Kakade--Jain--Farhadi2022} proposed learning a structured representation using a concept similar to the joint nesting technique introduced in our current paper. The method is referred to as \emph{Matryoshka Representation Learning} (MRL). 
The key difference in MRL is that it assumes a labeled image dataset and uses the multi-class softmax cross-entropy loss function as its constituent loss function for ``nesting''. Compared with MRL, the features learned by our CDK-based representation learning framework are interpretable as singular functions of the dependence kernel. 
This fundamental relation provides the learned features with theoretical guarantees, such as the uncorrelatedness of features. It is also worth noting that our features defined by the global minimizer of NestedLoRA$_{\mathsf{jnt}}$ is invariant to the choice of weights, while a different choice of such weights in MRL would characterize different features. Our framework is also not restricted to the supervised case, as demonstrated in the cross-domain retrieval example (\S\ref{sec:exp:sketchy}).

\subsubsection{Other Correlation Analysis Methods}
There exists another line of related literature in correlation analysis.
Deep canonical correlation analysis (DCCA)~\citep{Andrew--Arora--Bilmes--Livescu2013} can be understood as solving a restricted HGR maximal correlation problem, searching over a class of neural networks instead of all measurable functions. 
The DCCA objective function, however, requires a nontrivial optimization technique, 
does not recover the order, and it cannot be easily extended to find a large number of modes.
The correspondence-analysis neural network (CA-NN)~\citep{Hsu--Salamatian--Calmon2019} also aims to decompose the CDK based on a different optimization framework, but they involve the $L$-th Ky-Fan norm and the inversion of $L\times L$ matrix, which complicate the optimization procedure.
Instead of deploying neural networks, \citet{Michaeli--Wang--Livescu2016} proposed to decompose an empirical CDK matrix constructed by the Gaussian kernel density estimators and coined the method as nonparametric canonical correspondence analysis (NCCA).

\subsubsection{Neural-Network-Based Methods for Eigenvalue Problems}
As alluded to earlier, there exists a rather separate line of work on solving eigenvalue problems using neural networks for solving linear PDEs that are in the form of an eigenvalue problem (EVP) in the physics or scientific computing community.
Given the vastness of the literature and the rapid evolution of the field, providing a comprehensive overview is challenging. Nonetheless, we will emphasize key concepts and ideas.

\vspace{.5em}\noindent\textbf{Computational Physics Literature.}
The idea of using neural networks for solving PDEs which can be reduced to eigenvalue problems dates back to \citep{Lagaris--Likas--Fotiadis1997}, where an explicit Gram--Schmidt process was proposed to attain multiple eigenstates.
Unlike the methods in the machine learning literature, many recent works rely on minimizing the sum of residual losses, mostly in the form of $\|(\Top-\lambda_\ell\Iop)\phi_\ell\|$ where $\lambda_\ell$ needs be also optimized or estimated from $|\phi_\ell\rangle$, with regularization terms that penalize the normalization of and the orthogonality between the parametric functions; see 
\citep{Bar--Sochen2019,Ben-Shaul--Bar--Fishelov--Sochen2023,Li--Zhai--Chen2021,Zhang--Li--Schutte2022,Liu--Dou--He--Yang--Jiang2023,Wang--Xie2023,Guo--Ming2023,Mattheakis--Schleder--Larson--Kaxiras2022,Liu--Dou--He--Yang--Jiang2023,
Jin--Mattheakis--Protopapas2020,Jin--Mattheakis--Protopapas2022,Holliday--Lindner--Ditto2023}.
We note that NeuralSVD is distinct from these regularization-based approaches: while regularization-based approaches are often susceptible to the choice of regularization parameters, NeuralSVD, which utilizes nesting techniques,  characterizes the ordered eigenbasis as its global optimizer without tuning any hyperparameter in the objective functions.

Other approaches include: \citet{Han--Lu--Zhou2020} proposes a stochastic differential equation framework that can learn the first mode of an eigenvalue problem; \citet{Yang--Deng--Yang--He--Zhang2023} propose a way to use neural networks for power and inverse power methods; \citet{Li--Ying2021} proposed a semigroup method for high dimensional elliptic PDEs and eigenvalue problems using neural networks.
More broadly, there exist other deep-learning-based solvers for general PDEs beyond EVP PDEs such as deep Ritz method~\citep{Yu--Weinan2018}, deep Galerkin method~\citep{Sirignano--Spiliopoulos2018}, and Fourier neural operator~\citep{Li--Kovachki--Azizzadenesheli--Liu--Bhattacharya--Stuart--Anandkumar2021}.

\vspace{.5em}\noindent\textbf{Quantum Chemistry Literature.}
Quantum chemistry has witnessed rapid recent advancements in this particular direction.
While the main problem in quantum chemistry is to solve the TISE of a given electronic system, the problem size grows rapidly: the domain has $3N$ dimension with $N$ electrons, and thus the complexity of solving TISE exponentially blows up even with $N$ of a moderate size. 
Therefore, the development in this domain has been focused on developing a new neural network architecture (called neural network \emph{ansatzes}) that better embed physical inductive bias for more expressivity.
Representative works include \citep{Carleo--Troyer2017}, SchNet~\citep{Schutt--Kindermans--Sauceda--Chmiela--Tkatchenko--Muller2017}, Fermionic neural networks~\citep{Choo--Mezzacapo--Carleo2020}, FermiNet~\citep{Pfau--Spencer--Matthews2020}, PauliNet~\citep{Hermann--Schatzle--Noe2020}, and DeepErwin~\citep{Gerard--Scherbela--Marquetand--Grohs2022}; see a comprehensive, recent review paper~\citep{Hermann--Spencer--Choo--Mezzacapo--Foulkes--Pfau--Carleo--Noe2022} for the overview of the field.

In most, if not all, of the works, the quantum Monte Carlo (QMC), also known as variational Monte Carlo (VMC)~\citep{Cuzzocrea--Scemama--Briels--Moroni--Filippi2020}, has been used as the de facto. 
QMC is essentially a special way to minimize the Rayleigh quotient to obtain the ground state energy.
Until recently, most of the works focused on the ground state (\ie the first bottom mode); a few recent exceptions are \citep{Entwistle--Schatzle--Erdman--Hermann--Noe2023,Pfau--Axelrod--Sutterud--von-Glehn--Spencer2023}, which proposed variations of QMC for excited states.
We believe that applying the proposed NestedLoRA framework to quantum chemistry problem can be an exciting research direction. 

\subsubsection{Spectral Pollution}
We remark in passing that in the numerical linear algebra literature, there exists a phenomenon called \emph{spectral pollution}, which refers to spurious eigenvalues introduced by discretizing an infinite-dimensional operator with respect to a fixed orthonormal basis~\citep{Davies--Plum2004}, and it is an active research area to address the issue; see, \eg \citep{Colbrook--Horning--Townsend2021} for a recent attempt. 
In principle, as our framework directly optimizes parametric eigenfunctions to fit the underlying eigenfunctions, we would not encounter such an issue due to discretization, provided that the parametric functions are sufficiently expressive. In experiments, we also did not observe any spurious eigenvalues with any of the methods for parametric eigenfunctions for the hydrogen atom. 

\subsection{In-Depth Review of SpIN and NeuralEF}
\label{app:sec:details}

\begin{table*}[t]
    \centering
    \caption{Comparison with SpIN~\citep{Pfau--Petersen--Agarwal--Barrett--Stachenfeld2019} and NeuralEF~\citep{Deng--Shi--Zhu2022}. 
    }  
    \begin{small}
    \setlength\tabcolsep{3 pt}
    \begin{tabular}{l c c c}
    \toprule
         & \textbf{SpIN} & \textbf{NeuralEF} & \textbf{NeuralSVD}\\
    \midrule        
        Goal & EVD & EVD & SVD/EVD\\
    \midrule
        \makecell[l]{(a) To handle %
        orthonormality constraints} & \makecell[c]{%
        Cholesky decomposition} & function normalization & - \\
        \makecell[l]{(b) To remove bias %
        in gradient estimation} & \makecell[t]{bi-level optimization;\\
        need to store Jacobian} & large batch size & - \\
    \bottomrule
    \end{tabular}
    \end{small}
    \label{tab:comparison}
\end{table*}

\subsubsection{SpIN}
\label{app:sec:spin}
Suppose that we wish to learn the top-$L$ eigenpairs of a linear operator $\Top$.
For trial eigenfunctions $\hat{\phi}_1,\ldots,\hat{\phi}_L$, define two $L\times L$ matrices $\Sigma$ and $\Pi$, which we call the gram matrix and the quadratic form matrix, respectively, as follows:
\[
\Sigma_{\ell\ell'}\defeq \langle \hat{\phi}_\ell | \hat{\phi}_{\ell'}\rangle
\quad\text{and}\quad
\Pi_{\ell\ell'}\defeq \langle \hat{\phi}_\ell | \Top \hat{\phi}_{\ell'}\rangle.
\]
Based on the trace maximization framework, we can solve the following optimization problem:
\[
\maximize_{\hat{\phi}_1,\ldots,\hat{\phi}_L\in\Fc} \tr(\Sigma^{-1} \Pi).
\]
Let $\Sigma=\Lm \Lm^\intercal$ be the Cholesky decomposition of $\Sigma$, where $\Lm$ is a lower-triangular matrix. Define $\Lambda\defeq \Lm^{-1}\Pi\Lm^{-\intercal} \in\Real^{L\times L}$.
By the property of trace, we can write
\begin{align*}
\tr(\Sigma^{-1}\Pi)
=\tr((\Lm\Lm^\intercal)^{-1}\Pi)
&= \tr(\Lm^{-1}\Pi\Lm^{-\intercal})
=\tr(\Lambda)
=\sum_{\ell=1}^L \Lambda_{\ell\ell}.
\end{align*}

\paragraph{Optimization with Masked Gradient.}
The key idea behind the SpIN optimization framework is in the following lemma.
\begin{lemma}
For each $\ell=1,\ldots,L$, 
$\Lambda_{\ell\ell}$ is only a function of $\hat{\phi}_1,\ldots,\hat{\phi}_\ell$.
\end{lemma}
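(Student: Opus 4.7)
The plan is to exploit the lower-triangular structure of the Cholesky factor $\Lm$, combined with the ``nested'' property that the top-left $\ell\times\ell$ block of $\Lm$ is itself the Cholesky factor of the top-left $\ell\times\ell$ block of $\Sigma$.

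First, I would fix $\ell\in\{1,\ldots,L\}$ and denote by $\Sigma_{[\ell]}$, $\Pi_{[\ell]}$, $\Lm_{[\ell]}$ the leading principal $\ell\times\ell$ submatrices of $\Sigma,\Pi,\Lm$ respectively. Since the Cholesky factorization with positive diagonal entries is unique and the factorization $\Sigma = \Lm\Lm^\intercal$ restricts correctly to leading blocks when $\Lm$ is lower-triangular (explicitly, $\Sigma_{[\ell]} = \Lm_{[\ell]} \Lm_{[\ell]}^\intercal$), we conclude $\Lm_{[\ell]}$ is determined solely by $\Sigma_{[\ell]}$, which in turn is the Gram matrix of only $\hat\phi_1,\ldots,\hat\phi_\ell$.

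Next I would use the fact that $\Lm^{-1}$ is also lower-triangular, and its leading principal block satisfies $(\Lm^{-1})_{[\ell]} = (\Lm_{[\ell]})^{-1}$. Expanding
\[
\Lambda_{\ell\ell} = (\Lm^{-1}\Pi\Lm^{-\intercal})_{\ell\ell} = \sum_{i=1}^L \sum_{j=1}^L (\Lm^{-1})_{\ell i}\, \Pi_{ij}\, (\Lm^{-1})_{\ell j},
\]
the lower-triangularity of $\Lm^{-1}$ forces $(\Lm^{-1})_{\ell i}=0$ for $i>\ell$, so the sum collapses to
\[
\Lambda_{\ell\ell} = \sum_{i=1}^\ell \sum_{j=1}^\ell (\Lm^{-1})_{\ell i}\, \Pi_{ij}\, (\Lm^{-1})_{\ell j}.
\]
Every factor appearing is determined by the $\ell\times\ell$ leading blocks: $(\Lm^{-1})_{\ell i}$ for $i\le \ell$ is a function of $\Sigma_{[\ell]}$, and $\Pi_{ij}$ for $i,j\le \ell$ is a function of the inner products $\langle\hat\phi_i | \Top\hat\phi_j\rangle$ for $i,j\le \ell$. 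Both sets of quantities are functions only of $\hat\phi_1,\ldots,\hat\phi_\ell$, yielding the claim.

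There is no real obstacle; the entire argument rests on the elementary observation that Cholesky factorization respects leading principal submatrices. The only point requiring slight care is justifying $(\Lm^{-1})_{[\ell]} = (\Lm_{[\ell]})^{-1}$ for a lower-triangular matrix, which follows by block-inverting $\Lm = \begin{pmatrix}\Lm_{[\ell]} & 0 \\ \Lm_{21} & \Lm_{22}\end{pmatrix}$ or equivalently from the forward-substitution formula for $\Lm^{-1}$.
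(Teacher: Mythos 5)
Your proof is correct and follows the same approach as the paper's one-line argument, which simply invokes the triangular structure of $\Lm^{-1}$; you fill in the details the paper leaves implicit, in particular the nested property that the leading $\ell\times\ell$ block of $\Lm$ is the Cholesky factor of the leading $\ell\times\ell$ block of $\Sigma$, which is what makes the relevant entries of $\Lm^{-1}$ depend only on $\hat\phi_1,\ldots,\hat\phi_\ell$. (Incidentally, the paper's proof reads ``upper triangular property of $\Lm^{-1}$,'' which is a typo for ``lower triangular,'' since $\Lm$ is defined as the lower-triangular Cholesky factor.)
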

\begin{proof}
It immediately follows from the upper triangular property of $\Lm^{-1}$.
\end{proof}
Assuming that $\hat{\phi}_1,\ldots,\hat{\phi}_{\ell-1}$ learn the top-$(\ell-1)$ eigen-subspace, SpIN updates $\hat{\phi}_\ell$ to only maximize $\Lambda_{\ell\ell}$, \ie based on the gradient $\partial_{\hat{\phi}_\ell} \Lambda_{\ell\ell}$ for each $\ell$.
Once optimized, the learned functions can be orthogonalized by $\Lm^{-1} \hat{\phiv}$.
Let $\Lm_\Sigma$ denote the Cholesky factor for a matrix $\Sigma$ and define 
\[
\Am_{\Sigma,\Pi}\defeq \Lm_{\Sigma}^{-\intercal}\mathsf{triu}(\Lm_\Sigma^{-1}\Pi\Lm_\Sigma^{-\intercal}\diag(\Lm_{\Sigma})^{-1}).
\]
Then, the \emph{masked} gradient can be collectively written as
\begin{align}
\label{eq:spin_gradient}
-\tilde{\partial}_{\th}\tr(\Lambda)
&= -\E_{p(x)}\Bigl[(\Top\hat{\phiv})(X)^\intercal \Lm_{\Sigma}^{-1}\diag(\Lm_{\Sigma})^{-1} \frac{\partial \hat{\phiv}(X)}{\partial{\th}}\Bigr]
+ \E_{p(x)}\Bigl[\hat{\phiv}(X)^\intercal \Am_{\Sigma,\Pi}
\frac{\partial \hat{\phiv}(X)}{\partial{\th}}\Bigr].
\end{align}
See eq.~(25) of \citep{Pfau--Petersen--Agarwal--Barrett--Stachenfeld2019} for the original expression with derivation.
A naive estimator of this gradient with minibatch samples would be to plug in the empirical (unbiased) estimates of $\Sigma$ and $\Pi$, which are
\[
\hat{\Sigma}\defeq \hat{\E}_{p(x)}[\hat{\phiv}(X)\hat{\phiv}(X)^\intercal]
\quad\text{and}\quad
\hat{\Pi}\defeq \hat{\E}_{p(x)}[
\hat{\phiv}(X)(\Top\hat{\phiv})(X)^\intercal].
\]
Note, however, the resulting gradient estimate is biased, since $\Lm_{\Sigma}$, $\Lm_{\Sigma}^{-1}$, and $\Sigma^{-1}$ are not linear in $\Sigma$.

\paragraph{Bi-level Stochastic Optimization for Unbiased Gradient Estimates.}
To detour the issue with the biased gradient estimate, 
\citet{Pfau--Petersen--Agarwal--Barrett--Stachenfeld2019} proposed to plug-in exponentially weighted moving average (EWMA) of two statistics into the expression, which can be understood as an instance of a bi-level stochastic optimization procedure with unbiased gradient estimates.
To motivate the approach, 
we rewrite the second term of \eqref{eq:spin_gradient} as
\begin{align*}
\E_{p(x)}\Bigl[\hat{\phiv}(X)^\intercal \Am_{\Sigma,\Pi}
\frac{\partial \hat{\phiv}(X)}{\partial{\th}}\Bigr]
&= \tr\Bigl(\Am_{\Sigma,\Pi}\E_{p(x)}\Bigl[
\frac{\partial \hat{\phiv}(X)}{\partial{\th}}
\hat{\phiv}(X)^\intercal
\Bigr]
\Bigr).
\end{align*}
Based on the expression, we maintain the EWMAs of $\Sigma$ and $\E_{p(x)}\Bigl[
\frac{\partial \hat{\phiv}(X)}{\partial{\th}}
\hat{\phiv}(X)^\intercal
\Bigr]$, 
which are denoted as $\bar{\Sigma}$ and $\bar{\mathbf{J}}$,
and updated via minibatch samples as follows:
\begin{align}
\bar{\Sigma}&\gets \b\bar{\Sigma} + (1-\b) \hat{\Sigma},\label{eq:spin_cov_update}\\
\bar{\Jb}&\gets \b\bar{\Jb} + (1-\b) \hat{\E}_{p(x)}\Bigl[
\frac{\partial \hat{\phiv}(X)}{\partial{\th}}
\hat{\phiv}(X)^\intercal
\Bigr].\label{eq:spin_jac_update}
\end{align}
Here $\b\in[0,1]$ is the decay parameter for EWMA.
Now, given these statistics, we update the parameter $\th$ by the following gradient estimate with minibatch samples:
\begin{align*}
-\widehat{\tilde{\partial}}_{\th}\tr(\Lambda)
&= -\hat{\E}_{p(x)}\Bigl[(\Top\hat{\phiv})(X)^\intercal \Lm_{\bar{\Sigma}}^{-1}\diag(\Lm_{\bar{\Sigma}})^{-1} \frac{\partial \hat{\phiv}(X)}{\partial{\th}}\Bigr]
+ \tr(\Am_{\bar{\Sigma},\hat{\Pi}}\bar{\Jb}).
\end{align*}
Note that the randomness in the second term is in $\hat{\Pi}$ and the second term is linear in $\hat{\Pi}$.
After all, the estimate is unbiased given $\bar{\Sigma}$ and $\bar{\Jb}$.

\paragraph{Discussion.}
SpIN is a pioneering work, being the first parametric framework to perform the top-$L$ EVD of a self-adjoint operator with parametric eigenfunctions.
However, the derivation of the masked gradient is rather involved, and the resulting algorithm's complexity is not favorably scaling in $L$.
In terms of the computational complexity, the Cholesky decomposition step that takes $O(L^3)$ for each iteration is not scalable in $L$.
Also, due to the bi-level stochastic optimization for unbiased gradient estimates, SpIN needs to maintain a separate copy of the Jacobian~\eqref{eq:spin_jac_update}, which may consume significant memory with large networks.
The decay parameter in the bi-level stochastic optimization is another sensitive hyperparameter to be tuned in the framework.
Finally, we remark that the idea of masked gradient is similar to the sequential nesting, and thus when it is applied to a shared parameterization, it cannot guarantee a desired optimization behavior.

\iftrue
\paragraph{SpIN-X.}
There exists a follow-up work of SpIN that proposed an alternative optimization method with several practical optimization techniques~\citep{Wu--Wang--Perdikaris2023}.
As the paper does not coin a term for the proposed method, we call it SpIN-X here.
The proposed method is based on the following modified objective function
\[
\Lc = \frac{1}{L} \Bigl\{
-w_0 \sum_{\ell=1}^L a_\ell \Lambda_{\ell\ell} + \sum_{\ell=1}^L w_\ell \| (\Top-\Lambda_{\ell\ell}\Iop) \hat{\phi}_\ell\|^2
\Bigr\}.
\]
Here, the weights $w_0,\ldots,w_L$ are defined as $w_\ell\defeq (K_0+\ldots+K_L)/K_\ell$, where $K_\ell \defeq \mathsf{sg}(\|\partial_\th \Lc_{\ell}\|_2)$, and $\Lambda_{\ell\ell}$ are eigenvalues still obtained from the Cholesky decomposition steps.
Though the experimental results in \citep{Jin--Mattheakis--Protopapas2022} show improved results over SpIN, some optimization techniques such as balanced gradients are nontrivial to apply, and thus we do not include a comparison with this approach.
\fi

\subsubsection{NeuralEF}
\label{app:sec:neuralef}
In essence, NeuralEF~\citep{Deng--Shi--Zhang--Cui--Lu--Zhu2022} starts from the following characterization of eigenfunctions, which can be understood as a sequential version of \eqref{eq:evd_char_subspace}.
\begin{proposition}\label{prop:sequential}
Let $\Top\suchthat \Fc\to\Fc$ be a linear, self-adjoint operator, where $\Fc$ is a Hilbert space.
Given functions $\phit_1,\ldots,\phit_{\ell-1}\in \Fc$, consider the optimization problem
\begin{align*}
(P_\ell)~~~
\begin{aligned}
\underset{\substack{\phit_\ell\in \Fc}}{\text{maximize~~}} & 
\langle \phit_\ell | \Top\phit_\ell\rangle \\
\text{subject~to~~} &
\langle \phit_\ell | \phit_i\rangle=\d_{\ell i}
\quad \forall 1\le i\le \ell.
\end{aligned}
\end{align*}
If $\phit_1,\ldots,\phit_{\ell-1}$ are the top $\ell-1$ eigenfunctions of the operator $\Top$, 
then the solution $\phit_\ell$ of the optimization problem $(P_\ell)$ is the $\ell$-th eigenfunction.
\end{proposition}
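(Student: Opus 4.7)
The plan is to prove Proposition~\ref{prop:sequential} by reducing the constrained Rayleigh-quotient maximization to an explicit weighted-sum maximization in the eigenbasis of $\Top$. As a first step I would invoke the spectral theorem for $\Top$: under the compactness setting surrounding this proposition, there exists an orthonormal eigenbasis $\{\phi_i\}_{i\ge 1}$ of $\Fc$ with real eigenvalues $\lambda_1\ge \lambda_2\ge \cdots$. The hypothesis that $\phit_1,\ldots,\phit_{\ell-1}$ are the top $\ell-1$ eigenfunctions means that $\Span\{\phit_1,\ldots,\phit_{\ell-1}\}=\Span\{\phi_1,\ldots,\phi_{\ell-1}\}$, so that the orthogonality constraints $\langle\phit_\ell|\phit_i\rangle=0$ for $i<\ell$ are equivalent to $\langle\phit_\ell|\phi_i\rangle=0$ for $i<\ell$.

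Next, I would expand any feasible $\phit_\ell=\sum_{i\ge 1} c_i \phi_i$. The orthogonality constraints just reformulated force $c_i=0$ for $i<\ell$, leaving $\phit_\ell=\sum_{i\ge \ell} c_i\phi_i$, while the normalization constraint $\langle\phit_\ell|\phit_\ell\rangle=1$ becomes $\sum_{i\ge\ell}c_i^2=1$. Using the spectral decomposition $\Top=\sum_i \lambda_i |\phi_i\rangle\langle\phi_i|$, the objective then reduces to
\[
\langle \phit_\ell|\Top\phit_\ell\rangle \;=\; \sum_{i\ge\ell} \lambda_i c_i^2.
\]
This is a convex combination of $\{\lambda_i\}_{i\ge\ell}$, hence bounded above by $\lambda_\ell=\max_{i\ge\ell}\lambda_i$, with equality precisely when all the mass is concentrated on indices with $\lambda_i=\lambda_\ell$. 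Taking $c_\ell=1$ and $c_i=0$ for $i>\ell$ gives $\phit_\ell=\phi_\ell$ as a maximizer, so $\phi_\ell$ indeed solves $(P_\ell)$.

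The main obstacle, and the only real subtlety, is the treatment of degeneracy at the $\ell$-th eigenvalue: when $\lambda_\ell=\lambda_{\ell+1}=\cdots=\lambda_{\ell+k}$, the set of maximizers is the whole unit sphere inside the corresponding eigenspace, not just $\phi_\ell$. The statement should therefore be read up to this standard ambiguity, in line with the remarks in Sec.~\ref{sec:nesting} that degeneracies are recovered only up to an orthogonal basis of the degenerate eigensubspace. A secondary care-point is that the statement presupposes that a top-$\ell$ eigenbasis exists in the assumed order; this is automatic under compactness (eigenvalues can only accumulate at zero) and would require separate treatment for operators with continuous spectrum, which lies outside the scope of the surrounding framework.
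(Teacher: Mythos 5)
Your proof is correct and follows the standard variational (Courant--Fischer-type) argument: expand the trial function in the eigenbasis, observe that the orthogonality constraints annihilate the first $\ell-1$ coordinates, and bound the resulting convex combination $\sum_{i\ge\ell}\lambda_i c_i^2$ by $\lambda_\ell$. The paper does not spell out a proof of Proposition~\ref{prop:sequential} itself (it is treated as a known fact, essentially the sequential form of~\eqref{eq:svd_char}, with a pointer to standard references), but the ``informal proof'' it gives two paragraphs later for the relaxed problem $(P_\ell')$ uses exactly the same eigenbasis-expansion device you employ, so your argument is in the same spirit. Your remarks on degeneracy and on the need for a discrete spectrum are consistent with the paper's standing strict-spectral-gap convention and its restriction to compact (or shifted compact-like) operators.
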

To avoid the explicit orthogonality constraint, NeuralEF proposes to solve the following optimization problem, generalizing the formulation of EigenGame~\citep{Gemp--McWilliams--Vernade--Graepel2021} for operators: 
\begin{align}
\label{eq:neuralef_problem}
(P_\ell')~~~
\begin{aligned}
\underset{\substack{\phit_\ell\in \Fc}}{\text{minimize~~}} & 
\Lc_{\ell}(\phiv_{1:\ell})\defeq -\langle \phit_\ell | \Top\phit_\ell\rangle +\sum_{i=1}^{\ell-1} \frac{\langle \phit_\ell | \Top \phit_i\rangle}{\langle \phit_i | \Top \phit_i\rangle} \\
\text{subject~to~~} &
\langle \phit_\ell | \phit_\ell\rangle=1.
\end{aligned}
\end{align}
Replacing the constraints $\langle \phit_\ell | \phit_i\rangle=0$ as $\langle \phit_i |  \Top\phit_i\rangle^2=0$ for each $1\le i\le \ell-1$, we can view $(P_\ell')$ as a relaxed optimization problem of $(P_\ell)$. Here, $\langle \phit_i|\Top\phit_i\rangle^{-1}$ plays the role of a Lagrangian multiplier for the $i$-th constraint.
With this specific choice of weights, this partially unconstrained optimization problem has the same guarantee (Proposition~\ref{prop:sequential}) for $(P_\ell)$ as follows:
\begin{theorem}
If $\phit_1,\ldots,\phit_{\ell-1}$ are the top $\ell-1$ eigenfunctions $\phi_1,\ldots,\phi_{\ell-1}$ of $\Top$, 
then the solution $\phit_\ell$ of the optimization problem $(P_\ell')$ is the $\ell$-the eigenfunction.
\end{theorem}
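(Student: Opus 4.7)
The plan is to reduce the optimization problem $(P_\ell')$ to a finite-dimensional quadratic optimization on the unit sphere in coefficient space, which can then be solved by the spectral theorem. Under the hypothesis that $\phit_i = \phi_i$ for $i = 1, \ldots, \ell-1$, the eigenvalue relation $\Top \phi_i = \lambda_i \phi_i$ yields $\langle \phit_i | \Top \phit_i \rangle = \lambda_i$ and $\langle \phit_\ell | \Top \phit_i \rangle = \lambda_i \langle \phit_\ell | \phi_i \rangle$. Substituting these into the penalty term collapses the quotient to a clean quadratic in $\langle \phit_\ell | \phi_i \rangle$. (Here I am reading the penalty numerator as $\langle \phit_\ell | \Top \phit_i \rangle^2$, consistent with the EigenGame-style derivation alluded to just above the statement; otherwise the ``relaxation'' of the orthogonality constraints does not make dimensional sense.)

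Next, I would invoke the spectral theorem for compact self-adjoint $\Top$ and expand the trial function in the eigenbasis: $\phit_\ell = \sum_{j \ge 1} c_j \phi_j$, so that the unit-norm constraint becomes $\sum_j c_j^2 = 1$. Using orthonormality, $\langle \phit_\ell | \Top \phit_\ell \rangle = \sum_j \lambda_j c_j^2$ and $\langle \phit_\ell | \phi_i \rangle = c_i$. Substituting into $\Lc_\ell$, the penalty $\sum_{i=1}^{\ell-1} \lambda_i c_i^2$ exactly cancels the first $\ell-1$ terms of the Rayleigh-sum, giving the compact identity
\[
\Lc_\ell(\phit_\ell) \;=\; -\sum_{j \ge \ell} \lambda_j \, c_j^2.
\]

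Minimizing this expression subject to $\sum_{j \ge 1} c_j^2 = 1$ is equivalent to maximizing $\sum_{j \ge \ell} \lambda_j c_j^2$ over the same constraint. Since any mass placed on $c_j$ with $j < \ell$ contributes $0$ to the objective while consuming normalization budget, the optimizer must concentrate all mass on $\{j \ge \ell\}$; and by the strict spectral gap $\lambda_\ell > \lambda_{\ell+1}$, the unique (up to sign) maximizer is $c_\ell = \pm 1$, $c_j = 0$ otherwise. This gives $\phit_\ell^\star = \pm \phi_\ell$, as claimed.

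The main obstacle I anticipate is less in the algebra and more in identifying the intended reading of the objective together with verifying the spectral-theoretic setting: one must assume $\Top$ is compact (or at least has a complete orthonormal eigenbasis with discrete spectrum bounded above) so that the eigenbasis expansion is well-defined, and one must assume a strict spectral gap $\lambda_{\ell-1} > \lambda_\ell > \lambda_{\ell+1}$ to guarantee uniqueness of the maximizer in the restricted subspace. If the eigenvalues $\lambda_i$ can be negative or zero, one additionally needs $\lambda_i > 0$ for $i < \ell$ so the denominators $\langle \phit_i | \Top \phit_i \rangle = \lambda_i$ in the penalty are nonzero; in the context of EigenGame/NeuralEF this is implicit since one applies the method to PSD operators or to the negated operator for bottom modes.
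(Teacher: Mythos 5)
Your proof is correct and follows the same route as the paper's: expand the trial function in the eigenbasis of $\Top$, observe the telescoping cancellation of the first $\ell-1$ Rayleigh terms against the penalty, and conclude via the spectral gap. You even correctly read the numerator in $(P_\ell')$ as $\langle\phit_\ell|\Top\phit_i\rangle^2$ (a typo in the paper's display, which the paper's own proof silently fixes) and are somewhat more explicit than the paper about the ancillary assumptions (nonvanishing denominators, strict gap for uniqueness).
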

\begin{proof}[Informal proof]
For the sake of simplicity, we assume that there are only $m$ finite eigenvalues $\lambda_1,\ldots,\lambda_m$ and $\ell+1\le m$.
Let $\phi_1,\phi_2,\ldots$ be the eigenfunctions of $K$ which form an othornormal basis of $\Lc_{p(x)}^2(\Xc)$.
We first write a function $\phit_\ell$ as a linear combination of the eigenfunctions 
\[
\phit_\ell(x)=\sum_{i=1}^\infty \langle \phit_\ell | \phi_i \rangle.
\]
Then, we can readily observe that
\begin{align*}
\langle \phit_\ell | \Top \phit_\ell\rangle 
&= \sum_{i=1}^m \lambda_i \langle \phit_\ell | \phi_i\rangle^2,\\
\langle \phit_\ell|\Top\phi_i\rangle &= \lambda_i \langle \phit_\ell | \phi_i\rangle,\\
\langle \phi_i | \Top\phi_i\rangle &=\lambda_i.
\end{align*}
Therefore, the objective becomes
\begin{align*}
-\langle \phit_\ell|\Top\phit_\ell\rangle +\sum_{i=1}^{\ell-1} \frac{\langle \phit_\ell| \Top\phi_i\rangle^2}{\langle \phi_i| \Top\phi_i\rangle} 
&= -\sum_{i=1}^m \lambda_i \langle \phit_\ell,\phi_i\rangle^2 +  \sum_{i=1}^{\ell-1} \lambda_i \langle \phit_\ell,\phi_i\rangle^2\\
&= -\sum_{i=\ell}^m \lambda_i \langle \phit_\ell,\phi_i\rangle^2,
\end{align*}
which implies that the objective is uniquely minimized when $\langle \phit_\ell,\phi_i\rangle=\d_{i\ell}$ for $i\ge \ell$, \ie when $\phit_\ell$ is the $\ell$-th eigenfunction $\phi_\ell$.
\end{proof}

Hence, solving the sequence of optimization problems $(P_\ell')$ leads to finding the eigenfunctions in order.
\redt{%
To emulate to solve the sequential optimization, \citet{Deng--Shi--Zhang--Cui--Lu--Zhu2022} proposed to solve}
\begin{align*}
\begin{aligned}
\underset{\substack{\phit_1,\ldots,\phit_L\in \Fc}}{\text{minimize~~}} & 
\sum_{\ell=1}^L \Bigl\{
-\langle \phit_\ell|\Top\phit_\ell\rangle +\sum_{i=1}^{\ell-1} \frac{\langle \phit_\ell|\Top\sg(\phit_i)\rangle^2}{\langle \sg(\phit_i)|\Top\sg(\phit_i)\rangle}
\Bigr\}\\
\text{subject~to~~} &
\langle\phit_\ell | \phit_\ell\rangle=1
\text{~for~} 1\le \ell\le L.
\end{aligned}
\end{align*}
Here, $\sg$ denotes the stop-gradient operation, and thus this is not a properly defined optimization problem, rather defining an optimization procedure.
It is worth emphasizing that the minimization procedure no longer guarantees a structured solution if the stop gradient operations are removed.
To satisfy the normalization constraints, NeuralEF uses the $L_2$-batch normalization during training.
\begin{align}
\partial_{\phi_\ell} \Lc_{\ell}(\phiv_{1:\ell})
=4\Bigl\{-|\Top\phi_\ell\rangle
+\sum_{i=1}^{\ell-1}  \frac{\langle \phi_i| \Top\phi_\ell\rangle}{\langle \phi_i| \Top\phi_i\rangle}  |\Top \phi_i\rangle\Bigr\}.
\label{eq:neuralef}
\end{align}

\paragraph{Discussion.}
NeuralEF improves SpIN in general, providing a simpler optimization procedure, \ie without the costly Cholesky decomposition steps and the Jacobian updates.
The game-theoretic formulation that stemmed from EigenGame~\citep{Gemp--McWilliams--Vernade--Graepel2021} is similar to the idea of sequential nesting, and it might be problematic when applied to a shared parameterization as the sequential nesting is. 
The crucial difference of NeuralEF is that the $\ell$-th objective of NeuralEF has a guarantee only if the previous $(\ell-1)$ eigenfunctions are well learned, 
whereas the LoRA objective can characterize the eigensubspace and thus we can apply the joint nesting for a shared parameterization.
Moreover, NestedLoRA can naturally handle SVD.

\vspace{.5em}\noindent\textbf{An Unbiased-Gradient Variation.}
We note that in the streaming PCA literature, 
the authors of EigenGame~\citep{Gemp--McWilliams--Vernade--Graepel2021} proposed an unbiased variant of the original EigenGame in their subsequent work~\citep{Gemp--McWilliams--Vernade--Graepel2022}.
Following the same idea, one can easily think of an unbiased variant of NeuralEF, which corresponds to the following gradient:
\begin{align*}
\partial_{\phi_\ell} \Lc_{\ell}(\phiv_{1:\ell})
&=4\Bigl\{-|\Top\phi_\ell\rangle
+\sum_{i=1}^{\ell-1} \langle \phi_i|\Top \phi_\ell\rangle | \phi_i\rangle\Bigr\}.%
\end{align*}
We can also consider another variant of similar flavor
\begin{align*}
\partial_{\phi_\ell} \Lc_{\ell}(\phiv_{1:\ell})
&=4\Bigl\{-|\Top\phi_\ell\rangle
+\sum_{i=1}^{\ell-1}  {\langle \phi_i| \phi_\ell\rangle}  |\Top \phi_i\rangle\Bigr\}. 
\end{align*}
Both variants are implemented in our codebase.
In our experiment, we used the first variant instead of the original~\eqref{eq:neuralef}, as we found that the original NeuralEF performs much worse than the variant.
In the current manuscript, therefore, we show that NeuralSVD can even outperform the improved version of NeuralEF.

\section{Technical Details and Deferred Proofs}

\subsection{Derivation of the Low-Rank Approximation Objective}
\label{app:sec:lora_derivation}
Recall that we define the LoRA objective as
\begin{align*}
\loraobj(\fv_{1:L}, \gv_{1:L})
&\defeq -2\sum_{\ell=1}^L \langle g_\ell|\Top f_\ell\rangle + \sum_{\ell=1}^L\sum_{\ell'=1}^L \langle f_\ell|f_{\ell'}\rangle \langle g_\ell|g_{\ell'}\rangle.
\end{align*}
When $\Top$ is a compact operator, the LoRA objective can be derived as the approximation error of $\Top$ via a low-rank expansion $\sum_{\ell=1}^L |f_\ell\rangle \langle g_\ell|$ measured in the squared Hilbert--Schmidt norm.
For a linear operator $\Top\suchthat\Fc\to\Gc$ for Hilbert spaces $\Fc$ and $\Gc$, 
the \emph{Hilbert--Schmidt norm} $\|\Top\|_{\mathsf{HS}}$ of an operator $\Top$ is defined as
\[
\| \Top\|_{\mathsf{HS}}^2 \defeq \sum_{i\in I} \|\Top\phi_i\|^2
\]
for an orthonormal basis $\{\phi_i\suchthat i\in I\}$ of the Hilbert space $\Fc$.
Note that the Hilbert--Schmidt norm is well-defined in that it is independent of the choice of the orthonormal basis.
When $\Fc$ and $\Gc$ are finite-dimensional, \ie when $\Top$ is a matrix, it boils down to the Frobenius norm.
When $\|\Top\|_{\mathsf{HS}}<\infty$, $\Top$ is said to be \emph{compact}.
\begin{lemma}
\label{lem:lora}
If $\Top$ is compact, then
\begin{align}
\loraobj(\fv_{1:L}, \gv_{1:L})
&= \Bigl\|\Top - \sum_{\ell=1}^L |g_\ell\rangle \langle f_\ell|\Bigr\|_{\mathsf{HS}}^2 - \|\Top\|_{\mathsf{HS}}^2
\label{eq:lora_obj_motivation}.
\end{align}
\end{lemma}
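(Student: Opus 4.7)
The plan is to recognize the right-hand side of \eqref{eq:lora_obj_motivation} as the squared Hilbert--Schmidt distance between $\Top$ and the finite-rank operator $\Sop \defeq \sum_{\ell=1}^L |g_\ell\rangle\langle f_\ell|$, and expand it via the Hilbert--Schmidt inner product $\langle A,B\rangle_{\mathsf{HS}} \defeq \sum_{i\in I}\langle A\phi_i, B\phi_i\rangle_\Gc$ associated with an arbitrary orthonormal basis $\{\phi_i\suchthat i\in I\}$ of $\Fc$. I would first record the polarization identity
\[
\|\Top - \Sop\|_{\mathsf{HS}}^2 = \|\Top\|_{\mathsf{HS}}^2 - 2\,\langle \Top, \Sop\rangle_{\mathsf{HS}} + \|\Sop\|_{\mathsf{HS}}^2,
\]
so the task reduces to identifying the cross term with $-2\sum_\ell \langle g_\ell|\Top f_\ell\rangle$ and the self term with $\sum_{\ell,\ell'}\langle f_\ell|f_{\ell'}\rangle\langle g_\ell|g_{\ell'}\rangle$. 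Because $\Sop$ is finite-rank and $\Top$ is assumed Hilbert--Schmidt, every quantity in sight converges absolutely and the rearrangements below are legitimate.

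The key computation is the elementary identity
\[
\langle A, |g\rangle\langle f|\rangle_{\mathsf{HS}} = \langle g|A f\rangle,
\]
valid for any Hilbert--Schmidt operator $A\suchthat\Fc\to\Gc$ and any $f\in\Fc,g\in\Gc$. To see this I would expand $f=\sum_i \langle \phi_i|f\rangle \phi_i$ in the chosen ONB, observe that $(|g\rangle\langle f|)\phi_i = \langle f|\phi_i\rangle\, g$, and compute
\[
\langle A, |g\rangle\langle f|\rangle_{\mathsf{HS}} = \sum_i \langle A\phi_i, \langle f|\phi_i\rangle g\rangle = \Bigl\langle g,\, A\sum_i \langle f|\phi_i\rangle \phi_i\Bigr\rangle = \langle g|A f\rangle,
\]
using linearity and continuity of $A$ together with completeness of $\{\phi_i\}$ (all sums converge by Cauchy--Schwarz and the HS assumption on $A$).

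Applying this identity with $A=\Top$ to each summand of $\Sop$ gives $\langle \Top,\Sop\rangle_{\mathsf{HS}} = \sum_{\ell=1}^L \langle g_\ell|\Top f_\ell\rangle$, which produces the first term of $\loraobj$. Applying it with $A = |g_\ell\rangle\langle f_\ell|$ to the pair $(f_{\ell'},g_{\ell'})$ gives
\[
\langle |g_\ell\rangle\langle f_\ell|, |g_{\ell'}\rangle\langle f_{\ell'}|\rangle_{\mathsf{HS}} = \langle g_{\ell'}|\, (|g_\ell\rangle\langle f_\ell|) f_{\ell'}\rangle = \langle f_\ell|f_{\ell'}\rangle\,\langle g_\ell|g_{\ell'}\rangle,
\]
and summing over $\ell,\ell'\in[L]$ yields $\|\Sop\|_{\mathsf{HS}}^2 = \sum_{\ell,\ell'}\langle f_\ell|f_{\ell'}\rangle\langle g_\ell|g_{\ell'}\rangle$. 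Substituting these two evaluations into the polarization identity and rearranging produces exactly $\loraobj(\fv_{1:L},\gv_{1:L}) = \|\Top - \Sop\|_{\mathsf{HS}}^2 - \|\Top\|_{\mathsf{HS}}^2$, as desired. The only real subtlety is verifying the basis-independent identity for $\langle A,|g\rangle\langle f|\rangle_{\mathsf{HS}}$ and the interchange of sums; both reduce to standard convergence arguments under the Hilbert--Schmidt hypothesis, so I do not anticipate any essential obstacle beyond clean bookkeeping of the bra-ket manipulations.
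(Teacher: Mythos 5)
Your proof is correct and follows essentially the same approach as the paper's: both expand the Hilbert--Schmidt norm of the difference $\Top - \sum_{\ell}|g_\ell\rangle\langle f_\ell|$ over an orthonormal basis of $\Fc$ and use the completeness relation $\sum_i |\phi_i\rangle\langle\phi_i|=\Iop$ to resum. You package the calculation more abstractly via the HS inner-product identity $\langle A,|g\rangle\langle f|\rangle_{\mathsf{HS}}=\langle g|Af\rangle$ applied termwise, while the paper performs the equivalent expansion inline, but the underlying computation is the same.
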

\begin{proof}
Pick an orthonormal basis $\{\phi_i\suchthat i\in I\}$ of $\Top$.
Note that $\sum_{i\in I} |\phi_i\rangle \langle \phi_i|=\Iop$, where $\Iop$ denotes the identity operator.
Hence, we have
\begin{align*}
\Bigl\|\Top - \sum_{\ell=1}^L |g_\ell\rangle \langle f_\ell|\Bigr\|_{\mathsf{HS}}^2 - \|\Top\|_{\mathsf{HS}}^2
&= \sum_{i\in I} \Bigl\|\Top|\phi_i\rangle - \sum_{\ell=1}^L |g_\ell\rangle \langle f_\ell|\phi_i\rangle\Bigr\|^2 - \sum_{i\in I} \|\Top|\phi_i\rangle\|^2\\
&= \sum_{i\in I}\Bigl(
-2\sum_{\ell=1}^L \langle f_\ell|\phi_i\rangle \langle g_\ell|\Tc|\phi_i\rangle + \sum_{\ell=1}^L \sum_{\ell'=1}^L \langle f_\ell|\phi_i\rangle \langle f_{\ell'}|\phi_i\rangle \langle g_\ell|g_{\ell'}\rangle
\Bigr)\\
&= -2\sum_{\ell=1}^L \Bigl\langle g_\ell \Big|\Tc\Bigl(\sum_{i\in I}|\phi_i\rangle \langle \phi_i|\Bigr) \Bigr|f_\ell\Bigr\rangle
+ \sum_{\ell=1}^L \sum_{\ell'=1}^L \Bigl\langle f_\ell\Big|\Bigl(\sum_{i\in I}|\phi_i\rangle \langle \phi_i|\Bigr)\Big|f_{\ell'}\Bigr\rangle \langle g_\ell|g_{\ell'}\rangle\\
&= -2\sum_{\ell=1}^L \langle g_\ell|\Top f_\ell\rangle + \sum_{\ell=1}^L\sum_{\ell'=1}^L \langle f_\ell|f_{\ell'}\rangle \langle g_\ell|g_{\ell'}\rangle\\
&=\loraobj(\fv_{1:L}, \gv_{1:L}).\qedhere
\end{align*}
\end{proof}
Though this relationship~\eqref{eq:lora_obj_motivation} holds only for a compact operator, we remark that the LoRA objective~\eqref{eq:lora_obj} is well-defined for any operator $\Top$.

\subsection{Proof of Theorem~\ref{thm:subspace}}
\label{app:sec:proof_thm:subspace}
From Lemma~\ref{lem:lora},
Theorem~\ref{thm:subspace} follows as a corollary of Schmidt's LoRA theorem, stated below. \qed
\begin{theorem}[{\citet{Schmidt1907}}]
\label{thm:schmidt}
Suppose that $\Top\suchthat\Fc\to\Gc$ is a compact operator with $\{(\sigma_{\ell}, f_\ell, g_\ell)\}_{\ell=1}^{\infty}$ as its singular triplets.
Define
\[
(\fv^\star,\gv^\star)
\defeq 
\argmin_{\substack{f_\ell\in \Fc, g_\ell\in \Gc, \ \ell\in[L]}} \Bigl\|\Top - \sum_{\ell=1}^L |g_\ell\rangle \langle f_\ell|\Bigr\|_{\mathsf{HS}}^2.
\]
If $\sigma_L >\sigma_{L+1}$, we have
\[
\sum_{\ell=1}^L |g_\ell^\star\rangle \langle f_\ell^\star|
=\sum_{\ell=1}^L \sigma_\ell |\psi_\ell\rangle \langle\phi_\ell|.
\]
\end{theorem}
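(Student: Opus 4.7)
The plan is to reduce the free minimization over pairs of tuples $(\fv_{1:L},\gv_{1:L}) \in \Fc^L\times\Gc^L$ to a tractable constrained problem on orthonormal frames, and then invoke the spectral theorem for the compact self-adjoint operator $\Top\Top^*$. First, I would argue that at any global minimizer, we may assume without loss of generality that $\{g_\ell^\star\}_{\ell=1}^L$ is an orthonormal set. The point is that if $\Top_L\defeq \sum_{\ell=1}^L |g_\ell\rangle\langle f_\ell|$ has $\text{rank}(\Top_L)\le L$, then performing a QR-type factorization on the span of $\{g_\ell\}$ (say Gram--Schmidt) produces orthonormal $\{g_\ell'\}$ together with redefined $\{f_\ell'\}$ in the span of $\{f_\ell\}$ such that $\sum_\ell |g_\ell'\rangle\langle f_\ell'| = \Top_L$. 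The value of $\loraobj$ depends only on $\Top_L$ (by Lemma~\ref{lem:lora} and the fact that $\|\Top\|_\mathsf{HS}^2$ is a constant), so this reparametrization preserves optimality.

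Under the orthonormality of $\{g_\ell\}$, the objective simplifies dramatically: the quartic term collapses to $\sum_{\ell}\|f_\ell\|^2$, giving
\begin{equation*}
\loraobj(\fv_{1:L},\gv_{1:L}) = -2\sum_{\ell=1}^L \langle g_\ell|\Top f_\ell\rangle + \sum_{\ell=1}^L \|f_\ell\|^2.
\end{equation*}
For fixed orthonormal $\{g_\ell\}$, this is a decoupled quadratic in each $f_\ell$, minimized at $f_\ell^\star = \Top^* g_\ell$, which yields the reduced objective
\begin{equation*}
\loraobj(\fv_{1:L}^\star,\gv_{1:L}) = -\sum_{\ell=1}^L \|\Top^* g_\ell\|^2 = -\sum_{\ell=1}^L \langle g_\ell | \Top\Top^* g_\ell\rangle.
\end{equation*}

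The remaining problem is to maximize $\sum_{\ell=1}^L \langle g_\ell|\Top\Top^* g_\ell\rangle$ over orthonormal $L$-frames in $\Gc$. Since $\Top$ is compact, so is the self-adjoint positive operator $\Top\Top^*$, whose eigenpairs are $(\sigma_\ell^2,\psi_\ell)$ by construction. The Ky Fan maximum principle (a standard consequence of Courant--Fischer for compact self-adjoint operators) shows that the maximum equals $\sum_{\ell=1}^L \sigma_\ell^2$, achieved precisely when $\mathrm{span}\{g_\ell\}_{\ell=1}^L$ equals the top-$L$ eigenspace of $\Top\Top^*$. The hypothesis $\sigma_L > \sigma_{L+1}$ guarantees that this top-$L$ eigenspace is \emph{uniquely} determined (even though the individual basis $\{g_\ell^\star\}$ is unique only up to an orthogonal mixing in $\mathrm{span}\{\psi_1,\ldots,\psi_L\}$). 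Consequently, writing $g_\ell^\star = \sum_{i=1}^L Q_{i\ell}\psi_i$ for some orthogonal $Q\in\Real^{L\times L}$ and using $f_\ell^\star = \Top^* g_\ell^\star = \sum_i Q_{i\ell}\sigma_i\phi_i$, the outer-product sum telescopes to
\begin{equation*}
\sum_{\ell=1}^L |g_\ell^\star\rangle\langle f_\ell^\star| = \sum_{i,j=1}^L \sigma_j |\psi_i\rangle\langle\phi_j|\sum_{\ell=1}^L Q_{i\ell}Q_{j\ell} = \sum_{\ell=1}^L \sigma_\ell |\psi_\ell\rangle\langle\phi_\ell|,
\end{equation*}
using $QQ^\intercal = I$.

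The main obstacle I expect is the WLOG step: one must argue carefully that the QR-type reduction to orthonormal $\{g_\ell\}$ does not exclude any minimizers and is compatible with the equality claim for the outer-product sum (not just for the individual factors). A secondary subtlety is the Ky Fan principle in the infinite-dimensional compact setting; this is standard but worth citing, e.g., via the min-max characterization of eigenvalues of compact self-adjoint operators.
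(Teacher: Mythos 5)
The paper does not prove Theorem~\ref{thm:schmidt}; it is stated as a citation to Schmidt's 1907 result, and the paper simply invokes it (e.g., in the proofs of Theorems~\ref{thm:subspace}, \ref{thm:seq_nesting}, and~\ref{thm:schmidt_evd_noncompact}). There is therefore no ``paper's own proof'' to compare against, so I will evaluate your argument on its own.

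Your proof is correct, and it is the standard reduction of rank-$L$ Hilbert--Schmidt approximation to the Ky Fan trace-maximum principle for the compact positive operator $\Top\Top^*$. The chain of reductions is sound: (i) since the objective depends on $(\fv,\gv)$ only through the rank-$\le L$ operator $\Top_L = \sum_\ell |g_\ell\rangle\langle f_\ell|$, you may reparametrize to orthonormal $\{g_\ell\}$ without changing the objective; (ii) with orthonormal $\{g_\ell\}$ the problem decouples quadratically in each $f_\ell$ and is uniquely minimized at $f_\ell = \Top^* g_\ell$; (iii) the residual maximization $\sum_\ell \langle g_\ell|\Top\Top^* g_\ell\rangle$ over orthonormal $L$-frames is resolved by Ky Fan; and (iv) the spectral gap $\sigma_L > \sigma_{L+1}$ makes the top-$L$ eigenspace of $\Top\Top^*$ unique, after which the $Q Q^\intercal = \Imatrix$ cancellation gives the outer-product identity. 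Two small points worth tightening, which you partly anticipate: first, in the WLOG step, if $\operatorname{rank}(\Top_L^\star) = r < L$ you need to pad with $L-r$ extra orthonormal vectors (with zero $f$'s), and the argument then forces those padding vectors into $\ker(\Top^*)$, which is inconsistent with attaining $\sum_{\ell=1}^L\sigma_\ell^2$ when $\sigma_L > 0$ (guaranteed by the gap hypothesis) --- so rank-deficient minimizers are in fact ruled out, but this deserves a sentence. Second, the step ``the reparametrized pair is still a global minimizer, hence $f'_\ell = \Top^* g'_\ell$'' is correct but implicit: it relies on the uniqueness of the quadratic minimizer in $f_\ell$, which you should state explicitly before concluding the span of $\{g'_\ell\}$ must be the top-$L$ eigenspace. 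Neither is a gap; both are bookkeeping.
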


\subsection{Proof of Theorem~\ref{thm:seq_nesting} (Sequential Nesting)}
\label{app:sec:proof_thm:seq_nesting}
Recall that we assume
\[
\sum_{i=1}^{\ell-1} |g_i\rangle \langle f_i|
= \sum_{i=1}^{\ell-1} \sigma_i |\psi_i\rangle \langle \phi_i|.
\]
By Lemma~\ref{lem:lora}, the LoRA objective can be written as
\begin{align*}
\loraobj(\fv_{1:\ell}, \gv_{1:\ell})
&= \Bigl\|\Top - \sum_{i=1}^\ell |g_i\rangle \langle f_i|\Bigr\|_{\mathsf{HS}}^2 - \|\Top\|_{\mathsf{HS}}^2\\
&= \Bigl\|\sum_{i\ge 1} \sigma_i |\psi_i\rangle\langle \phi_i| - \sum_{i=1}^{\ell-1} \sigma_i |\psi_i\rangle \langle \phi_i| -|g_\ell\rangle \langle f_\ell|\Bigr\|_{\mathsf{HS}}^2 - \|\Top\|_{\mathsf{HS}}^2\\
&= \Bigl\|\sum_{i\ge \ell} \sigma_i |\psi_i\rangle\langle \phi_i|  -|g_\ell\rangle \langle f_\ell|\Bigr\|_{\mathsf{HS}}^2 - \|\Top\|_{\mathsf{HS}}^2.
\end{align*}
Hence, minimizing $\loraobj(\fv_{1:\ell}, \gv_{1:\ell})$ with respect to $(f_\ell,g_\ell)$ is equivalent to minimizing the LoRA objective $\loraobj(f_\ell,g_\ell;\Tc_{\ge\ell})$ defined with respect to the truncated operator $\Tc_{\ge\ell}\defeq \sum_{i\ge \ell}\sigma_i |\psi_i\rangle\langle \phi_i|$.
Since $\sigma_\ell>\sigma_{\ell+1}$, by Schmidt's theorem (Theorem~\ref{thm:schmidt}), the global optimizer must satisfy  $|g_\ell\rangle \langle f_\ell|
= \sigma_\ell |\psi_\ell\rangle \langle \phi_\ell|$.
\qed

\subsection{Proof of Theorem~\ref{thm:jnt_nesting} (Joint Nesting)}
\label{app:sec:proof_thm:jnt_nesting}

We first prove the following lemma; Theorem~\ref{thm:jnt_nesting} readily follows as a corollary.

\begin{lemma}\label{lem:jnt_nesting}
Suppose that all the nonzero singular values of the target kernel are distinct.
If $\sigma_\ell > \sigma_{\ell +1}$,
the objective function $\tilde{\Lc}(\fv,\gv)\defeq \loraobj(\fv_{1:\ell},\gv_{1:\ell})
+ w\loraobj(\fv_{[L]},\gv_{[L]})$ with $w>0$ is minimized if and only if
\begin{align*}
\sum_{i=1}^\ell |g_i^*\rangle \langle f_i^*|
&= \sum_{i=1}^\ell \sigma_i |\psi_i\rangle \langle \phi_i|
\quad\text{ and }\quad
\sum_{i=\ell+1}^L |g_i^*\rangle \langle f_i^*|
= \sum_{i=\ell+1}^L \sigma_i |\psi_i\rangle\langle \phi_i|.
\end{align*}
\end{lemma}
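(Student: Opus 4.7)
\medskip

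\textbf{Proof proposal for Lemma~\ref{lem:jnt_nesting}.}

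The plan is to reduce the problem to two independent instances of Schmidt's low-rank approximation theorem by writing $\tilde{\Lc}$ as (a constant plus) the sum of two squared Hilbert--Schmidt approximation errors. I would first apply Lemma~\ref{lem:lora} to rewrite
\begin{equation*}
\tilde{\Lc}(\fv,\gv) + (1+w)\|\Top\|_{\mathsf{HS}}^2
= \Bigl\|\Top - \sum_{i=1}^{\ell} |g_i\rangle\langle f_i|\Bigr\|_{\mathsf{HS}}^2
+ w\Bigl\|\Top - \sum_{i=1}^{L} |g_i\rangle\langle f_i|\Bigr\|_{\mathsf{HS}}^2.
\end{equation*}
Since the left-hand side differs from $\tilde{\Lc}(\fv,\gv)$ only by a constant independent of $(\fv,\gv)$, minimizing $\tilde{\Lc}$ is equivalent to minimizing the right-hand side.

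Next, I would bound each summand from below using Theorem~\ref{thm:schmidt}: the first term is at least $\|\Top\|_{\mathsf{HS}}^2 - \sum_{i=1}^{\ell}\sigma_i^2$, with equality only if $\sum_{i=1}^{\ell} |g_i\rangle\langle f_i| = \sum_{i=1}^{\ell}\sigma_i |\psi_i\rangle\langle \phi_i|$ (here the strict gap $\sigma_\ell>\sigma_{\ell+1}$ pins down the top-$\ell$ spectral subspace, and the assumption that the nonzero singular values are all distinct fixes the operator uniquely). Similarly, the second term is at least $\|\Top\|_{\mathsf{HS}}^2 - \sum_{i=1}^{L}\sigma_i^2$, with equality only if $\sum_{i=1}^{L} |g_i\rangle\langle f_i| = \sum_{i=1}^{L}\sigma_i |\psi_i\rangle\langle \phi_i|$. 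Summing these two lower bounds gives a lower bound on the whole objective.

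I would then exhibit a configuration that simultaneously achieves both bounds, namely $|f_i\rangle = \sqrt{\sigma_i}\,|\phi_i\rangle$ and $|g_i\rangle = \sqrt{\sigma_i}\,|\psi_i\rangle$ for $i\in[L]$; plugging this in makes both summands attain their respective lower bounds at once, so the joint lower bound is tight. Since each summand is nonnegative and is already at its individual minimum at any global minimizer of the sum, \emph{every} global minimizer must saturate both Schmidt bounds. This yields the first identity $\sum_{i=1}^{\ell} |g_i^*\rangle\langle f_i^*| = \sum_{i=1}^{\ell}\sigma_i |\psi_i\rangle\langle \phi_i|$ directly, and subtracting it from the corresponding identity at rank $L$ yields $\sum_{i=\ell+1}^{L} |g_i^*\rangle\langle f_i^*| = \sum_{i=\ell+1}^{L}\sigma_i |\psi_i\rangle\langle \phi_i|$, as required.

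The only delicate step is ensuring that equality in Schmidt's theorem really forces the \emph{operators} $\sum_{i=1}^{\ell}|g_i\rangle\langle f_i|$ and $\sum_{i=1}^{L}|g_i\rangle\langle f_i|$ to coincide with the corresponding truncations of the SVD of $\Top$: this requires $\sigma_\ell>\sigma_{\ell+1}$ (already assumed) to isolate the top-$\ell$ block, and $\sigma_L>\sigma_{L+1}$, which follows from the hypothesis that the nonzero singular values are distinct (possibly after noting that once $\sigma_L=0$, further summands contribute zero and the conclusion is trivial). Once this uniqueness-of-subspace step is in place, the rest of the argument is routine; Theorem~\ref{thm:jnt_nesting} then follows by iteratively applying the lemma across indices $\ell\in[L-1]$.
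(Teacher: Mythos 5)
Your proposal is correct and takes essentially the same route as the paper's own proof: lower-bound each of the two LoRA terms separately via Schmidt's theorem (using Lemma~\ref{lem:lora} to rewrite each as a squared Hilbert--Schmidt error up to a constant), and observe that both lower bounds are simultaneously achievable, so every global minimizer must saturate both, which gives the two stated identities and (by subtraction) the block for indices $\ell+1,\dots,L$. You are slightly more explicit than the paper in exhibiting a configuration $|f_i\rangle=\sqrt{\sigma_i}|\phi_i\rangle$, $|g_i\rangle=\sqrt{\sigma_i}|\psi_i\rangle$ that attains the combined bound and in noting the $\sigma_L>\sigma_{L+1}$ versus $\sigma_L=0$ dichotomy, but the substance is identical.
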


\begin{proof}
First, note that by the Schmidt  theorem (Theorem~\ref{thm:schmidt}), 
\[
\loraobj(\fv_{1:\ell}, \gv_{1:\ell})
\ge \loraobj(\fv_{1:\ell}^*, \gv_{1:\ell}^*)
= \sum_{i=1}^\ell \sigma_\ell ^2,
\]
where the equality holds if and only if
\[
\sum_{i=1}^\ell |g_i^*\rangle \langle f_i^*|= \sum_{i=1}^\ell \sigma_i |\psi_i\rangle\langle \phi_i|.
\]
Using this property, 
we immediately have a lower bound
\[
\loraobj(\fv_{1:\ell}, \gv_{1:\ell})
+w \loraobj(\fv_{1:L}, \gv_{1:L})
\ge \sum_{i=1}^\ell\sigma_i^2 + w \sum_{j=1}^L\sigma_j^2,
\]
where the equality holds if and only if
\begin{align*}
\sum_{i=1}^\ell |g_i^*\rangle \langle f_i^*|
&= \sum_{i=1}^\ell \sigma_i |\psi_i\rangle\langle \phi_i|
\quad\text{ and }\quad
\sum_{i=1}^L |g_i^*\rangle \langle f_i^*|
= \sum_{i=1}^L \sigma_i |\psi_i\rangle\langle \phi_i|,
\end{align*}
which is equivalent to
\begin{align*}
\sum_{i=1}^\ell |g_i^*\rangle \langle f_i^*|
= \sum_{i=1}^\ell \sigma_i |\psi_i\rangle\langle \phi_i|
\quad\text{ and }\quad
\sum_{i=\ell+1}^L |g_i^*\rangle \langle f_i^*|
&= \sum_{i=\ell+1}^L \sigma_i |\psi_i\rangle\langle \phi_i|.\qedhere
\end{align*}
\end{proof}

We are now ready to prove Theorem~\ref{thm:jnt_nesting}.

\begin{proof}[Proof of Theorem~\ref{thm:jnt_nesting}]
By inductively applying Lemma~\ref{lem:jnt_nesting} to the grouping $\jntloraobj(\fv,\gv) = \sum_{i=1}^\ell w_i \loraobj(\fv_{[i]}, \gv_{[i]}) 
+ \sum_{i=\ell + 1}^L w_i \loraobj(\fv_{[i]}, \gv_{[i]})$ for $\ell=1,\ldots, L-1$, a minimizer must satisfy 
\[
\sum_{i=1}^\ell |g_i^*\rangle \langle f_i^*|= \sum_{i=1}^\ell \sigma_i |\psi_i\rangle\langle \phi_i|,
\]
for each $\ell=1,\ldots,L$. 
This implies that the equivalence $|g_i^*\rangle \langle f_i^*|= \sigma_i |\psi_i\rangle\langle \phi_i|$ should hold term by term.
\end{proof}

\subsection{One-Shot Computation of Jointly Nested Objective}
\label{sec:app:oneshot}

The gradient of the joint nesting objective~\eqref{eq:jnt_nesting_grad} can be computed based on the following observation:
\begin{proposition}[One-shot computation]
\label{prop:oneshot}
Given a positive weight vector $\weights$, 
define $\vectormask\in \Real^L$ and $\matrixmask\in \Real^{L\times L}$ as
$\mathsf{m}_i \defeq \sum_{\ell=i}^L w_\ell$ and $\matrixmask_{ij} \defeq \mathsf{m}_{\max\{i,j\}}$.
Then, the nested objective is written as
\begin{align*}
\jntloraobj(\fv,\gv;\weights) 
&\defeq -2\sum_{\ell=1}^L \mathsf{m}_\ell \langle g_\ell|\Top f_\ell\rangle + \sum_{\ell=1}^L\sum_{\ell'=1}^L \matrixmask_{\ell\ell'} \langle f_\ell|f_{\ell'}\rangle \langle g_\ell|g_{\ell'}\rangle.
\end{align*}
\end{proposition}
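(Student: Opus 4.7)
The plan is to prove the identity by substituting the definition of $\loraobj$ into that of $\jntloraobj$ and then swapping the order of summation so that the outer sum over $\ell$ turns into a tail sum of the weights $w_\ell$. Concretely, I would first expand
\[
\jntloraobj(\fv,\gv;\weights)
= \sum_{\ell=1}^L w_\ell \Bigl(-2\sum_{i=1}^\ell \langle g_i|\Top f_i\rangle + \sum_{i=1}^\ell\sum_{j=1}^\ell \langle f_i|f_j\rangle\langle g_i|g_j\rangle\Bigr),
\]
treating the linear (operator) term and the quadratic (Gram) term separately.

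For the linear term, I would swap the sums using the fact that the pair $(i,\ell)$ with $1\le i\le \ell\le L$ can equally well be indexed by first fixing $i$ and then letting $\ell$ range over $\{i,i+1,\ldots,L\}$. This yields
\[
\sum_{\ell=1}^L w_\ell\sum_{i=1}^\ell \langle g_i|\Top f_i\rangle
= \sum_{i=1}^L \Bigl(\sum_{\ell=i}^L w_\ell\Bigr)\langle g_i|\Top f_i\rangle
= \sum_{i=1}^L \mathsf{m}_i \langle g_i|\Top f_i\rangle,
\]
which matches the claimed vector mask.

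For the quadratic term, the same idea applies but the inner double sum constrains $\max\{i,j\}\le \ell$, so when I reorder I get $\ell$ running over $\{\max\{i,j\},\ldots,L\}$:
\[
\sum_{\ell=1}^L w_\ell\sum_{i=1}^\ell\sum_{j=1}^\ell \langle f_i|f_j\rangle\langle g_i|g_j\rangle
= \sum_{i=1}^L\sum_{j=1}^L \Bigl(\sum_{\ell=\max\{i,j\}}^L w_\ell\Bigr)\langle f_i|f_j\rangle\langle g_i|g_j\rangle
= \sum_{i=1}^L\sum_{j=1}^L \matrixmask_{ij}\langle f_i|f_j\rangle\langle g_i|g_j\rangle,
\]
using $\matrixmask_{ij}=\mathsf{m}_{\max\{i,j\}}$. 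Combining the two pieces gives exactly the stated one-shot expression.

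This is essentially a bookkeeping argument, so there is no real obstacle; the only step requiring any care is correctly identifying the range $\ell\ge\max\{i,j\}$ in the quadratic term, which is what produces the $\max$ inside the matrix mask. No additional assumptions on $\Top$ or on the functions $(\fv,\gv)$ are needed beyond the definitions, and the identity holds pointwise for arbitrary positive weights $\weights$.
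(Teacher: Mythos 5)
Your proof is correct and takes essentially the same route as the paper's: expand $\jntloraobj$ into the weighted sum of $\loraobj(\fv_{1:\ell},\gv_{1:\ell})$ and swap the order of summation, collecting the tail sums of weights into $\mathsf{m}_i$ for the operator term and $\mathsf{m}_{\max\{i,j\}}$ for the Gram term. Your write-up is if anything slightly more explicit than the paper's about where the $\max\{i,j\}$ constraint comes from.
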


\begin{proof}
Recall that
\begin{align*}
\jntloraobj(\fv,\gv;\weights)
&= \sum_{\ell=1}^L w_\ell \loraobj(\fv_{1:\ell},\gv_{1:\ell})\\
&= \sum_{\ell=1}^L w_\ell 
\Bigl\{
-2\sum_{i=1}^\ell \langle g_i|\Top f_i\rangle + \sum_{i=1}^\ell\sum_{j=1}^\ell \langle f_i|f_j\rangle \langle g_i|g_j\rangle
\Bigr\}
\end{align*}
For the first term, we can write
\begin{align*}
\sum_{\ell=1}^L w_\ell \sum_{i=1}^\ell \langle g_i|\Top f_i\rangle
&= \sum_{\ell=1}^L w_\ell \sum_{i=1}^\ell f_i(x)g_i(y)
= \sum_{\ell=1}^L \mathsf{m}_\ell \langle g_\ell|\Top f_\ell\rangle,
\end{align*}
where $\mathsf{m}_\ell \defeq \sum_{i=\ell}^L w_i$.
For the second term, we can write
\begin{align*}
\sum_{\ell=1}^L w_\ell \sum_{1\le i,j\le \ell} \langle f_i|f_j\rangle \langle g_i|g_j\rangle
&= \sum_{1\le i,j\le L} \Mm_{ij} \langle f_i|f_j\rangle \langle g_i|g_j\rangle,
\end{align*}
where $\matrixmask_{ij} \defeq \mathsf{m}_{\max\{i,j\}} = \sum_{\ell=\max\{i,j\}}^L w_{\ell}$.
This concludes the proof.
\end{proof}

\subsection{EVD with Non-Compact Operators}
\label{app:sec:noncompact}
For a self-adjoint operator $\Top$, we can apply our framework by considering the induced LoRA objective
\[
\loraobj(\fv_{1:L})
\defeq -2\sum_{\ell=1}^L \langle f_\ell|\Top f_\ell\rangle + \sum_{\ell=1}^L\sum_{\ell'=1}^L \langle f_\ell|f_{\ell'}\rangle^2.
\]

Though the original LoRA theorem of Schmidt (Theorem~\ref{thm:schmidt}) holds for a compact operator, 
it can be extended to a certain class of non-compact operators, which have discrete eigenvalues.

\begin{theorem}
\label{thm:schmidt_evd_noncompact}
For a self-adjoint operator $\Top$, define
\[
\fv^\star
\defeq 
\argmin_{\substack{f_\ell\in \Fc,\ \ell\in[L]}} \loraobj(\fv).
\]
Suppose that the operator $\Top$ has $r$ positive eigenvalues $\lambda_1\ge \ldots\ge \lambda_r >0 \ge \lambda_{r+1}\ge \ldots$ with corresponding orthonormal eigenfunctions $\{\phi_\ell\}_{\ell\ge 1}$, for some $r\in\Natural\cup\{\infty\}$.
If $r<\infty$, the span of $|\fv^\star\rangle$ is equal to the span of the top-$\min\{L,\rk\}$ eigenfunctions of the operator $\Top$, or more precisely
\[
\sum_{\ell=1}^L |f_\ell^\star\rangle \langle f_\ell^\star|
=\sum_{\ell=1}^{\min\{L,\rk\}} \lambda_\ell |\phi_\ell\rangle \langle\phi_\ell|.
\]
If $r=\infty$, \ie when there are countably infinitely many positive eigenvalues, the same holds if $\lambda_L > \lambda_{L+1}$.
\end{theorem}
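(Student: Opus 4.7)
My plan is to reduce the infinite-dimensional problem to a finite-dimensional eigenvalue computation on the positive eigenspace of $\Top$, for which the Schur--Horn theorem and the Courant--Fischer min-max principle suffice. The starting point is to rewrite the LoRA objective as an optimization over rank-$\le L$ positive semidefinite operators: setting $F \defeq \sum_{\ell=1}^L |f_\ell\rangle\langle f_\ell|$, a direct computation gives
\begin{align*}
\loraobj(\fv_{1:L}) = -2\tr(\Top F) + \tr(F^2),
\end{align*}
and every rank-$\le L$ PSD operator on $\Fc$ arises this way (non-uniquely). Since $F$ is finite rank, both traces are well-defined as finite sums even when $\Top$ is non-compact, and the problem becomes $\min_{F\succeq 0,\,\rank(F)\le L}\{-2\tr(\Top F) + \tr(F^2)\}$.

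Next, I would show that any optimizer $F^\star$ is supported on the positive eigenspace $\Hc_+ \defeq \overline{\Span}\{\phi_i \suchthat i \le r\}$. Decompose $\Top = \Top_+ - \Top_-$ into its positive and (minus-)negative parts, both PSD. Letting $P$ be the orthogonal projection onto the non-negative eigenspace $\Hc_+ \oplus \ker(\Top)$, we have $P\Top P = \Top_+$, whence $\tr(\Top\cdot PFP) = \tr(\Top_+ F) \ge \tr(\Top F)$ (since $\tr(\Top_- F) \ge 0$ for two PSDs) while $\tr((PFP)^2) \le \tr(F^2)$ by the standard compression inequality for PSD operators. Replacing $F$ with $PFP$ therefore never worsens the objective, with strict improvement unless $F = PFP$. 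A parallel argument removes any mass of $F$ living in $\ker(\Top)$: projecting further onto $\Hc_+$ preserves $\tr(\Top F)$ but lowers $\tr(F^2)$ unless that mass was already zero.

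With $F$ now supported on $\Hc_+$, take its spectral decomposition $F = \sum_{k=1}^L \mu_k |v_k\rangle\langle v_k|$, with $\mu_k \ge 0$ and $\{v_k\}$ orthonormal in $\Hc_+$ (padding with zeros if $\rank(F)<L$). The objective decouples as $\sum_{k=1}^L (\mu_k^2 - 2\mu_k \tilde{\lambda}_k)$ with $\tilde{\lambda}_k \defeq \langle v_k|\Top v_k\rangle \ge 0$; minimizing over $\mu_k \ge 0$ gives $\mu_k^\star = \tilde{\lambda}_k$, reducing the outer problem to maximizing $\sum_{k=1}^L \tilde{\lambda}_k^2$ over orthonormal families $\{v_k\}_{k=1}^L \subset \Hc_+$. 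I would then apply Schur's theorem (the easy half of Schur--Horn) to the compressed symmetric matrix $\mathsf{M} \defeq V^\intercal \Top V \in \Real^{L\times L}$: its diagonal $(\tilde{\lambda}_k)$ is majorized by its eigenvalues $(\nu_1 \ge \ldots \ge \nu_L)$, so by convexity of $x\mapsto x^2$, $\sum_k \tilde{\lambda}_k^2 \le \sum_k \nu_k^2$. The Courant--Fischer min-max principle then gives $\nu_k \le \lambda_k$ for each $k$ (with the convention $\lambda_k \defeq 0$ for $k>r$), and hence $\sum_k \nu_k^2 \le \sum_{k=1}^{\min(L,r)} \lambda_k^2$. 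Equality is achieved by taking $v_k = \phi_k$, so unwinding the chain of reductions yields $F^\star = \sum_{k=1}^{\min(L,r)} \lambda_k |\phi_k\rangle\langle\phi_k|$, matching the claim. Uniqueness of the operator $F^\star$ follows from tracking the equality conditions in the min-max bound, which force the $v_k$'s to span the top-$\min(L,r)$ positive eigenspace; this is pinned down by the spectral gap $\lambda_L>\lambda_{L+1}$ when $r=\infty$, and is automatic when $r<\infty$ and $L\ge r$.

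The main obstacle I anticipate is executing the reduction to the positive eigenspace cleanly: since $\Top$ is not assumed compact, its spectrum may contain a continuous part or accumulation points, and the projection $P$ should be introduced via the general spectral theorem for self-adjoint operators rather than a pointwise eigenfunction expansion. Once past that, the remainder is a finite-dimensional exercise because $F$ always has rank at most $L$, so the Schur--Horn and Weyl-style inequalities apply directly to the compressed $L\times L$ matrix $V^\intercal \Top V$.
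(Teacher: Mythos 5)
Your proof is correct, and it takes a genuinely different route from the paper's. The paper's proof is a short reduction to Schmidt's LoRA theorem for compact operators (Theorem~\ref{thm:schmidt}): it introduces $\Top_+ = \sum_{\ell \le r}\lambda_\ell|\phi_\ell\rangle\langle\phi_\ell|$ (or the rank-$L$ truncation $\Top_L$ when $r=\infty$), observes the algebraic identity
\[
\loraobj(\fv_{1:L};\Top) = \Bigl\|\Top_+ - \sum_\ell |f_\ell\rangle\langle f_\ell|\Bigr\|_{\mathsf{HS}}^2 - \|\Top_+\|_{\mathsf{HS}}^2 + 2\sum_{\ell}\langle f_\ell|(\Top_+-\Top)f_\ell\rangle,
\]
notes that the last term is non-negative because $\Top_+ - \Top \succeq 0$ and vanishes on the claimed minimizer, and then invokes Schmidt to minimize the compact part. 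You instead re-derive the key bound from scratch: you reformulate the problem as minimizing $-2\tr(\Top F)+\tr(F^2)$ over rank-$\le L$ PSD operators $F$, show by the HS-compression inequality that an optimizer must be supported on $\Hc_+$, and then reduce to maximizing $\sum_k \langle v_k|\Top v_k\rangle^2$ over orthonormal families, which you bound via Schur majorization and Poincar\'e separation. This is essentially a self-contained proof of the relevant special case of Schmidt's theorem, so it is longer but does not depend on Theorem~\ref{thm:schmidt} as a black box; it also sidesteps Hilbert--Schmidt norm subtleties cleanly because $F$ is always finite rank. Two points deserve slightly more care than you give them: (i) when $r<\infty$ and $L>r$ you cannot literally ``pad'' $\{v_k\}$ to $L$ orthonormal vectors inside the $r$-dimensional $\Hc_+$, so the compressed Gram matrix $V^\intercal\Top V$ is at most $r\times r$, but the conclusion $\nu_k=0$ for $k>r$ is unchanged; and (ii) for uniqueness when $L<r$ you correctly identify that a spectral gap $\lambda_L>\lambda_{L+1}$ is needed --- the paper's statement for $r<\infty$ silently uses the ``strict spectral gap throughout'' convention announced after Theorem~\ref{thm:subspace}, so you are being more explicit than the source here, not less.
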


As a consequence of this theorem, when we optimize the $|\fv_{1:L}\rangle$ with nesting for $L>\rk$, one can easily show that the optimal $|f_{r+1}^\star\rangle,\ldots,|f_{L}^\star\rangle$ are zero functions; we omit the proof.

\begin{proof}
We first consider when $r$ is finite.
Define the positive part of the operator as
\[
\Top_+ \defeq \sum_{\ell=1}^\rk \lambda_\ell |\phi_\ell\rangle \langle\phi_\ell|,
\]
which is compact by definition.
Then, $\Top_+-\Top$ is PSD with eigenvalues $0\le -\lambda_{r+1} \le -\lambda_{r+2}\le \ldots$ and eigenfunctions $\{\phi_\ell\}_{\ell\ge r+1}$.
Then, we can rewrite and lower bound the LoRA objective as
\begin{align*}
\loraobj(\fv_{1:L})
&= \Bigl\|\Top_+ - \sum_{\ell=1}^L |f_\ell\rangle \langle f_\ell|\Bigr\|_{\mathsf{HS}}^2 - \|\Top_+\|_{\mathsf{HS}}^2
+2\sum_{\ell=1}^L \langle f_\ell|(\Top_+ - \Top) f_\ell\rangle\\
&\stackrel{(a)}{\ge}\Bigl\|\Top_+ - \sum_{\ell=1}^L |f_\ell\rangle \langle f_\ell|\Bigr\|_{\mathsf{HS}}^2 - \|\Top_+\|_{\mathsf{HS}}^2,
\end{align*}
where the inequality $(a)$ follows since $\Top_+-\Top$ is PSD.
We note that the lower bound is minimized if and only if the span of $|\fv^\star\rangle$ is equal to the span of the top-$\min\{L,\rk\}$ eigenfunctions of the operator $\Top_+$ by applying Schmidt's theorem (Theorem~\ref{thm:schmidt}).
We further note that $(a)$ holds with equality, as $|f_1^\star\rangle,\ldots,|f_L^\star\rangle$ belong to the null space of $\Top_+-\Top$.
Hence, this concludes that the LoRA objective is minimized if and only if the span of $|\fv^\star\rangle$ is equal to the span of the top-$\min\{L,\rk\}$ eigenfunctions of the operator $\Top$.

When $r=\infty$, given that $\lambda_L >\lambda_{L+1}$, the rank-$L$ approximation of $\Top$
\[
\Top_L\defeq \sum_{\ell=1}^L \lambda_\ell|\phi_\ell\rangle \langle\phi_\ell|
\]
is well-defined.
The same proof for $r<\infty$ is valid if we replace $\Top_+$ with $\Top_L$.
\end{proof}

\section{Implementation Details with Code Snippets}
\label{app:sec:pseudo}
In this section, we explain how to implement the proposed NestedLoRA updates, providing readily deployable code snippets written in PyTorch. 
These code snippets are simplified from the actual implementation \ifforreview in the Supplementary Material \else which can be found online\footnote{\url{https://github.com/jongharyu/neural-svd}} \fi for the ease of exposition.

\subsection{Helper Functions: Computing Nesting Masks and Metric Loss}
\label{app:sec:helper}
As noted in \S\ref{sec:jnt_nesting}, both versions of NestedLoRA can be implemented in a unified way via the nesting masks $(\mm_\ell)_{\ell\in[L]}$ and $(\Mm_{i\ell})_{i\in[L],\ell\in[L]}$. 
Recall that for joint nesting, given positive weights $(w_1,\ldots,w_L)$, we define $\mm_\ell\defeq \sum_{i=\ell}^L w_i$ and $\Mm_{i\ell}\defeq \mm_{\max\{i,\ell\}}$.
\begin{lstlisting}[]
def get_joint_nesting_masks(weights: np.ndarray, set_first_mode_const: bool = False):
    vector_mask = list(np.cumsum(list(weights)[::-1])[::-1])
    if set_first_mode_const:
        vector_mask = [vector_mask[0]] + vector_mask
    vector_mask = torch.tensor(np.array(vector_mask)).float()
    matrix_mask = torch.minimum(vector_mask.unsqueeze(1), vector_mask.unsqueeze(1).T).float()
    return vector_mask, matrix_mask
\end{lstlisting}
Here, when the argument \texttt{set\_first\_mode\_const} is set to be \texttt{True}, it outputs masks for CDK, for which we explicitly add the constant first mode; see \S\ref{app:sec:cdk_implementation}.

The sequential nesting~\eqref{eq:seq_nesting} can be implemented by defining $\mm_\ell\defeq 1$ and $\Mm_{i\ell}\defeq \ones\{i\le \ell\}$.
\begin{lstlisting}[]
def get_sequential_nesting_masks(L, set_first_mode_const: bool = False):
    if set_first_mode_const:
        L += 1
    vector_mask = torch.ones(L)
    matrix_mask = torch.triu(torch.ones(L, L))
    return vector_mask, matrix_mask
\end{lstlisting}

In the LoRA objective~\eqref{eq:lora_obj}, the second term (with nesting), which we call the \emph{metric loss},
\begin{align}
\label{eq:metric_loss}
\sum_{\ell=1}^L \sum_{\ell'=1}^L \Mm_{\ell\ell'} \langle f_\ell|f_{\ell'}\rangle \langle g_\ell|g_{\ell'}\rangle
=\sum_{\ell=1}^L \sum_{\ell'=1}^L (\Mm\odot  \Lambda_{\fv} \odot \Lambda_{\gv})_{\ell\ell'}
\end{align}
is independent of the operator,
where we define $(\Lambda_{\fv})_{\ell\ell'}\defeq \langle f_\ell|f_{\ell'}\rangle$ for $\ell,\ell'\in[L]$ and $\odot$ denotes the elementwise matrix product.
Given samples $x_1,\ldots,x_B$, we can estimate each entry of the matrix $\Lambda_\fv\in\Real^{L\times L}$ by
\begin{align*}
(\hat{\Lambda}_{\fv})_{\ell\ell'}\defeq \frac{1}{B}\sum_{b=1}^B f_\ell(x) f_{\ell'}(x),
\end{align*}
which can be computed with PyTorch as:
\begin{lstlisting}[]
def compute_lambda(f):
    return torch.einsum('bl,bm->lm', f, f) / f.shape[0]  # (L, L)
\end{lstlisting}
Then, the metric loss can be computed as follows:
\begin{lstlisting}[]
def compute_loss_metric(f, g, matrix_mask):
    lam_f = compute_lambda(f)
    lam_g = compute_lambda(g)
    # compute loss_metric = E_{p(x)p(y)}[(f^T(x) g(y))^2]
    # f: (B1, L)
    # g: (B2, L)
    # lam_f, lam_g: (L, L)
    return (matrix_mask * lam_f * lam_g).sum(), lam_f, lam_g  # O(L ** 2)
\end{lstlisting}
Note that this metric loss needs not be computed when computing gradients.

\subsection{NestedLoRA Gradient Computation for Analytical Operators}
In this section, we explain how to implement the NestedLoRA gradient updates for analytical operators.
For the sake of simplicity, we explain for the implementation for EVD; the implementation for SVD can be found in our official PyTorch implementation.

For EVD of a self-adjoint operator $\Tc$, identifying $\gv$ with $\fv$, we need to compute the gradient
\begin{align}
(\partial_{f_\ell} \Lc)(x_b)
=2\Bigl\{-\mm_\ell(\Top f_\ell)(x_b)
+\sum_{i=1}^L \matrixmask_{i\ell} f_i(x_b) \langle f_i| f_\ell\rangle  \Bigr\}
\label{eq:nesting_grad_evd}
\end{align}
for each $\ell\in[L]$; see \eqref{eq:jnt_nesting_grad} for the general expression.
We can compute the gradient in an unbiased manner by plugging in the unbiased estimate of $\Lambda_{\fv}$ based on $\{x_1',\ldots,x_{B'}'\}$. We remark that the minibatch samples for estimating $\Lambda_{\fv}$ needs to be independent to $\{x_1,\ldots,x_B\}$ so that the overall gradient estimate for $\langle \partial_\th f_\ell|\partial_{f_\ell}\Lc\rangle$ becomes unbiased.
This can be efficiently implemented in a vectorized manner by writing a custom backward function with the automatic differentiation package of PyTorch as follows.
In what follows, we assume that $\{\fv(x_b)\}_{b=1}^B$ and $\{(\Tc \fv)(x_b)\}_{b=1}^B$ are already computed for a given $\fv$ and provided as \texttt{f} and \texttt{Tf}, resepectively.
Further, \texttt{f1} and \texttt{f2} must be independent to each other.

\begin{lstlisting}
class NestedLoRALossFunctionEVD(torch.autograd.Function):
    @staticmethod
    @torch.cuda.amp.custom_fwd
    def forward(
            ctx: torch.autograd.function.FunctionCtx,
            f,
            Tf,
            f1,
            f2,
            vector_mask,
            matrix_mask,
    ):
        """
        the reduction assumed here is `mean` (i.e., we take average over batch)
            f: (B, L) or (B, L, O)
            Tf: (B, L) or (B, L, O)
            f1: (B1, L) or (B1, L, O)
            f2: (B2, L) or (B2, L, O)
        warning: f1 and f2 must be independent
        """
        ctx.vector_mask = vector_mask = vector_mask.to(f.device)
        ctx.matrix_mask = matrix_mask = matrix_mask.to(f.device)
        loss_metric, lam_f1, lam_f2 = compute_loss_metric(f1, f2, matrix_mask)
        ctx.save_for_backward(f, Tf, f1, f2, lam_f1, lam_f2)
        # compute loss_operator = -2 * E_{p(x)}[\sum_{l=1}^L f_l^T(x) (Tf_l)(x)]
        loss_operator = - 2 * torch.einsum('l,bl,bl->b', vector_mask, f, Tf).mean()  # O(B1 * L * O)
        loss = loss_operator + loss_metric
        return loss

    @staticmethod
    @torch.cuda.amp.custom_bwd
    def backward(
            ctx: torch.autograd.function.FunctionCtx,
            grad_output: torch.Tensor
    ) -> Tuple[torch.Tensor, ...]:
        """
        Args:
            ctx: The context object to retrieve saved tensors
            grad_output: The gradient of the loss with respect to the output
        """
        f, Tf, f1, f2, lam_f1, lam_f2 = ctx.saved_tensors
        operator_f = - (4 / f.shape[0]) * torch.einsum('l,bl->bl', ctx.vector_mask, Tf)
        metric_f1 = (2 / f1.shape[0]) * torch.einsum('lm,lm,bl->bm', ctx.matrix_mask, lam_f2, f1)
        metric_f2 = (2 / f2.shape[0]) * torch.einsum('lm,lm,bl->bm', ctx.matrix_mask, lam_f1, f2)
        return grad_output * operator_f, None, grad_output * metric_f1, grad_output * metric_f2, \
            None, None, None
\end{lstlisting}

In practice, when given a minibatch $\{x_1,\ldots,x_{B}\}$, we can use the entire batch to compute \texttt{f} and \texttt{Tf}, and split \texttt{f} into two equal parts and plug in them to \texttt{f1} and \texttt{f2} to ensure the independence.
In what follows, the operator is given as an abstract function \texttt{operator}, whose interface is explained in the next section (\S\ref{app:sec:importance_sampling}).
\begin{lstlisting}[]
def compute_loss_operator(
        model,
        operator,
        x,
        importance=None,
):
    Tf, f = operator(model, x, importance=importance)
    f1, f2 = torch.chunk(f, 2)
    loss = NestedLoRALossFunctionEVD.apply(
        f, Tf, f1, f2,
        vector_mask,
        matrix_mask,
    )
    return loss, dict(f=f, Tf=Tf, eigvals=None)
\end{lstlisting}
After this function returns \texttt{loss}, calling \texttt{loss.backward()} will backpropagate the gradients based on the custom backward function, and populate the gradient for each model parameter.

\subsection{Importance Sampling}
\label{app:sec:importance_sampling}
\newcommand{\ptr}{p_{\mathsf{tr}}}
\newcommand{\pte}{p_{\mathsf{te}}}
\newcommand{\wtr}{w_{\mathsf{tr}}}
\newcommand{\wte}{w_{\mathsf{te}}}
Unlike machine learning applications where the sampling distribution is given by data, 
the underlying measure $\mu(x)$ is the Lebesgue measure over a given domain when solving PDEs.
Note that, when the domain is not bounded, we cannot sample from the measure, and thus it is necessary to introduce a sampling distribution to apply our framework.
Given a distribution $\ptr(x)$ that is supported over the support of $\mu(x)$,
the inner product between $|f\rangle$ and $|\Top f\rangle$ can be written as
\begin{align*}
\langle f | \Top f\rangle 
&= \int f(x) \Top f(x) \mu(x) \diff x \\
&= \int f(x) \sqrt{\frac{\mu(x)}{\ptr(x)}} \Top f(x) \sqrt{\frac{\mu(x)}{\ptr(x)}} \ptr(x)\diff x\\
&= \int \frac{f(x)}{\sqrt{\wtr(x)}} 
\frac{\Top f(x)}{\sqrt{\wtr(x)}} 
\ptr(x)\diff x.
\end{align*}
Here, we define the (training) importance function $\wtr(x)\defeq \frac{\ptr(x)}{\mu(x)}$.
For the case of the Lebesgue measure, one can simply regard $\mu(x)$ as 1.
It is sometimes crucial to choose a good training sampling distribution, especially for high-dimensional problems.

Suppose now that we directly parameterize $\frac{f(x)}{\sqrt{\wtr(x)}}$ by a neural network $\ft(x)$.
Then, the inner product can be computed as
\[
\langle f | \Top f\rangle 
=\int \ft(x) \frac{\Top f(x)}{\sqrt{\wtr(x)}} \ptr(x) \diff x.
\]
Here, $\Top f(x)$ can be computed by applying the operator $\Top$ to the function $x\mapsto \sqrt{\wtr(x)} \ft(x)$.

During the test phase, we may use another test distribution $\pte(x)$ to evaluate the inner product. 
Given another valid sampling distribution $\pte(x)$,
\[
\langle f | \Top f\rangle 
= \int \ft(x) \frac{\Top f(x)}{\sqrt{\wtr(x)}} \frac{\ptr(x)}{\pte(x)} \pte(x) \diff x
= \int \ft(x) \frac{\Top f(x)}{\sqrt{\wtr(x)}} \frac{\wtr(x)}{\wte(x)} \pte(x) \diff x,
\]
where we define the (test) importance function $\wte(x)\defeq \frac{\pte(x)}{\mu(x)}$.
Again, for high-dimensional problems, it is crucial to choose a good sampling distribution for reliable evaluation.

In our implementation, the operator is defined with the following interface: 
given a neural network model $\ft(x)$ and a training importance function $\pte(x)$, \texttt{operator(model, x, importance)}
outputs $\frac{\Top f(x)}{\sqrt{\wtr(x)}}$ and $\ft(x)$, so that they can be taken to the inner product directly under $\ptr(x)$.
Given this, the original function value $f(x)$ can be recovered as $f(x)=\sqrt{\wtr(x)} \ft(x)$.

For example, we implement the negative Hamiltonian as follows:
\begin{lstlisting}[]
class NegativeHamiltonian:
    def __init__(
            self,
            local_potential_ftn,
            scale_kinetic=1.,
            laplacian_eps=1e-5,
            n_particles=1
    ):
        self.laplacian_eps = laplacian_eps
        self.laplacian = VectorizedLaplacian(eps=laplacian_eps)
        self.local_potential_ftn = local_potential_ftn
        self.scale_kinetic = scale_kinetic
        self.n_particles = n_particles

    def __call__(self, f, xs, importance=None, threshold=1e5):
        # threshold is to detect an anomaly in the hamiltonian
        lap, grad, fs = self.laplacian(f, xs, importance)
        kinetic = - self.scale_kinetic * lap
        potential = self.local_potential_ftn(xs.reshape((xs.shape[0], self.n_particles, -1))).view(-1, 1) * fs
        hamiltonian = kinetic + potential
        return - hamiltonian, fs
\end{lstlisting}
Here, \texttt{VectorizedLaplacian} refers to a function for vectorized Laplacian computation, whose implementation can be found in our code.

\subsection{NestedLoRA Gradient Computation for CDK}
\label{app:sec:cdk_implementation}
Recall that the CDK is defined as $k(x,y)=\kbar(x,y)-1$ with $\kbar(x,y)\defeq\frac{p(x,y)}{p(x)p(y)}$.
We note that it is known that $\kbar(x,y)$ has the constant functions as the singular functions with singular value 1, \ie $(1, x\mapsto 1, y\mapsto 1)$ is the first singular triplet of $\kbar(x,y)$; see, \eg \citep{Huang--Makur--Wornell--Zheng2019}.
Hence, the term ``$-1$'' in the definition of CDK is to remove the first \emph{trivial} mode of $\kbar(x,y)$.

In our implementation, we handle the decomposition of CDK by considering $\kbar(x,y)$ with explicitly augmenting the constant functions as the fictitious first singular functions, so that we effectively learn from the second singular functions of $\kbar(x,y)$ and on.
For $\kbar(x,y)=\frac{p(x,y)}{p(x)p(y)}$, the ``operator term'' $\langle g_\ell | \overline{\Kop} f_\ell \rangle$ can be computed as, again by change of measure,
\begin{align*}
\langle g_\ell | \overline{\Kop} f_\ell \rangle
= \E_{p(x,y)}[f_\ell(X)g_\ell(Y)].
\end{align*}
Hence, compared to \eqref{eq:nesting_grad_evd},
the gradient becomes, for each $\ell=1,\ldots,L$,
\begin{align}
(\partial_{f_\ell} \Lc)(x_b)
=2\Bigl\{-\mm_\ell g_\ell(y_b)
+\sum_{i=0}^L \matrixmask_{i\ell} f_i(x_b) \langle g_i| g_\ell\rangle  \Bigr\}
\label{eq:nesting_grad_cdk}
\end{align}
where we set $f_0(x)\equiv 1$ and $g_0(y)\equiv 1$;
$(\partial_{g_\ell} \Lc)(x_b)$ is similarly computed.
The following snippet implements this gradient using a custom gradient as before.
Note that the constant 1's are explicitly appended as the first mode in line 16-18.

\begin{lstlisting}[]
class NestedLoRALossFunctionForCDK(torch.autograd.Function):
    @staticmethod
    @torch.cuda.amp.custom_fwd
    def forward(
            ctx: torch.autograd.function.FunctionCtx,
            f,
            g,
            vector_mask,
            matrix_mask,
    ):
        """
        the reduction assumed here is `mean` (i.e., we take average over batch)
            f: (B, L)
            g: (B, L)
        """
        pad = nn.ConstantPad1d((1, 0), 1)
        f = pad(f)
        g = pad(g)
        ctx.vector_mask = vector_mask = vector_mask.to(f.device)
        ctx.matrix_mask = matrix_mask = matrix_mask.to(f.device)
        loss_metric, lam_f, lam_g = compute_loss_metric(f, g, matrix_mask)
        ctx.save_for_backward(f, g, lam_f, lam_g)
        # compute loss_operator = -2 * E_{p(x,y)}[f^T(x) g(y)]
        loss_operator = - 2 * torch.einsum('l,bl,bl->b', vector_mask, f, g).mean()  # O(B1 * L)
        loss = loss_operator + loss_metric
        gram_matrix = f @ g.T  # (B, B); each entry is (f^T(x_i) g(y_j))
        rs_joint = gram_matrix.diag()
        rs_indep = off_diagonal(gram_matrix)
        return loss, loss_operator, loss_metric, rs_joint, rs_indep

    @staticmethod
    @torch.cuda.amp.custom_bwd
    def backward(
            ctx: torch.autograd.function.FunctionCtx,
            grad_output: torch.Tensor,
            *args
    ) -> Tuple[torch.Tensor, ...]:
        """
        Args:
            ctx: The context object to retrieve saved tensors
            grad_output: The gradient of the loss with respect to the output
        """
        f, g, lam_f, lam_g = ctx.saved_tensors
        # for grad(f)
        operator_f = - (2 / f.shape[0]) * torch.einsum('l,bl->bl', ctx.vector_mask, g)
        metric_f = (2 / f.shape[0]) * torch.einsum('il,il,bi->bl', ctx.matrix_mask, lam_g, f)
        grad_f = operator_f + metric_f
        # for grad(g)
        operator_g = - (2 / g.shape[0]) * torch.einsum('l,bl->bl', ctx.vector_mask, f)
        metric_g = (2 / g.shape[0]) * torch.einsum('il,il,bi->bl', ctx.matrix_mask, lam_f, g)
        grad_g = operator_g + metric_g
        grad_f = grad_f[:, 1:]
        grad_g = grad_g[:, 1:]
        return grad_output * grad_f, grad_output * grad_g, None, None, None, None
\end{lstlisting}

In practice, given minibatch samples $\{(x_b, y_b)\}_{b=1}^B$ drawn from a joint distribution $p(x,y)$, we can compute $\{(\fv(x_b), \gv(y_b))\}_{b=1}^B$ and plug in to the function above as follows. 
\begin{lstlisting}[]
def compute_loss(
        f,
        g,
) -> torch.Tensor:
    return NestedLoRALossFunctionForCDK.apply(
        f,
        g,
        vector_mask,
        matrix_mask,
    )
\end{lstlisting}
Here, \texttt{vector\_mask} and \texttt{matrix\_mask} should be computed using the mask computing functions in \S\ref{app:sec:helper} with \texttt{set\_first\_mode\_const=True}.

\subsection{Spectrum Estimation via Norm Estimation}\label{app:sec:norms}
As alluded to in the main text, we can estimate the singular values $(\sigma_1,\ldots,\sigma_L)$ from the learned functions and training data, \ie
\begin{align*}
\sigma_\ell=\sqrt{\E_{p(x)}[(f_\ell^\star(X))^2] \E_{p(y)}[(g_\ell^\star(Y))^2]}  \quad\text{ for }\ell=1,\ldots, L.
\end{align*}
Here, by replacing the expectation with the empirical expectation and the optimal $f_\ell^\star, g_\ell^\star$ with the learned ones $\hat{f}_\ell,\hat{g}_\ell$, 
we obtain the singular value estimator:
\[
\hat{\sigma}_\ell
\defeq \sqrt{\E_{\ph(x)}[(\hat{f}_\ell(X))^2] \E_{\ph(y)}[(\hat{g}_\ell(Y))^2]}  \quad\text{ for }\ell=1,\ldots, L.
\]
\begin{lstlisting}[]
def singular_values(f_t, g_t):  # f_t: (M, L); g_t: (N, L)
    return ((f_t ** 2).mean(dim=0) * (g_t ** 2).mean(dim=0)).sqrt()  # (L, )
\end{lstlisting}
For symmetric, PD kernels and operators, the eigenvalue estimator becomes:
\[
\hat{\lambda}_\ell
\defeq \E_{\ph(x)}[(\hat{f}_\ell(X))^2] \quad\text{ for }\ell=1,\ldots, L.
\]

\subsection{Sequential Nesting for Shared parameterization}
\label{app:sec:seq_shared}
As alluded to earlier in footnote~\ref{footnote:seq_nesting}, we can still apply sequential nesting even when the functions $\{(f_\ell,g_\ell)\}_{\ell=1}^L$ are parameterized by a shared model with a collective parameter $\th$. 
The idea is to consider a \emph{masked} gradient $\tilde{\partial}_\th (\loraobj)_{\ell}$,
which is a masked version of the original gradient ${\partial}_\th (\loraobj)_{\ell}$
computed with the assumption that $|\partial_{f_{\ell'}}(\loraobj)_{\ell}\rangle=0$ and $|\partial_{g_{\ell'}}(\loraobj)_{\ell}\rangle=0$ for every $1\le \ell'<\ell$, for each $\ell$.
The resulting masked gradient can be explicitly written as
\begin{align*}
\tilde{\partial}_\th (\loraobj)_{\ell}
&=\sum_{\ell=1}^L \{\langle\partial_\th f_\ell | \partial_{f_\ell} (\loraobj)_{\ell}\rangle 
+ \langle \partial_\th g_\ell | \partial_{g_\ell} (\loraobj)_{\ell}\rangle\}.
\end{align*}

\section{Experiment Details}
\label{app:sec:exp}
In this section, we provide all the details for our experiments.
All experiments were run on a single GPU (NVIDIA GeForce RTX 3090).
\ifforreview
Codes and scripts to replicate the experiments can be found in Supplementary Material.
\else
Codes and scripts to replicate the experiments have been open-sourced online.\footnote{\url{https://github.com/jongharyu/neural-svd}}
\fi 

\subsection{Solving Time-Independent Schr\"odinger Equations}

\subsubsection{2D Hydrogen Atom}
\label{app:sec:hydrogen}

\paragraph{Analytical Solution.}
For the 2D-confined hydrogen-like atom, the Hamiltonian is given as $\Hc=\Top+\Vop=-\frac{\hslash^2}{2m}\nabla^2 - \frac{Ze^2}{\|\xb\|_2}$, where $Z$ is the charge of the nucleus.
\citet{Yang--Guo--Chan--Wong--Ching1991} provides a closed-form expression of the eigenfunctions for this special case.
Here, we present the formula with slight modifications for visualization purposes.

By normalizing constants (\ie $Ze^2\gets 1, \frac{2m}{\hslash^2}\gets 1$), we can simplify it to the eigenvalue problem $(\nabla^2 + \frac{1}{\|\xv\|_2})\psi(\xv) = \lambda \psi(\xv)$ for $\xv\in\Real^2$.
Each eigenstate is parameterized by a pair of integers $(n,l)$ for $n\ge 0$ and $-n\le l \le n$, where the (negative) eigenenergy is $\lambda_{n,l}\defeq (2n+1)^{-2}$. Note that for each $n\ge 0$, there exist $2n+1$ degenerate states that have the same energy.
Further, the operator is PD and compact, since $\lambda_{n,l}>0$ and the Hilbert--Schmidt norm of the negative Hamiltonian is finite, \ie $\sum_{n\ge 0}\sum_{l=-n}^n \lambda_{n,l}^2=\sum_{n\ge 0}\frac{1}{(2n+1)^3}<\infty$.

The eigenfunctions can be explicitly expressed in the spherical coordinate system
\begin{align}
\label{eq:hydrogen_eigenfunction_analytical}
\psi_{n,l}(\xv)=\psi_{n,l}(r,\th)\defeq \psi_{n,l}(r)\psi_l(\th), 
\end{align}
where the radial part is
\[
\psi_{n,l}(r) = \frac{\b_n}{(2|l|)!} \Bigl(\frac{(n+|l|)!}{(2n+1)(n-|l|)!}\Bigr)^{\half} 
(\b_n r)^{|l|} e^{-\frac{\b_n r}{2}} {}_1F_1(-n+|l|, 2|l|+1, \b_n r)
\]
with $\b_n=(n+\half)^{-1}$,
and the angular part is
\[
\psi_l(\th)
=\begin{cases}
\frac{1}{\sqrt{\pi}} \cos(l \th) & \text{if }l >0\\
\frac{1}{\sqrt{2\pi}} & \text{if }l=0\\
\frac{1}{\sqrt{\pi}} \sin(l \th) & \text{if }l <0.
\end{cases}
\]
Here, ${}_1F_1(a;b;x)$ denotes the confluent hypergeometric function.

\paragraph{Implementation Details.}
We adopted the training setup of \citep{Pfau--Petersen--Agarwal--Barrett--Stachenfeld2019} with some variations.
\begin{itemize}
\item \textbf{Differential operator.} To reduce the overall complexity of the optimization, we approximated the Laplacian by the standard finite difference approximation: for $\eps>0$ sufficiently small,
\[
\nabla^2 f(\xb)
\approx \frac{1}{\eps^2} \sum_{i=1}^D 
(f(\xb+\eps\eb_i)+f(\xb-\eps\eb_i)-2f(\xb)).
\]
In this paper, we used $\eps=0.01$ throughout.

\item \textbf{Sampling distribution.}
We chose a sampling distribution $\ptr(x)$ as a Gaussian distribution $\Nc(0,16^2\Imatrix_2)$; see \S\ref{app:sec:importance_sampling}.

\item \textbf{Architecture.}
We used 16 disjoint three-layer MLPs with 128 hidden units to learn the first $L=16$ eigenfunctions, except $L=9$ for SpIN that did not fit to a single GPU due to the large memory requirement; see \S\ref{app:sec:spin}.
For the nonlinear activation function, we used the softplus activation $f(x)=\log(1+e^x)$
following the implementation of \citep{Pfau--Petersen--Agarwal--Barrett--Stachenfeld2019}.
We also found that multi-scale Fourier features~\citep{Wu--Wang--Perdikaris2023} are effective, especially the non-differentiable points at the origin for some eigenstates of the 2D hydrogen atom.
The multi-scale Fourier feature is defined as follows.
Let $D=2$ denote the input dimension. 
For $K\in\Natural$ and $\kappa>0$, we initialize and fix a Gaussian random matrix $\Bm\in\Real^{K\times D}$, each of which entry is drawn from $\Nc(0,2\pi \kappa)$.
An input is projected by $\Bm$ to the $K$ dimensional space, and mapped into Fourier features $(\cos(\Bm \xv), \sin(\Bm\xv)) \in \Real^{2K}$, following \citep{Tancik--Srinivasan--Mildenhall--Fridovich-Keil--Raghavan--Singhal--Ramamoorthi--Barron--Ng2020}.
In our experiments, we also appended the raw input $\xv$ to the Fourier feature, so that the feature dimension becomes $2K+D$.
We used $K=1024$ for NeuralEF and NeuralSVD, and $K=512$ for SpIN. Lastly, $\kappa=0.1$ was used.

\item \textbf{Optimization.}
We trained the networks for $5\times 10^5$ iterations with batch size 128 and 512.
For all methods, we used the RMSProp optimizer~\citep{Hinton--Srivastava--Swersky2012} with learning rate $10^{-4}$ and the cosine learning rate schedule~\citep{Loshchilov--Hutter2016}. 

\item \textbf{Evaluation.}
During the evaluation, we applied the exponential moving average (over the model parameters) with a decay rate of $0.995$ for smoother results.
We also used a uniform distribution over $[-100,100]^2$ as a sampling distribution, assuming that the eigenfunctions vanish outside the box, which is approximately true.
\S\ref{app:sec:importance_sampling} for the detailed procedure for the importance sampling during evaluation.

\end{itemize}

\subsubsection{2D Harmonic Oscillator}
\label{app:sec:harmonic_oscillator}

\paragraph{Analytical Solution.}
Define 

\[
\psi _{n}(x)={\frac {1}{\sqrt {2^{n}\,n!}}}\left({\frac {b}{\pi }}\right)^{1/4}e^{-{\frac {b x^{2}}{2}}}H_{n}({\sqrt {b}}x),\qquad n=0,1,2,\ldots.
\]
Here, $H_n(x)$ denotes 
the physicists' Hermite polynomials
\[
H_{n}(z)=(-1)^{n}~e^{z^{2}}{\frac {d^{n}}{dz^{n}}}(e^{-z^{2}}),
\]
and we simplify the constant $b=\frac{m\omega}{\hslash}$ to 1.
Then $\{\psi_n(x)\}_{n\ge 0}$ characterizes the eigenbasis of 1D harmonic oscillator.

Each eigenstate of the 2D harmonic oscillator is characterized by a pair of nonnegative integers $(n_x,n_y)$, and $\lambda_{n_x,n_y}=2(n_x+n_y+1)$, where a canonical representation of the eigenfunction is
\[
\psi_{n_x,n_y}(x,y)\defeq  \psi_{n_x}(x)\psi_{n_y}(y).
\]
Note that for each $n\ge 0$, there exist $n+1$ eigenstates that share the same eigenvalue $2(n+1)$.

\paragraph{Implementation Details.}
We used an almost identical setup to the 2D hydrogen atom experiment except the followings.
\begin{itemize}
\item \textbf{Operator shifting.} We chose to decompose $\Top+c\Iop$ for $c=16$, so that the first 28 eigenstates have positive eigenvalues.
\item \textbf{Sampling distribution.}
We chose a sampling distribution $\ptr(x)$ as a Gaussian distribution $\Nc(0,4^2\Imatrix_2)$.
\item \textbf{Architecture.} We used the same disjoint parameterization as before, but with $K=256$ and $\kappa=1$.

\item \textbf{Optimization.}
We trained the networks for $10^5$ iterations with batch size 128 and 512.

\item \textbf{Evaluation.}
We also used a uniform distribution over $[-5,5]^2$ as a sampling distribution.

\end{itemize}

\begin{figure*}[t!]
    \centering
    \subfloat[2D hydrogen atom (16 eigenstates)]{%
    \includegraphics[width=\textwidth,valign=t]{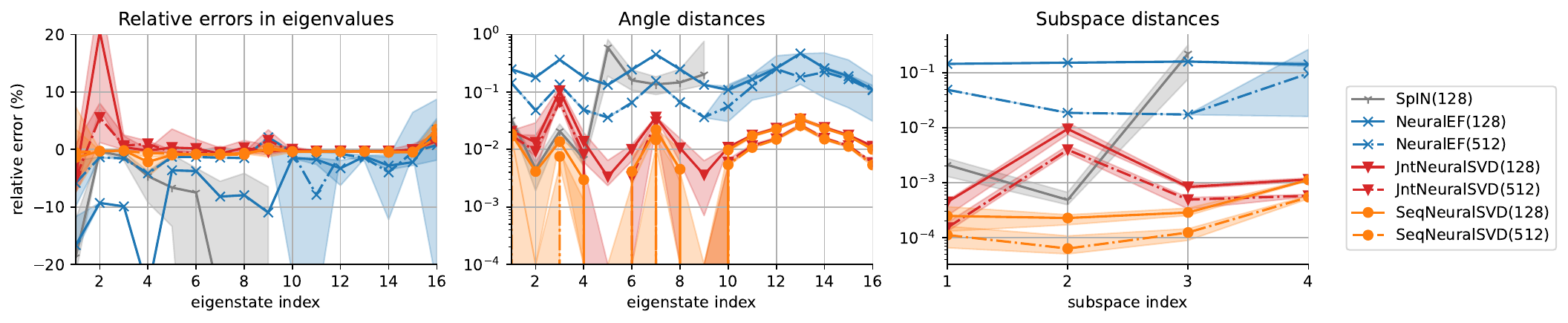}
    }\\
    \subfloat[2D harmonic oscillator (only 28 positive eigenstates out of 55 states are to be learned)]{%
    \includegraphics[width=\textwidth,valign=t]{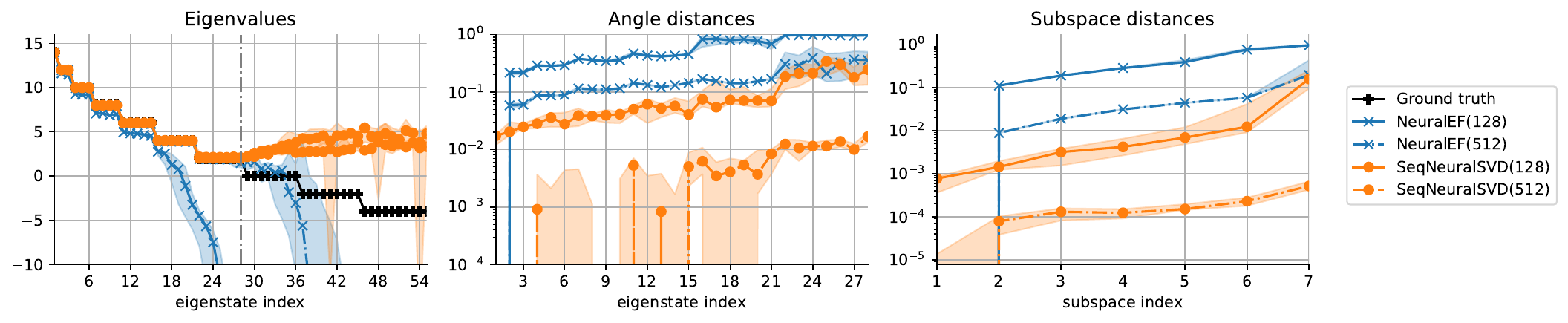}
    }
    \caption{Quantitative evaluations of the learned eigenfunctions. The shaded region indicates 20\% and 80\% quantiles with respect to 10 random seeds. In the first panel of (b), the left of the black vertical line indicates the positive eigenvalues.}
    \label{fig:pde_summary}
\end{figure*}

\begin{remark}[On shifting]
We note that the shifting technique can be applied to similar non-compact operators in general, but the shifting parameter $c$ needs to be tuned by trial and error as the underlying spectrum is unknown in practice. 
But how should one choose the parameter?
Since any $c$ beyond a certain threshold makes the first $L$ modes with strictly positive eigenvalues for a fixed number of modes $L$, one may ask whether using larger $c$ is always a safe choice.
On one hand, if $c$ is too large, the shifted operator $\Top+c\Iop$ is dominated by the identity operator, which admits \emph{any} set of orthonormal functions as its orthonormal eigenbasis.
On the other hand, $c$ needs to be sufficiently large to ensure the $L$-th mode to be recovered.
Therefore, in practice, $c$ needs to be tuned as a hyperparameter considering such a trade-off.
\end{remark}

\subsubsection{Definitions of Reported Measures}
\label{app:sec:def_measures}
\newcommand{\Sm}{\mathsf{S}}

We first provide the definitions of the reported measures in Fig.~\ref{fig:pde_summary_bar_full} and Fig.~\ref{fig:pde_summary}.
\begin{itemize}
\item \textbf{Relative errors in eigenvalues}: Given a learned eigenfunction $\psit_\ell(x)$, we estimate the learned eigenvalue by the Rayleigh quotient
\[
\tilde{\lambda}_\ell\defeq \frac{\langle \psit_\ell|\Top\psit_\ell\rangle}{\langle \psit_\ell|\psit_\ell\rangle},
\]
where each inner product is computed by importance sampling with finite samples from a given sampling distribution; see \S\ref{app:sec:importance_sampling}.
For each $\ell$, we then report the absolute relative error
\[
\frac{(\tilde{\lambda}_\ell - \lambda_\ell)}{\lambda_\ell} \times 100~(\%),
\]
for $\lambda_\ell >0$.
\item \textbf{Angle distances}:
When there is degeneracy, \ie several eigenstates share same eigenvalue, we need to \emph{align} the learned functions within each subspace before we evaluate the performance eigenstate-wise.
For such an alignment, we use the orthogonal Procrustes (OP) procedure defined as follows.
Suppose that $\Am\in\Real^{N\times K}$ and $\Bm\in\Real^{N\times K}$ are given. We wish to find the find the orthogonal transformation $\Am\mathsf{\Omega}$ that best approximates the reference $\Bm$.
The OP procedure defines the optimal $\mathsf{\Omega}^\star$ by the optimization problem
\[
{\displaystyle {\begin{aligned}{\underset {\mathsf{\Omega}\in\Real^{K\times K} }{\text{minimize}}}\quad &\| \Am\mathsf{\Omega}-\Bm\|_{F}\\{\text{subject to}}\quad &\mathsf{\Omega}^{\intercal}\mathsf{\Omega} =I.
\end{aligned}}}
\]
The solution is characterized by the SVD of $\Am^\intercal\Bm\in\Real^{K\times K}$. If $\Am^\intercal\Bm=\Um\Sm\Vm^\intercal$ is the SVD, then $\mathsf{\Omega}^\star=\Um\Vm^\intercal$.
In our case, $\Am$ is the vertical stack of the learned eigenfunctions and $\Bm$ is that of the ground truth eigenfunctions that correspond to a degenerate eigensubspace.
Here, $K$ is the number of degeneracy and 
$N$ is the number of points used for the alignment.

Given the aligned learned function $\bar{\psi}_\ell(x)$, we report the normalized angle distance
\[
\angle (|\bar{\psi}_\ell\rangle,|{\psi}_\ell\rangle)
\defeq \frac{2}{\pi} \arccos |\langle \psi_\ell|\bar{\psi}_\ell\rangle|\in[0,1].
\]
Here, we assume that both $|\bar{\psi}_\ell\rangle$ and $|{\psi}_\ell\rangle$ are normalized.

\item \textbf{Subspace distances}:
Another standard quantitative measure is the \emph{subspace distance} defined as follows.
Given $\Am\in\Real^{N\times K}$ and $\Bm\in\Real^{N\times K}$, the normalized subspace distance between the column subspaces of the two matrices is defined as 
\[
d(\Am,\Bm)\defeq 1 - \frac{1}{K}\tr(\Pm\Qm),
\]
where $\Pm=\Am(\Am^\intercal\Am)^{-1}\Am^\intercal\in\Real^{N\times N}$ and $\Qm=\Bm(\Bm^\intercal\Bm)^{-1}\Bm^\intercal\in\Real^{N\times N}$ are the projection matrices onto the column subspaces of $\Am$ and $\Bm$, respectively.
We note that $\Am$ and $\Bm$ correspond to the learned and ground truth eigenfunctions that correspond to a given subspace as above.
\end{itemize}

The reported measures in Fig.~\ref{fig:pde_summary_bar_full} are \emph{averaged} versions of the quantities defined above, except the orthogonality.

\begin{itemize}
\item \textbf{Relative errors in eigenvalues}: Report the average of the absolute relative errors over the eigenstates.
\item \textbf{Angle distance}:
Report the average of the angle distances over the eigenstates.
\item \textbf{Subspace distance}:
Report the average of the subspace distances over the degenerate subspaces.
\item \textbf{Orthogonality}:
To measure the orthogonality of the learned eigenfunctions, we report
\[
\frac{1}{N^2} \sum_{\ell=1}^L\sum_{\ell'=1}^L (\langle \tilde{\psi}_\ell|\tilde{\psi}_{\ell'}\rangle - \d_{\ell\ell'})^2.
\]
\end{itemize}

\subsection{Cross-Domain Retrieval with Canonical Dependence Kernel}
\label{app:sec:cdk_exp_detail}
We used the Sketchy Extended dataset~\citep{Sketchy2016,SketchyExtended2017} to train and evaluate our framework.
There are total 75,479 sketches ($x$) and 73,002 photos ($y$) from 125 different classes. 

We followed the standard training setup in the literature~\citep{Hwang--Kim--Hong--Kim2020IIAE}.

\begin{itemize}
\item \textbf{Sampling distribution.}
As described in the main text, we define a sampling distribution as follows.
First, note that we are given (empirical) class-conditional distributions $\{p(x|c)\}_{c=1}^K$ and $\{p(y|c)\}_{c=1}^K$ for each class $c\in[K]$.
Given the (empirical) class distribution $p(c)$, we define the joint distribution
\[
p(x,y)\defeq \E_{p(c)}[p(x|C)p(y|C)].
\]
That is, in practice, to draw a sample from $p(x,y)$, we can draw $C\sim p(c)$, and draw $(X,Y)\sim p(x|C)p(y|C)$.

\item \textbf{Pretrained fetures.}
We used a pretrained VGG16 network~\citep{Simonyan--Zisserman2014VGG} to extract features of the sketches and images. 
The pretrained VGG network and train-test splits for evaluation are from the codebase\footnote{\url{https://github.com/AnjanDutta/sem-pcyc}} of \cite{Dutta--Akata2019SEM-PCYC}.
Hence, each sketch and photo is represented by a 512-dim. feature from the VGG network.

\item \textbf{Architecture.} 
Treating the 512-dim. pretrained features as input, we used a single one-layer MLP of 8192 hidden units whose output dimension is 512. 
At the end of the network, we regularized the output so that the norm of the output has $\ell_2$-norm less than equal to $\mu=16$, \ie $\|\fv(x)\|_2\le \mu$ for every $x$.

\item \textbf{Optimization.}
We trained the network for 10 epochs with batch size of 4096.
We used the SGD optimizer with learning rate $5\times 10^{-3}$ and momentum $0.9$, together with the cosine learning rate schedule~\citep{Loshchilov--Hutter2016}.

\end{itemize}

\paragraph{Evaluation Metrics.}
Precision@$k$ and mean average precision are widely used metrics for evaluating a retrieval system such as search engines~\citep{Salton--McGill1986}. 
When $k$ items are retrieved for a query, Precision@$k$ (P@$k$) is defined as the number of relevant items (i.e., the number of photos of the same class as a query sketch in our scenario) divided by $k$. Average precision (AP) is also defined for a certain query point. When there are $n$ photos in the candidate pool, the AP is defined as $\sum_{k=1}^n P(k) \times (R(k) - R(k-1))$, where $P(k)$ denotes P@$k$ and $R(k)$ denotes Recall@k, which is defined as the number of relevant items in the $k$ retrievals divided by the total number of relevant items (\ie, number of ``all'' photos of the same class as a query sketch). Then, finally, the mean average precision (mAP) is defined as the average of all average precision over all possible queries.

\end{document}